\definecolor{darkblue}{rgb}{0,0,.75}
\newcommand{\distuser}{d_{\mathsf{user}}}
\newcommand{\distsample}{d_{\mathsf{sample}}}
\newcommand{\dham}{d_{\mathsf{Hamming}}}
\newcommand{\mechanism}{\mathsf{M}}
\newcommand{\diffp}{\varepsilon}
\newcommand{\renparam}{\alpha}
\newcommand{\drenyi}[2]{D_\renparam\left({#1} |\!| {#2}\right)}
\newcommand{\uniformdist}{\mathsf{Uni}}
\newcommand{\aprox}{\mathsf{aprox}^\loss}
\newcommand{\gradmap}{\mathsf{g}}
\newcommand{\lipconst}{\mathsf{L}}
\newcommand{\lipobj}{\lipconst_0}
\newcommand{\lipgrad}{\lipconst_1}
\newcommand{\liphess}{\lipconst_2}
\newcommand{\ball}{\mathbb{B}}
\newcommand{\opt}{^\star}
\newcommand{\normalvec}{v}
\newcommand{\ones}{\mathbf{1}}
\newcommand{\randind}{\mathsf{ri}}
\newcommand{\Zcov}{\Sigma_{\textup{z}}} % Covariance of z perturbations
\newcommand{\losscov}{\Sigma_{\loss}} % Covariance of loss gradients
\newcommand{\statrv}{X}
\newcommand{\statval}{x}
\newcommand{\statdomain}{\mc{X}}
\newcommand{\sample}{S}
\begin{document}

\begin{center}
  {\Large Element Level Differential Privacy: The Right Granularity of Privacy}
  
  \vspace{.3cm}
  
  \begin{tabular}{ccc}
    Hilal Asi$^1$\footnotemark
    & John C. Duchi$^{1,2}$ & Omid Javidbakht$^{2}$ \\
    \texttt{asi@stanford.edu} &
    \texttt{jduchi@stanford.edu} & 
    \texttt{omid\_j@apple.com} \vspace{0.5cm} \\
    \multicolumn{3}{c}{
      $^1$Stanford University ~~~ $^2$Apple
    }
  \end{tabular}
\end{center}

\footnotetext{Part of this work performed while in a summer
  internship at Apple. Partially supported by
  the Office of Naval Research award YIP N00014-19-2288.}
%
%\begin{abstract}
% Federated learning
%\end{abstract}
%

% -*- Mode: latex -*- %

\begin{abstract}
  Differential Privacy (DP) provides strong guarantees on the risk of
  compromising a user’s data in statistical learning applications, though
  these strong protections make learning challenging and may be too
  stringent for some use cases. To address this, we propose element level
  differential privacy, which extends differential privacy to provide
  protection against leaking information about any particular ``element'' a
  user has, allowing better utility and more robust results than classical
  DP. By carefully choosing these ``elements,'' it is possible to
  provide privacy protections at a desired granularity.
  %% The definition is tailorable to a desired protection level.
  %% For example, in a crowdsourcing application to identify
  %% popular words that users send, the element may be an individual word,
  %% making it challenging for an attacker to infer whether a user has ever
  %% typed a specific word.
  We provide definitions, associated privacy
  guarantees, and analysis to identify the tradeoffs with the new
  definition; we also develop several private estimation and learning
  methodologies, providing careful examples for item frequency and
  M-estimation (empirical risk minimization) with concomitant privacy and
  utility analysis. We complement our theoretical and methodological
  advances with several real-world applications, estimating histograms and
  fitting several large-scale prediction models, including deep networks.
\end{abstract}

% -*- Mode: latex -*- %

\section{Introduction}

The substantial growth in data collection across many domains has led to
commensurate attention to and work on privacy risks in both
academic~\cite{DworkMcNiSm06, DworkRo14} and industrial
settings~\cite{ErlingssonPiKo14,ApplePrivacy17,BhowmickDuFrKaRo18}.
\citeauthor*{DworkMcNiSm06}'s \emph{differential
  privacy}~\cite{DworkMcNiSm06} and its variants~\cite{DworkKeMcMiNa06,
  BunSt16, Mironov17, DongRoSu19}---where a randomized algorithm returns
similar outputs for similar input samples---is now the standard privacy
methodology, as it gives provable protection against strong adversarial
attacks on privacy. Indeed, given the output of a differentially private
analysis on a sample $\sample = \{X_1, \ldots, X_n\}$, it is challenging to
identify whether a particular individual $x$ belongs to $\sample$ even for
an attacker knowing the entire sample except for a single observation.
These strong guarantees motivate work on private data analyses, including in
statistical estimation~\cite{Smith11, DuchiJoWa18}, machine
learning~\cite{ChaudhuriMoSa11}, game theory~\cite{McSherryTa07}, and
networks and graphs~\cite{KasiviswanathanNiRaSm13, KearnsRoWuYa16}.

%% Since its definition by~\citet{DworkMcNiSm06}, differential privacy
%% has been widely adopted in many research areas such as statistical
%% estimation~\cite{?}, model fitting~\cite{?}, and clustering~\cite{?}.

Yet developing private algorithms that achieve reasonable utility is
challenging, as the strong protections differential privacy provides
necessarily degrade statistical utility. On the theoretical side, the
relative sample size necessary for private algorithms to achieve similar
utility to that of non-private algorithms grows with problem dimension and
inversely with the privacy parameter $\diffp$~\cite{BarberDu14a,
  SteinkeUl17, DuchiJoWa18, DuchiRo19}. On a practical level, this challenge
may lead privacy applications to instantiate a large privacy parameter
$\diffp$ to obtain acceptable statistical performance---for example,
\citet{AbadiChGoMcMiTaZh16} remarkably are able to fit neural
networks with differential privacy at all, though they
require a value of $\diffp = 8$
even for a weaker form of ``event level''
privacy to achieve performance approaching non-private algorithms---but
privacy guarantees for large values are unclear~\cite{DworkRo14}.

%% The strong protections that differential privacy enjoys are also the reason that its performance suffers.

We argue that standard differential privacy's strong protections are not
always necessary to provide sufficient protection for a system's users.  For
example, an individual phone user sends multiple text messages, or takes
several cell-phone photos, each a single datum.  In such cases, it may be
satisfying from a privacy perspective not to protect whether a user
participates in a dataset---versions of differential privacy protect against
discovering this participation, though whether one has a phone is
likely not very sensitive---but to protect so that no one knows any
particular \emph{thing} a user has done, e.g., whether
the user has \emph{ever} typed a given word or taken a photo of a mountain.
Concretely, consider estimating the frequency of different word use
in email messages. Differential privacy prevents an attacker from
(accurately) distinguishing a user who sends hundreds of emails daily from
one who has never typed a word in his or her lifetime, a protection that may
be too strong. More nuanced tradeoffs can arise if we wish to
prevent an attacker from knowing, for example, whether a user has
ever typed a given word.

To address these challenges, we propose \emph{element-level differential
  privacy}, which aims to provide protection for what we---at the risk of
some hubristic excess---might term reasonable attacks.  The motivation for
our definition is that in many statistical estimation and learning problems,
an individual may contribute many datapoints; in a problem of learning from
mobile devices, a typical cell-phone contains many individual photos and
hundreds of distinct text messages, for example, and it is these
data that are private.
%% Such a device user may then
%% wish a privacy guarantee that roughly guarantees that an adversary cannot
%% accurately infer any particular ``thing'' they have done (e.g., a word used
%% or whether they have taken a photo with children in it).
The key to
differential privacy and its descendant definitions is the notion of
\emph{neighboring datasets}~\cite{DworkRo14} or samples, where privacy
guarantees certify that an adversary given the output of a private mechanism
$\mechanism$ cannot reliably distinguish between its applications
$\mechanism(x)$ and $\mechanism(x')$ on neighboring samples $x$ and $x'$.
In differential privacy, two samples are neighboring if they differ in at
most a single observation.  As \citet{ChatzikokolakisAnBoPa13} note, it is
thus natural to quantify a distance between users or samples $x, x'$ to
redefine neighboring, and mechanisms then provide privacy for nearby users
under this distance~\cite{ChatzikokolakisAnBoPa13, AndresBoChPa13,
  BarberDu14a}.
\begin{figure}[ht]
  \begin{center}
      \begin{overpic}[width=0.9\columnwidth]{%,grid]{%
	  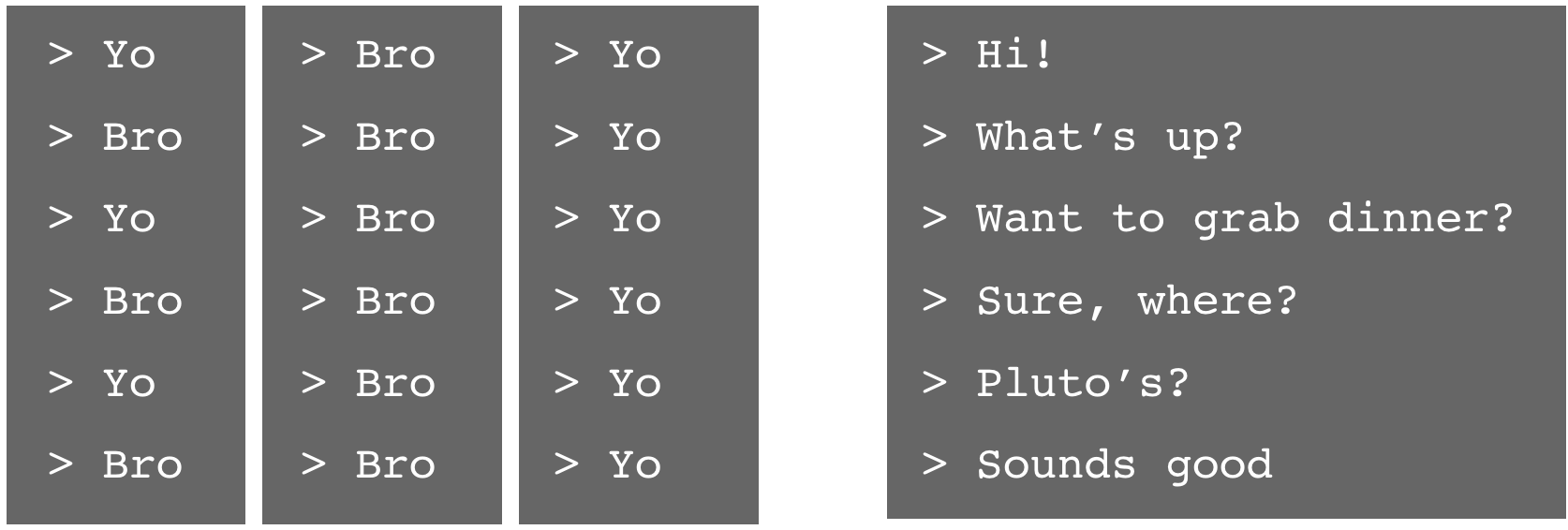}
      \end{overpic} 
      \caption{\label{fig:yo-bro} Example histories of four different users'
        text messages (each column represents a user's conversation).
        The left three
        columns reflect a conversation of the first author with his
        friends. The rightmost is a conversation between the second and
        third authors.  In the standard differential privacy definition, each
        user is distance $1$ from each other user.  In contrast,
        element-level privacy (with the histogram distance
        function described in the introduction)
        identifies the three left transcripts as neighboring---at
        distance 2---irrespective of the number of times each uses the
        word \texttt{yo} or \texttt{bro}, while the right conversation
        is distant.
        \vspace{-.5cm}
      }
  \end{center}
\end{figure}

Element-level privacy takes this idea and defines distances based on the
\emph{elements}, which we describe in the sequel, that an individual user's
data $x$ contains; here, two users are neighboring if they differ in one or
fewer elements. Consider estimating frequency of
word use in text (SMS) messages. Then a possible distance function between
two users is the \emph{number} of words that have different counts per user,
i.e., we represent each user as a vector $x \in \N^d$ of per-word counts
(how many times the user used each word in a dictionary of size $d$), and
the distance between users is the Hamming distance $d(x, x') = \sum_{j = 1}^d
\indics{x_j \neq x_j'}$ between their histograms (see
Figure~\ref{fig:yo-bro}). Element-level differential privacy then makes it
challenging for an attacker to discover any particular word a user
utters. In Section~\ref{sec:element-level} and throughout our applications,
we present more concrete examples to compare and contrast element-level and
classical differential privacy.

%To overcome this problem, many researchers proposed relaxations of the original definition of differential privacy including Renyi differential privacy~\cite{Mironov17}, concentrated differential privacy~\cite{DworkRo16,BunSt16}, and others~\cite{}.

As we note above, there is substantial work on privacy broadly, with a line
of work investigating appropriate notions of distance and what distinctions
between individuals and data should be protected.  We highlight a few works
in this direction here. \citet{AndresBoChPa13} develop distance-based
notions of privacy to release information to geo-location services, where
privacy protections may degrade with distance to a user (e.g., it is
acceptable to release that a user is in Paris, but perhaps not at 28 Rue
Vieille du Temple). \citet{KasiviswanathanNiRaSm13} discuss protecting
privacy at the level of node differences in graphs, where two graphs are
neighboring if a single node is removed or added to the graph (with
arbitrarily many edges to other nodes), which is distinct from providing
privacy only on edge appearances. In the context of large-scale web or
mobile applications, there are differences between \emph{event-level}
privacy~\cite{DworkNaPiRo10, ErlingssonPiKo14, AbadiChGoMcMiTaZh16}, which
protects each individual action a user takes, though a user contributing
multiple data items (e.g.\ sending multiple text messages) suffers linear
degradation in privacy guarantees, and \emph{user-level}
privacy~\cite{McMahanRaTaZh18}, where all users are neighboring, no matter
how many data contributions they make or how diverse their data. The former
(event-level) provides limited privacy guarantees, while the latter
(user-level) may be too strong for practical use.  In this context,
\emph{element-level} privacy attempts to provide privacy at the right
granularity for the application at hand: in a way we formalize
shortly, one identifies the elements to be
protected, then guarantees that no matter how much data corresponding to a
particular element a user contributes, the output of the privacy mechanism
changes little.

%% \textcolor{red}{TODO: add contribution and related work when we finish the
%%   rest of the paper}

In the remainder of the paper, we carefully define element-level
differential privacy (Section~\ref{sec:def}), using standard tools to show
that it inherits many of the desiderata important for satisfactory privacy
definitions (composition, group privacy, privacy to post-processing,
side-information resilience, and amplification by subsampling) in
Section~\ref{sec:element-DP-prop}. As one of our major goals is to provide
practicable procedures for estimation and learning with privacy protections,
we devote Section~\ref{sec:algo} to several methodological contributions. In
particular, we demonstrate histogram estimators and tools for estimation of
frequent elements, highlighting the advantages element-level privacy can
provide, and we show how to apply element-level privacy to fit large scale
machine learning models and compute M-estimators
(Section~\ref{sec:statistical-learning}) using stochastic-gradient-type
methods. Along the way, we demonstrate a new asymptotic normality result for
stochastic approximation procedures applied to fixed finite datasets, which
may be of interest beyond privacy.  We complement these with experimental
evidence on several real-world machine-learning tasks in
Section~\ref{sec:experiments}.

% -*- Mode: latex -*- %

\section{Element-level privacy}
\label{sec:def}

\newcommand{\convexzo}{\mathsf{Conv}_{1 \downarrow 0}}
\newcommand{\permutations}{\Pi_n}
\newcommand{\permutation}{\pi}
    
As we allude in the introduction, our main goal in this paper is to provide
a new definition of privacy, simultaneously developing its properties while
demonstrating new procedures that obey its strictures.  To that end, we
begin by defining element-level privacy, contrasting it with prior notions.
The basic notion of privacy is \citeauthor{DworkMcNiSm06}'s
\emph{differential privacy} (DP), while other definitions of privacy, of
which we recapitulate a few, include approximate differential
privacy~\cite{DworkKeMcMiNa06}, R\'{e}nyi differential
privacy~\cite{Mironov17} and concentrated differential
privacy~\cite{DworkRo16, BunSt16}, and $f$-differential
privacy~\cite{DongRoSu19}.

\subsection{Privacy definitions}

The key to each of these definitions of privacy is a \emph{distance} on the
space of samples. In particular, let $\distsample : \mc{X}^n \times \mc{X}^n
\to \R_+$ be a distance on $\mc{X}^n$, and let $\mechanism$ be a randomized
mapping from $\mc{X}^n$ to some (measurable) space $\mc{Z}$.  In standard
differential privacy, this distance is the
(order-invariant) Hamming metric: letting $\permutations$ be the
collection of all permutations of $n$ elements,
for samples $\sample = (x_1, \ldots, x_n), \sample' =
(x_1', \ldots, x_n') \in \mc{X}^n$ we have
\begin{equation*}
  \distsample(\sample, \sample')
  = \dham(\sample, \sample')
  \defeq \min_{\permutation \in \permutations}
  \sum_{i = 1}^n \indics{x_i \neq x_{\permutation(i)}'}.
\end{equation*}
As \citet{ChatzikokolakisAnBoPa13} note, focusing on the case of
differential privacy, we may take any distance on the samples to provide
analogues of differential privacy; such alternative distances are important,
for example, for graph-based notions of differential
privacy~\cite{KasiviswanathanNiRaSm13}, location
services~\cite{AndresBoChPa13}, or event-level
streams~\cite{DworkNaPiRo10, ErlingssonPiKo14}.

We thus make the following definitions,
which generalize those in prior work by treating distance between two
samples as a first-class object.
\begin{definition}[Dwork et al.~\cite{DworkMcNiSm06,DworkKeMcMiNa06}]
  \label{definition:dp}
  Let $\diffp, \delta \ge 0$.  The randomized mechanism $\mechanism :
  \mc{X}^n \to \mc{Z}$ is \emph{$(\diffp, \delta)$-differentially private}
  for the distance $\distsample$ if for any pair of samples $\sample,
  \sample'$ with $\distsample(\sample, \sample') \le 1$ and any measurable
  subset $A \subset \mc{Z}$,
  \begin{equation*}
    \P(\mechanism(\sample) \in A)
    \le e^\diffp \P(\mechanism(\sample') \in A) + \delta,
  \end{equation*}
  where the probability is taken over only the randomness in
  $\mechanism$ (treating $\sample, \sample'$ as fixed).
\end{definition}
\noindent
We can abstract away from other definitions of privacy as well.
\begin{definition}[Mironov~\cite{Mironov17}]
  \label{definition:renyi-dp}
  Let $\diffp \ge 0,\renparam \ge 1$. The mechanism
  $\mechanism : \mc{X}^n \to \mc{Z}$ is \emph{$(\diffp,\renparam)$-R\'{e}nyi
    differentially private}
  for the distance $\distsample$ if for any pair of samples $\sample,
  \sample'$ with $\distsample(\sample, \sample') \le 1$,
  \begin{equation*}
    \drenyi{\mechanism(\sample)}{\mechanism(\sample')} \le \diffp.
  \end{equation*}
\end{definition}
\noindent
\citet{Mironov17} shows that any $(\diffp,\renparam)$-R\'{e}nyi private
mechanism is also $(\diffp + \frac{\log{\frac{1}{\delta}}}{\alpha - 1},
\delta)$-differential private for all $\delta \in [0,1]$. As a
consequence, if we wish to compute and release
$f(\sample)$ for some $\sample \in \mc{X}^n$, and
$\ltwos{f(\sample) - f(\sample')} \le \rho$ for any neighboring
samples $\sample, \sample'$, then the Gaussian mechanism
\begin{equation}
  \label{eqn:mironov-gaussian-mech}
  \mechanism(\sample) \defeq f(\sample) + \normal\left(0,
  \rho^2 \left(\frac{\indic{\diffp > 1}}{\diffp}
  + \frac{2 \log(1/\delta)}{\diffp^2}
  \right) I\right)
\end{equation}
provides $(\diffp, \delta)$-differential privacy for all $\diffp > 0$. For
$\diffp \le 1$, the $1/\diffp$ term in the normal variance is
unnecessary~\cite{DworkKeMcMiNa06}. (See
Appendix~\ref{sec:sufficiency-of-mironov} for this calculation.)

Rather than exhaustively discussing alternative privacy
definitions, we note that each variant of differential
privacy ($f$-differential privacy~\cite{DongRoSu19}
or concentrated differential privacy~\cite{DworkRo16,BunSt16})
similarly rely on sample distances, saying that a mechanism
$\mechanism(\cdot)$ is private if its output distribution changes
little (under an appropriate metric) when its input sample changes.

\subsection{Element-level privacy definition}
\label{sec:element-level}

\providecommand{\sups}[1]{^{(#1)}}
\newcommand{\distelement}{d_{\mathsf{element}}}

The standard distance in each privacy definition is the Hamming
distance between samples $\sample, \sample'$; this is satisfying, as it
limits any inferences that can be made about an
individual~\cite{DworkMcNiSm06, Dwork08}. In some scenarios, this definition
makes learning challenging (or, depending on the task and desired privacy
guarantee, essentially impossible)~\cite{DuchiJoWa18, DuchiRo19}.  It is
thus natural to consider more fine-grained distance notions to allow utility
while providing sufficient privacy. For our purposes, it is useful to
consider a scenario frequent in large-scale learning applications, such as
federated learning (e.g.~\cite{AbadiChGoMcMiTaZh16}), where individual users
contribute multiple data items rather than a single item.
In such cases,
we protect a user so that no one knows any particular \emph{thing}
the user has done. For example,
a student with a phone sends many text messages, but
may wish that his parents and teachers never know whether he has ever sent a
curse word, irrespective of the number of times he may or
may not have sent one.

%% For example,
%% an individual phone user sends multiple text messages, or takes several
%% photos with his or her phone, each a single datum.  In such
%% cases, it may be satisfying from a privacy perspective not to protect
%% whether a user participates in a dataset (whether one has a phone or not is
%% likely not very sensitive), but to protect so that no
%% one knows any particular \emph{thing} a user has done. For example, a student
%% may wish that his parents and teachers never know whether he has ever sent a
%% curse word via text message, irrespective of the number of times he may or
%% may not have sent one.

To formalize this, we introduce \emph{element-level privacy}. A sample or
dataset $\sample$ consists of $n$ user's data (or data units) $\sample =
\{x\sups{u}\}_{u = 1}^n$, while each user $u$ maintains local data of size
$m(u)$, where the size may depend on the user $x\sups{u} = \{x_1\sups{u},
\ldots, x_{m(u)}\sups{u}\}$. For example, individual $u$'s data may consist
of the $m(u)$ photos she has taken.  External to the users are $K$
\emph{clusters} $\{c_1, \ldots, c_K\}$ partitioning $\mc{X}$,
where we view the cluster centroids
as the \emph{elements} to be made private, and each datapoint
$x\sups{u}_i$ belongs to precisely one cluster $c_k$ (i.e.\ has
a nearest element); we denote this by
$x\sups{u}_i \in c_k$.  The distance between two users' local data $x =
\{x_1, \ldots, x_n\}$ and $x' = \{x_1', \ldots, x_m'\}$ is then the number
of clusters $c_1, \ldots, c_K$ with different memberships for the two users'
data, that is,
\begin{equation}
  \label{eqn:def-user-dist}
  \begin{split}
    \distuser(x, x')
    & = \distuser(\{x_1, \ldots, x_n\}, \{x_1', \ldots, x_m'\}) \\
    & \defeq
    \sum_{k = 1}^K
    \indic{
      \{x_i : x_i \in c_k\} \neq \{x_i' : x_i' \in c_k\}
    },
  \end{split}
\end{equation}
where $\{x_i : x_i \in c_k\}$ are implicitly multi-sets.
Then two users' data $x, x'$ are \emph{element-neighbors} if $\distuser(x,
x') \le 1$; this is equivalent to allowing users to differ
arbitrarily on one element of their data.
With this distance definition, we can then define the element-level
sample distance by
\begin{equation}
  \label{eqn:element-distance}
  \distelement(\sample, \sample')
  \defeq \min_{\permutation \in \permutations}
  \sum_{u = 1}^n \distuser(x\sups{u}, {x'}\sups{\permutation(u)}).
\end{equation}
Two samples $\sample,
\sample'$ of size $n$ are \emph{element-neighbors} if each of the units
within the sample is identical except for (at most) one unit $x \in \sample,
x' \in \sample'$, where $\distuser(x, x') \le 1$.
The definition of element level privacy is now immediate: we
take the sample distance $\distsample$ in any
privacy definition
(e.g.\ \ref{definition:dp} or~\ref{definition:renyi-dp}) to
be $\distelement$.
\begin{definition}
  \label{definition:element-dp}
  A mechanism $\mechanism$ satisfies \emph{element-level}
  differential
  privacy or R\'{e}nyi-differential privacy if it satisfies
  Definition~\ref{definition:dp} or \ref{definition:renyi-dp},
  (respectively) with
  distance $\distsample = \distelement$.
\end{definition}

Element-level differential privacy guarantees that the releases of a
mechanism trained on users' sensitive data does not leak any particular
``element'' the user has, that is, whether a user has data belonging to any
one of the clusters $c_1, \ldots, c_K$, no matter how many data point belong
to one of the clusters. It is useful to compare this
definition to two frequent definitions of privacy for large-scale learning
systems.  The first is \emph{event-level privacy}~\cite{ErlingssonPiKo14},
which applies privacy commensurate with each individual \emph{event} a user
performs, for example, whenever a user visits any website. This definition
may be too weak: consider a user who sends 50 text-messages consisting of
the phrase ``Hello!'' Then event-level
privacy (say with Def.~\ref{definition:dp}) guarantees
a likelihood ratio bound of
$e^{50 \diffp}$ versus an otherwise identical user who never uses the phrase
``Hello!'' In the case of element-level privacy, however, the distance
between these users is at most 1 regardless of how many times either says
``Hello!''  The second common definition is \emph{user-level privacy}, which
corresponds to the standard definitions with Hamming distance; by taking a
single cluster $c_1 = \mc{X}$ in the
definitions~\eqref{eqn:def-user-dist}--\eqref{eqn:element-distance} of
element level distances, one recovers user-level privacy, but as we shall
see, the additional flexibility of element-level privacy allows more
utility.

To get a feel for Definition~\ref{definition:element-dp}, it is instructive
to consider two (somewhat stylized) examples.

\begin{example}[Word frequency estimation]
  \label{example:frequent-text-words}
  Consider the problem of estimating frequent words used in text (SMS)
  messages. Ignoring punctuation, we treat each word as a cluster, so that
  for a dictionary of size $d$, a user $u$'s data $x\sups{u} =
  \{x_1\sups{u}, \ldots, x_d\sups{u}\}$ consists of the counts $x_j\sups{u}
  \in \N$ of the times user $u$ typed word $j$, a histogram of word
  counts. In Figure~\ref{fig:yo-bro}, for example,
  the leftmost column has histogram with count 3 for the word
  ``yo,'' 3 for ``bro,'' and 0 for all other words.
  The distance between two user data $x, x'$ is then $\distuser(x,
  x') = \sum_{j = 1}^d \indics{x_j \neq x_j'}$, the number of distinct
  counts. In this case, two users are neighboring when their word use is
  identical except that one may use a word $j$ arbitrarily more or less than
  the other.
\end{example}

\begin{example}[Website visit counts]
  \label{example:urls}
  Consider estimating the frequency of popular websites (URLs) that users
  visit. In this case, a natural set of elements are domains (the first part
  of a website name), while specific URLs belong to a single domain.  For
  example, \url{https://en.wikipedia.org/wiki/Apple_Inc.} and
  \url{https://en.wikipedia.org/wiki/NeXT} belong to the domain (cluster)
  \texttt{wikipedia.org}, while \url{http://web.stanford.edu/~jduchi/} and
  \url{http://web.stanford.edu/~asi/} belong to \texttt{stanford.edu}.  Then
  a user's data consists of all URLs he or she visits, while the distance
  between users is the number of domains in which they visit distinct URLs.
  The intuition here is that any mechanism satisfying
  Definition~\ref{definition:element-dp} limits release of whether a user
  ever even visits a website in a particular domain, for example,
  \texttt{wikipedia.org}, \texttt{stanford.edu}, or \texttt{youtube.com}.
  In contrast, standard differential privacy would protect
  whether
  a user has ever used the internet.
\end{example}

%% \begin{example}
%% \label{ex:subdomain}
%%  we consider the problem of estimating histogram for subdomains
%% that users visit. A subdomain is a part of a larger domain. 
%% For example, ``en.wikipedia.org'' is a subdomain of ``wikipedia.org''. 
%% In this case, each user $u$, has data $x^{(u)} = \{x_1^{(u)},
%%  \ldots, x_{m(u)}^{(u)}\}$ such that $x_j^{(u)} \in \{0, 1\}^d, \lone{x_j^{(u)}} = 1$, where $d$ is the total number of 
%%  all subdomains. Each coordinate of $x^{(u)} $ corresponds to a specific 
%%  subdomain and each $c_k$ corresponds to 
%%  a domain. For instance, both ``en.wikipedia.org'' and ``de.wikipedia.org'' 
%%  belong to cluster ``wikipedia.org''. In the estimating the subdomain histogram setup, mechanisms
%% satisfying Definition~\ref{def:element-level-dp} guarantees that the released
%% histogram will not leak whether a user visited a specific domain 
%% (here, any subdomain which belongs to ``wikipedia.org''). 
%% \end{example}

As these examples attempt to clarify, the
important facet of element-level DP is that it protects a data provider from
anyone ever knowing any particular thing they have done, regardless of how
many times they have done it: visiting a domain, using a word, or other
desired protected element.

%% We note that element-level DP's guarantees are beyond event-level DP which
%% protects each individual data point. On the other hand, element-level DP may
%% not provide the same guarantees as user-level DP, where the protection is
%% provided for membership attacks. However, element-level DP is equivalent to
%% user-level DP in case there is only one cluster.

% -*- Mode: latex -*- %

\newcommand{\subwithout}{\Pi_m^{\textup{wo}}}
\newcommand{\subfixed}{\Pi_q}

\subsection{Properties of element-level differential privacy}
\label{sec:element-DP-prop}

By replacing the standard Hamming distance in the different definitions of
differential privacy with the element-based
distance~\eqref{eqn:element-distance}, any element-level differentially
private mechanism inherits the typical properties private mechanisms enjoy,
including privacy to post-processing, group privacy, composition, and
amplification of privacy by (anonymous) subsampling (see the
book~\cite{DworkRo14} for a discussion of these desiderata).  Almost all of
these inheritances are immediate, but to give
a flavor of these results we present several
for the $(\diffp, \delta)$-element-level
differentially private case.

\begin{corollary}[Post-Processing]
  \label{corollary:post-processing}
  Let $\mechanism : \mc{X}^n \to \mc{Z}$ be an
  $(\diffp,\delta)$-element-level private mechanism. For any (potentially
  randomized) function on $\mc{Z}$, the mechanism $f \circ
  \mechanism$ is $(\diffp,\delta)$-element-level private.
\end{corollary}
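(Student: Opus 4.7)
The plan is to observe that Corollary~\ref{corollary:post-processing} is not really a theorem about element-level privacy per se: the statement is exactly the standard post-processing lemma, and once we take $\distsample = \distelement$ in Definition~\ref{definition:dp}, there is nothing further to verify beyond the usual argument. So I would reduce element-level privacy to ordinary $(\diffp,\delta)$-privacy (with its distance) and invoke the classical proof verbatim.

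First, I would fix two element-neighboring samples $\sample, \sample'$, i.e., $\distelement(\sample, \sample') \le 1$. By Definition~\ref{definition:element-dp}, the hypothesis gives $\P(\mechanism(\sample) \in B) \le e^\diffp \P(\mechanism(\sample') \in B) + \delta$ for every measurable $B \subseteq \mc{Z}$. The goal is to verify the same inequality with $f \circ \mechanism$ in place of $\mechanism$ for every measurable $A$ in the output space of $f$.

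Second, I would handle the case where $f$ is deterministic. Here the pullback $B = f^{-1}(A)$ is a measurable subset of $\mc{Z}$, and
\begin{equation*}
  \P(f(\mechanism(\sample)) \in A)
  = \P(\mechanism(\sample) \in f^{-1}(A))
  \le e^\diffp \P(\mechanism(\sample') \in f^{-1}(A)) + \delta
  = e^\diffp \P(f(\mechanism(\sample')) \in A) + \delta,
\end{equation*}
which is the desired bound. Third, I would reduce the randomized case to the deterministic one in the standard way: realize a randomized $f$ as $f(z) = g(z,U)$ for a deterministic $g$ and an independent random seed $U$, then condition on $U = u$, apply the deterministic step to $g(\cdot, u)$, and integrate over $u$ using Tonelli's theorem. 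Because $U$ is independent of $\mechanism(\sample)$ and of $\mechanism(\sample')$, the conditional inequality is pointwise preserved, and the $\delta$ term integrates to $\delta$.

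There is essentially no obstacle here: the only thing worth flagging is that the proof never touches the structure of $\distelement$, so the result really follows from a generic statement (valid for any sample distance $\distsample$) that one could state once and reuse for all of Section~\ref{sec:element-DP-prop}. The mildest technical point is ensuring measurability of $f^{-1}(A)$ and of $g$ in the product space, which is standard for measurable randomized mappings and needs no real work.
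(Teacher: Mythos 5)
Your proof is correct and matches the paper's intent exactly: the paper gives no separate argument for Corollary~\ref{corollary:post-processing}, noting only that the inheritance is immediate once the Hamming distance is replaced by $\distelement$ in Definition~\ref{definition:dp}, and pointing to the standard post-processing proof (pullback for deterministic $f$, then integration over an independent seed for randomized $f$), which is precisely what you carry out. Your closing observation that the argument never uses the structure of $\distelement$ and holds for any sample distance is exactly the paper's point.
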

\begin{corollary}[Group Privacy]
  \label{corollary:group-privacy}
  Let $\mechanism : \mc{X}^n \to \mc{Z}$ be an $(\diffp,\delta)$-element-level
  private mechanism.  Let
  $\sample = \{x\sups{u}\}_{u=1}^n$ and
  $\sample' = \{{x'}\sups{u}\}_{u=1}^n \in \mc{X}^n$ be
  two samples. Then for any measurable set $A$,
  \begin{equation*}
    \P(\mechanism(\sample) \in A)
    \le e^{\distelement(\sample, \sample')\diffp}
    \P(\mechanism(\sample') \in A) +
    \distelement(\sample, \sample')
    e^{(\distelement(\sample, \sample') - 1) \diffp}
    \delta.
  \end{equation*}
\end{corollary}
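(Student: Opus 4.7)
The plan is to prove Corollary \ref{corollary:group-privacy} by the standard ``path'' argument for group privacy, adapted to the element-level distance. Let $k \defeq \distelement(\sample, \sample')$. The case $k = 0$ is immediate since the distributions of $\mechanism(\sample)$ and $\mechanism(\sample')$ coincide (noting that $\distelement$ is permutation-invariant and $\mechanism$ may be taken to act on multisets, or one uses the defining permutation). For $k \ge 1$, I would build a sequence of intermediate samples
\begin{equation*}
  \sample = \sample_0, \sample_1, \ldots, \sample_k = \sample',
\end{equation*}
such that $\distelement(\sample_{i-1}, \sample_i) \le 1$ for every $i$, and then chain the $(\diffp, \delta)$-guarantee along this path.

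The first step is the construction of the path. By definition~\eqref{eqn:element-distance}, there exists a permutation $\permutation \in \permutations$ and, for each user $u$, a set of clusters $C_u \subseteq \{c_1, \ldots, c_K\}$ on which $x\sups{u}$ and ${x'}\sups{\permutation(u)}$ differ, with $\sum_u |C_u| = k$. Enumerate the $k$ offending $(u, c)$ pairs in any order. To pass from $\sample_{i-1}$ to $\sample_i$, I would modify the single user identified by the $i$th pair by replacing the multiset of its data points that lie in the corresponding cluster with the matching multiset from $\sample'$, leaving every other user and every other cluster of that user unchanged. By~\eqref{eqn:def-user-dist} this changes $\distuser$ for exactly one user by exactly one cluster, so $\distelement(\sample_{i-1}, \sample_i) \le 1$, and after $k$ steps we arrive at (a permutation of) $\sample'$.

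The second step is the induction itself. By element-level $(\diffp, \delta)$-privacy applied to each neighboring pair $(\sample_{i-1}, \sample_i)$, for every measurable $A$,
\begin{equation*}
  \P(\mechanism(\sample_{i-1}) \in A) \le e^\diffp \P(\mechanism(\sample_i) \in A) + \delta.
\end{equation*}
Unrolling these inequalities from $i = 1$ to $i = k$ yields
\begin{equation*}
  \P(\mechanism(\sample) \in A) \le e^{k \diffp} \P(\mechanism(\sample') \in A) + \delta \sum_{j = 0}^{k-1} e^{j \diffp},
\end{equation*}
and bounding each term in the geometric sum by its largest element $e^{(k-1)\diffp}$ gives the claimed inequality.

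The argument is largely routine; the one subtlety, and the main thing to verify carefully, is that the path construction in the first step really does reduce $\distelement$ by one at each move rather than merely changing the sample in some uncontrolled way. This is where the specific form of $\distuser$ as a sum of indicators over disjoint clusters is essential: because the clusters $c_1, \ldots, c_K$ partition $\mc{X}$, editing one cluster's data for one user affects exactly one indicator in~\eqref{eqn:def-user-dist} and leaves all other users and their other clusters untouched, so the permutation $\permutation$ witnessing the original distance continues to witness the distance at each stage of the path.
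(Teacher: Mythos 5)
Your proof is correct and is exactly the standard chaining argument that the paper invokes by citing \citet[Prop.~2.1, Thm.~2.2]{DworkRo14}: build a path of $k = \distelement(\sample,\sample')$ element-neighboring samples by editing one (user, cluster) pair at a time, unroll the $(\diffp,\delta)$ guarantee, and bound the geometric sum $\sum_{j=0}^{k-1} e^{j\diffp}$ by $k e^{(k-1)\diffp}$. The one point you rightly flag---that the witnessing permutation must be carried along the path---is handled by the permutation-invariance built into the definition~\eqref{eqn:element-distance}, so the argument matches the paper's intended proof.
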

\noindent
See, for example, \citet[Prop.~2.1, Thm.~2.2]{DworkRo14}.  We also
immediately have composition for element-level
DP. In this case, we consider adaptive composition of $k$ mechanisms,
where for each $i$, we assume the output space $\mc{Z}_i$ is a measurable space
and
\begin{equation*}
  \mechanism_i : \mc{X}^n \times \mc{Z}_1 \times \cdots \times \mc{Z}_{i-1}
  \to \mc{Z}_i
\end{equation*}
is $(\diffp_i, \delta_i)$-element-level differentially private, meaning that
for fixed $z_1^{i-1}$, $\mechanism_i(\cdot, z_1^{i-1})$ is private. The
$k$-fold composition $\mechanism_k \circ \cdots \circ \mechanism_1$ then has
recursive definition $Z_i = \mechanism_i(\sample, Z_1^{i-1})$. We have the
following corollary
(for the proof of differentially private version,
see~\cite[Thm.~3.20 and Appendix~B]{DworkRo14} and
\cite[Corollary 6.26]{Duchi19}, and for the R\'{e}nyi version,
see \cite[Prop.~1]{Mironov17}).
\begin{corollary}[Composition]
  \label{corollary:composition}
  Let $k \in \N$, $\mc{Z}_i$ be measurable spaces,
  and $\mechanism_i : \mc{X}^n \times \mc{Z}_1^{i-1} \to \mc{Z}_i$
  be
  $(\diffp_i, \delta_i)$-element-level DP.
  Then their $k$-fold composition
  is $(\sum_{i = 1}^k \diffp_i, \sum_{i = 1}^k \delta_i)$-element-level
  DP. Additionally, for any $\delta_0 > 0$,
  the composition is
  \begin{equation*}
    \left(\frac{3}{2} \sum_{i = 1}^k \diffp_i^2
    + \sqrt{6 \sum_{i = 1}^k \diffp_i^2 \log \frac{1}{\delta_0}},
    \delta_0 + \sum_{i=1}^k \frac{\delta_i}{1 + e^{\diffp_i}}\right)
  \end{equation*}
  element-level DP.
  If instead the mechanisms are $(\diffp_i, \renparam)$-element-level
  R\'{e}nyi private, the composition
  is $(\sum_{i = 1}^k \diffp_i, \renparam)$-element-level R\'{e}nyi private.
\end{corollary}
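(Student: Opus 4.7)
The plan is to observe that element-level DP is structurally identical to standard DP (or RDP) except that the sample-distance $\distsample$ is replaced by $\distelement$, and the standard composition proofs use only the fact that \emph{for neighboring samples} (i.e.\ those at distance at most $1$) the output distributions of each mechanism are close in the appropriate sense. Because the composition arguments never inspect the internal structure of the distance function, they port over verbatim once we fix two samples $\sample, \sample'$ with $\distelement(\sample, \sample') \le 1$.

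Concretely, I would fix such a neighboring pair $\sample, \sample'$ and, for each step $i$ and each possible history $z_1^{i-1} \in \mc{Z}_1 \times \cdots \times \mc{Z}_{i-1}$, invoke the element-level privacy hypothesis on $\mechanism_i(\cdot, z_1^{i-1})$, which by definition gives
\begin{equation*}
  \P(\mechanism_i(\sample, z_1^{i-1}) \in A)
  \le e^{\diffp_i} \P(\mechanism_i(\sample', z_1^{i-1}) \in A) + \delta_i
\end{equation*}
for all measurable $A \subseteq \mc{Z}_i$. This is precisely the pointwise indistinguishability hypothesis used by \cite[Thm.~3.20 and App.~B]{DworkRo14}; plugging it into that argument (which is a recursive comparison of the joint laws of $(Z_1, \ldots, Z_k)$ under $\sample$ versus $\sample'$, controlled via the privacy loss random variable) yields both the basic summation bound $(\sum \diffp_i, \sum \delta_i)$ and the advanced composition bound stated in the corollary. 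For the R\'{e}nyi case I would instead invoke the hypothesis $\drenyi{\mechanism_i(\sample, z_1^{i-1})}{\mechanism_i(\sample', z_1^{i-1})} \le \diffp_i$ and apply the chain rule for R\'{e}nyi divergence of joint distributions, as in \cite[Prop.~1]{Mironov17}, which gives additivity of the R\'{e}nyi parameters at a common order $\renparam$.

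There is no genuine obstacle here: the element-level machinery enters only through the definition of ``neighboring,'' and all standard composition proofs are agnostic to how that predicate is generated. The only substantive bookkeeping is checking that the adaptive composition is a well-defined element-level mechanism, i.e.\ that the map $\sample \mapsto (Z_1, \ldots, Z_k)$ factors through the per-coordinate hypotheses, which is immediate from the recursive definition $Z_i = \mechanism_i(\sample, Z_1^{i-1})$. Thus the proof reduces to citing the three black-box composition theorems (basic, advanced, and R\'{e}nyi), applied with the neighbor relation induced by $\distelement \le 1$.
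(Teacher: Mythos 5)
Your proposal is correct and matches the paper's treatment: the paper also proves this corollary by noting that the standard composition arguments (basic, advanced, and R\'{e}nyi, cited from Dwork--Roth and Mironov) depend only on the pointwise closeness of output distributions for a fixed neighboring pair, so they apply verbatim once ``neighboring'' is reinterpreted via $\distelement \le 1$. No further comment is needed.
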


We also obtain that subsampling amplifies the privacy of our mechanisms
(see~\cite[Thms.\ 8 \& 9]{BalleBaGa18}). We consider the two most
natural subsampling mechanisms. The first, which we denote
$\subfixed$, takes a sample
$\sample$ and returns a subsample $\sample' \subset \sample$ where
each element $x\sups{u} \in \sample$ is included with a fixed probability
$q \in (0, 1)$. The second, $\subwithout$, samples $m$ elements
without replacement from $\sample$.
\begin{corollary}[Amplification by subsampling]
  \label{corollary:subsampling}
  Let $\mechanism$ be an $(\diffp,\delta)$-element differentially private
  mechanism that acts on samples of arbitrary size. Then
  \begin{enumerate}[label=(\roman*)]
  \item For any $q \in (0, 1)$,
    the subsampled mechanism $\mechanism \circ \subfixed$ is $(\log(1 + q
    (e^\diffp - 1)), q \delta)$-element-level differentially private.
  \item For any $m \le n \in \N$, the subsampled mechanism $\mechanism \circ
    \subwithout$ applied to samples of size $|\sample| = n$
    is $(\log(1 + \frac{m}{n}(e^\diffp - 1)), \frac{m}{n}
    \delta)$-element-level differentially private.
  \end{enumerate}
\end{corollary}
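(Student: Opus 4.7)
The plan is to reduce the element-level claim to the corresponding standard DP amplification theorems of Balle, Barthe, and Gaboardi, by observing that the subsampling operators $\subfixed$ and $\subwithout$ act at the level of users (whole units $x\sups{u}$), while the element-level neighbor relation is itself a user-indexed relation: $\distelement(\sample, \sample') \le 1$ means $\sample$ and $\sample'$ agree on $n-1$ of their $n$ user units and the remaining pair $x\sups{u}, {x'}\sups{u}$ satisfies $\distuser(x\sups{u}, {x'}\sups{u}) \le 1$. Because both the neighbor relation and the subsampling act user-by-user, the combinatorial coupling argument used in the classical proof transfers verbatim.

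Concretely, first I would fix two element-neighboring samples $\sample, \sample'$ differing in a single user $u$, and couple the subsampling randomness across the two inputs (include the same users on both sides). For $\subfixed$, with probability $1-q$ the user $u$ is omitted, in which case $\subfixed(\sample) = \subfixed(\sample')$ pointwise and the mechanism outputs are identically distributed; with probability $q$, user $u$ is included and the resulting subsamples are again element-neighbors, so the $(\diffp, \delta)$-guarantee of $\mechanism$ applies. Then I would write the output density of $\mechanism \circ \subfixed(\sample)$ as the corresponding $q$-weighted mixture, and bound the privacy loss of this mixture by the standard advanced joint-convexity (overlapping mixtures) lemma from Balle et al., which yields exactly the $(\log(1 + q(e^\diffp - 1)), q\delta)$ parameters. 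The sampling-without-replacement bound in part (ii) is obtained the same way: conditioning on whether user $u$ is among the $m$ selected indices gives inclusion probability $m/n$, after which the same mixture bound applies.

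The only point that requires care, and the main (minor) obstacle, is verifying that the coupling really does produce element-neighbors on the event that $u$ is included: the two subsamples then contain the unit pair $x\sups{u}$ versus ${x'}\sups{u}$ together with identical copies of the other selected users, so the induced sample distance is $\distuser(x\sups{u}, {x'}\sups{u}) \le 1$ as required. Once this is in hand, the conclusion is immediate by invoking \cite[Thms.~8 \& 9]{BalleBaGa18}, substituting the element-level neighbor relation for their record-level relation throughout their proof; no quantitative step of that proof uses any property of the neighbor relation beyond stability under user-level subsampling.
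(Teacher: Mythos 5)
Your reduction is the route the paper itself takes (it offers no proof beyond the pointer to \cite[Thms.~8--9]{BalleBaGa18}), and the point you single out as the ``main obstacle''---that including user $u$ on both sides yields element-neighboring subsamples---is indeed fine. The genuine gap is in the step you declare automatic. Writing $P = (1-q)\mu_0 + q\mu_1$ and $P' = (1-q)\mu_0 + q\mu_1'$ for the coupled decomposition, the quantitative core of the Balle et al.\ argument (advanced joint convexity) does \emph{not} only compare the two ``$u$ included'' branches $\mu_1$ and $\mu_1'$: it bounds the hockey-stick divergence of $P$ from $P'$ by a convex combination involving both the divergence of $\mu_1$ from $\mu_1'$ \emph{and} the divergence of $\mu_1$ from $\mu_0$. (Equivalently, a direct computation with the claimed $\diffp' = \log(1+q(e^\diffp-1))$ and $\delta' = q\delta$ leaves the residual term $q(1-q)(e^\diffp - 1)\left[\mu_1'(A) - \mu_0(A)\right]$, which must be nonpositive.) In the Poisson case $\mu_0$ is the output on a subsample from which user $u$ has been deleted outright; in the without-replacement case the coupling in \cite[Thm.~9]{BalleBaGa18} compares a subsample containing $x\sups{u}$ to one in which $x\sups{u}$ is swapped for an arbitrary other user's record. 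Neither pair is at element distance $\le 1$: deleting or wholesale replacing a user changes as many elements as that user has nonempty clusters. So your claim that ``no quantitative step of that proof uses any property of the neighbor relation beyond stability under user-level subsampling'' is exactly where the argument breaks---those proofs use that add/remove (Thm.~8) or arbitrary substitution (Thm.~9) of a whole record is itself a neighboring operation, which fails for $\distelement$.

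The gap is substantive rather than cosmetic. With $K = 2$ clusters, let $\mechanism$ add $\Lap{1/\diffp}$ noise to $\sum_u (\text{number of clusters in which user } u \text{ has data})$; this has element sensitivity $1$, hence is $\diffp$-element-level DP with $\delta = 0$, yet for two element-neighboring inputs whose distinguished user has data in two versus one clusters, the subsampled mechanism's likelihood ratio near the ``$u$ included'' mode approaches $e^\diffp$, exceeding $1 + q(e^\diffp - 1)$ for small $q$, so the advertised $(\log(1+q(e^\diffp-1)), 0)$ guarantee cannot hold. To close the argument you need an additional hypothesis restoring the missing comparison---for instance that $\mechanism$ is also $(\diffp,\delta)$-private under addition/removal of an entire user's record (automatic when each user's data lies in a single cluster), or you must pay a group-privacy factor equal to the maximal number of distinct elements per user in the $\mu_1$-versus-$\mu_0$ comparison, which degrades the stated constants.
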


Finally, we discuss amplification of R\'{e}nyi element-level differential
privacy by subsampling using a particular Gaussian mechanism that will
form the basis for our stochastic approximation results in the sequel.
In this case, we build off of \citeauthor{AbadiChGoMcMiTaZh16}'s
\emph{moments accountant}~\cite{AbadiChGoMcMiTaZh16}, whose primitive
is to release a sum of vectors.
Consider samples of the form
$\sample = \{x^{(1)}, \ldots,  x^{(n)}\}$, where
each $x\sups{u}$ consists of a collection
$x\sups{u} = \{x_k\sups{u}\}_{k=1}^{K}$ of $K$ vectors, where each 
$x_k\sups{u}$ corresponds to a desired statistics for 
element/cluster $k$ and each individual vector satisfies $\ltwos{x_k\sups{u}} \le \rho$
for some $\rho < \infty$. The goal is to release a sum of
the entire sample, $\sum_{u = 1}^n \sum_{k = 1}^{K} x_k\sups{u}$,
but instead we consider subsampling by users. In particular,
for $q \in [0, 1]$ and $\sigma \ge 0$,
let $B_u \in \{0, 1\}$ be either
i.i.d.\ $\bernoulli(q)$ or uniform on $\sum_u B_u = q n$,
let $W \sim \normal(0, I)$, and consider
the mechanism
\begin{equation}
  \label{eqn:subsampled-sum}
  \mechanism(\sample) \defeq \sum_{u=1}^n B_u \bigg(
  \sum_{k = 1}^{K} x^{(u)}_k\bigg) + \rho \sigma W.
\end{equation}
Taking $\distuser(x, x')$ as the user
distance~\eqref{eqn:def-user-dist}, we have the following corollary,
with minor extension to handle
the variants of subsampling (i.i.d.\ or fixed size without
replacement).
\begin{corollary}[Moments accountant, \cite{AbadiChGoMcMiTaZh16}
    Lemma 3]
  \label{corollary:momentacc}
  Let $\renparam \ge 1$ and $P_t$ be the $\normal(t,
  \sigma^2)$ distribution.  The mechanism~\eqref{eqn:subsampled-sum} is
  $(\diffp_\renparam, \renparam)$-element-level R\'{e}nyi differentially
  private with
  \begin{align*}
    \lefteqn{\diffp_\renparam(q, \sigma) \defeq} \\
    & ~~~ \max\left\{
    \drenyi{q P_1 + (1 - q) P_0}{q P_{-1} + (1 - q) P_0},
    \drenyi{q P_{-1} + (1 - q) P_0}{q P_{1} + (1 - q) P_0}
    \right\}.
  \end{align*}
\end{corollary}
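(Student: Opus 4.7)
The plan is to apply the moments-accountant analysis of Abadi et al.\ (their Lemma 3), with the key adaptation that element-level DP's replace semantics produces two mixture distributions symmetric about the zero contribution, rather than one mixture compared to a pure Gaussian as in add/remove.

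Let $\sample, \sample'$ be element-neighbors differing in user $u^*$'s $k^*$-th element; set $a := \sum_k x_k^{(u^*)}$ and $a' := \sum_k {x'}_k^{(u^*)}$. Then $a - a' = x_{k^*}^{(u^*)} - {x'}_{k^*}^{(u^*)}$, so $\ltwos{a - a'} \le 2\rho$ by the triangle inequality. Coupling the subsampling variables $\{B_u\}$ and noise $W$, I write
\begin{equation*}
\mechanism(\sample) = S + B_{u^*} a + \rho\sigma W, \quad \mechanism(\sample') = S + B_{u^*} a' + \rho\sigma W,
\end{equation*}
where $S := \sum_{u \neq u^*} B_u \sum_k x_k^{(u)}$ is common to both outputs. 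Conditioning on $\{B_u\}_{u \neq u^*}$ fixes $S$, and translation invariance of the Gaussian removes the $S$-dependence of the conditional divergence. Joint quasi-convexity of R\'enyi divergence then reduces the problem to bounding
\begin{equation*}
D_\renparam\!\left(q \normal(a, \rho^2\sigma^2 I) + (1-q) \normal(0, \rho^2\sigma^2 I) \,\big\|\, q \normal(a', \rho^2\sigma^2 I) + (1-q) \normal(0, \rho^2\sigma^2 I)\right).
\end{equation*}
For i.i.d.\ Bernoulli subsampling this is direct since $B_{u^*} \sim \bernoulli(q)$ independently of the conditioning; the fixed-size variant requires a small exchangeability argument to recover the same marginal on $B_{u^*}$.

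Next I reduce to one dimension. Rescaling by $\rho$ makes the noise $\normal(0, \sigma^2 I)$ and $\ltwos{a - a'} \le 2$. Rotational symmetry of the isotropic Gaussian collapses the problem to the 2D span of $a, a'$, and a centering/monotonicity argument---the technical core of Abadi et al.'s Lemma 3 analysis---identifies the worst case over $a, a'$ with $\ltwos{a-a'} \le 2$ as the symmetric 1D configuration $a = 1,\ a' = -1$. This yields $D_\renparam(q P_1 + (1-q) P_0 \,\|\, q P_{-1} + (1-q) P_0)$; swapping $\sample \leftrightarrow \sample'$ gives the reverse direction, and together they produce the $\max$ defining $\diffp_\renparam(q, \sigma)$.

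The main obstacle is this 1D reduction: showing the R\'enyi exponent $\int p^\renparam q^{1-\renparam}\,dy$ is maximized at $a = -a'$ subject to $\ltwos{a-a'} \le 2$. Rotational symmetry alone reduces only to 2D, and the final extremality requires a careful analysis of Gaussian mixtures. My setting differs from Abadi et al.'s add/remove case (where one side is pure $\normal(0, \sigma^2 I) = P_0$) because here both sides are mixtures---explaining why the final bound involves $P_1$ vs $P_{-1}$ rather than $P_1$ vs $P_0$---but the moment-generating-function techniques underpinning their extremality argument carry through unchanged to the symmetric setting.
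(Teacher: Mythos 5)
Your reduction to the conditional pair $q\normal{a}{\rho^2\sigma^2 I}+(1-q)\normal{0}{\rho^2\sigma^2 I}$ versus $q\normal{a'}{\rho^2\sigma^2 I}+(1-q)\normal{0}{\rho^2\sigma^2 I}$ is right, and you have correctly isolated the crux: the only constraint the element-neighbor relation gives is $\ltwos{a-a'}\le 2\rho$, because the changed cluster contributes $x_{k^*}^{(u^*)}-{x'}_{k^*}^{(u^*)}$ while the user's \emph{other} clusters contribute a common offset $c=\sum_{k\ne k^*}x_k^{(u^*)}$ that multiplies the \emph{same} bit $B_{u^*}$ and therefore cannot be translated away (only the other users' contribution $S$ can). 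The gap is that the extremality step you then assert --- that the worst case over $\ltwos{a-a'}\le 2$ (after rescaling) is the symmetric configuration $a=1$, $a'=-1$ --- is false, and it is not a step that appears in Abadi et al.'s Lemma~3: in their add/remove setting the unsampled branch and the reference distribution coincide at the translated origin, so there is no residual offset to optimize over. Here there is one, and joint convexity of $(p,p')\mapsto\int p^\renparam (p')^{1-\renparam}$ shows the supremum over all offsets equals $\frac{1}{\renparam-1}\log\bigl(1-q+qe^{2\renparam(\renparam-1)/\sigma^2}\bigr)$, approached as $\ltwos{c}\to\infty$ and \emph{strictly larger} than $\drenyis{qP_1+(1-q)P_0}{qP_{-1}+(1-q)P_0}$.

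Concretely: take $K=2$, $x_1^{(u^*)}=x_2^{(u^*)}=\rho e_1$ and ${x'}_2^{(u^*)}=-\rho e_1$, a valid element-neighboring pair. After conditioning, translating, and rescaling, the comparison is $qP_2+(1-q)P_0$ versus $P_0$ in one dimension. With $\renparam=2$, $q=1/2$, $\sigma=1$ this gives $e^{(\renparam-1)D_\renparam}=q^2e^{4}+2q(1-q)+(1-q)^2\approx 14.4$, whereas $e^{(\renparam-1)\diffp_\renparam(q,\sigma)}=\int(q\phi_1+(1-q)\phi_0)^2/(q\phi_{-1}+(1-q)\phi_0)\,dy\le\frac{1}{2}(e+3)\approx 2.9$: the claimed bound is off by more than a factor of two in the divergence itself. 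So the symmetric-configuration claim cannot be rescued by ``carrying through'' Abadi et al.'s moment computations. One must either (i) restrict to configurations with $\ltwos{a},\ltwos{a'}\le\rho$ --- e.g., clip the \emph{per-user sum} to $\rho$, or subsample per element rather than per user --- in which case the symmetric pair is the plausible extremal point but still requires its own argument (it is the replace-one analogue of their lemma, not the lemma itself), or (ii) settle for the joint-convexity bound $\frac{1}{\renparam-1}\log(1-q+qe^{2\renparam(\renparam-1)/\sigma^2})$, which is only linear in $q$ and forfeits the $q^2$ amplification that makes the moments accountant useful. Note the paper supplies no proof here, only the citation ``with minor extension''; the extension your proposal needs is not minor, and as written that step fails.
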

%% \noindent
%% See Section~\ref{sec:proof-momentacc} for a brief proof.

It is possible to numerically evaluate the R\'{e}nyi divergences in the
corollary, making them effective in applications, though they are
unavailable analytically.  As a consequence of the corollary, if we compose
the mechanism~\eqref{eqn:subsampled-sum} adaptively $T$ times, composition
for R\'{e}nyi privacy immediately guarantees the entire mechanism is $(T
\diffp_\renparam(q, \sigma), \renparam)$-R\'{e}nyi element-level
private. As a consequence, recalling \citeauthor{Mironov17}'s transformation
from R\'{e}nyi to approximate differential privacy~\cite{Mironov17}, for any
$\delta > 0$ the same composition is also $(\diffp, \delta)$-element-level
DP for $\diffp = \inf_{\renparam \ge 1} T \diffp_{\renparam}(q,
\sigma) + \frac{\log \frac{1}{\delta}}{\renparam - 1}$.  To give a sense of
the level of privacy maintained, we consider the bound of
\cite[Lemma 3]{AbadiChGoMcMiTaZh16}; a slight variant of its proof
yields
\begin{equation*}
  \diffp_\renparam(q, \sigma)
  \le \frac{q^2 \renparam}{1 - q} \frac{1}{\sigma^2}
  + O\left(q^3 / \sigma^3\right)
\end{equation*}
for $\renparam \le \sigma^2 \log \frac{1}{q \sigma}$ and $\sigma \ge 2$.
Thus, for numerical constants $c_0, c_1$, this composition is $(\diffp,
\delta)$-element level private for $\diffp \le c_0 q^2 T$ and $\sigma^2 \ge
c_1\frac{q^2 T}{\diffp^2} \log \frac{1}{\delta}$.

We remark in passing and without proof that each of the preceding
corollaries has an analog in \citeauthor{DongRoSu19}'s $f$-differential
privacy~\cite{DongRoSu19}.

\section{Element-level private methods}
\label{sec:algo}

\newcommand{\globalsens}{\mathsf{gs}}
\newcommand{\elementsens}{\mathsf{es}}

One of our major goals is to demonstrate the methodological possibilities of
mechanisms satisfying element-level privacy, both to give some sense of the
way to design mechanisms satisfying the definition and to understand the
potential utility benefits---in terms of more accurate
estimation---element-level privacy allows over user-level notions of
privacy.  To that end, we present three examples in this section of
increasing sophistication: discovering most frequent elements or heavy
hitters (Sec.~\ref{sec:heavy-hitters}), estimating multinomial frequencies
(Sec.~\ref{sec:histogram}), and finally, stochastic optimization and
statistical learning (Sec.~\ref{sec:statistical-learning}).

We begin by attempting to give a somewhat general picture, connecting to the
classical Laplace mechanisms and sensitivity analyses of
\citet{DworkMcNiSm06}; we specialize in the coming sections. Suppose each
user contributes a batch $x = (x_1, \ldots, x_m)$ of data, and we wish to
compute the average $\frac{1}{n} \sum_{u = 1}^n f(x\sups{u})$ of a function
$f : \mc{X}^m \to \R$ on $\sample = \{x\sups{u}\}_{u = 1}^n$. Standard
mechanisms add noise that scales with the \emph{global
  sensitivity} of the function $f$, that is, $\globalsens(f) \defeq \sup_{x
  \in \mc{X}^m, x' \in \mc{X}^m} |f(x) - f(x')|$, and the Laplace
(respectively Gaussian) mechanisms for $\diffp$- or $(\diffp,
\delta)$-differentially private release are
\begin{equation*}
  \mechanism(\sample)
  \defeq \frac{1}{n} \sum_{u = 1}^n f(x\sups{u})
  + \frac{\globalsens(f)}{\diffp} \cdot \laplace(1)
  ~~ \mbox{and} ~~
  \mechanism(\sample)
  \defeq \frac{1}{n} \sum_{u = 1}^n f(x\sups{u})
  + \globalsens(f) \cdot \normal\left(0, \frac{\indic{\diffp > 1}}{\diffp}
  + \frac{2 \log \frac{1}{\delta}}{\diffp^2}\right).
\end{equation*}
In contrast, given a partition $\{c_1, \ldots, c_K\}$ of $\mc{X}$ and
corresponding user distance $\distuser$ (recall
Eq.~\eqref{eqn:def-user-dist}), the analogous recipe here is to add noise
scaling with the \emph{element sensitivity} of $f$,
\begin{equation}
  \label{eqn:element-sensitivity}
  \elementsens(f) \defeq \sup_{x \in \mc{X}^m,
    x' \in \mc{X}^m} \left\{|f(x) - f(x')|
  ~ \mbox{s.t.}~ \distuser(x, x') \le 1 \right\},
\end{equation}
which satisfies $\elementsens(f) \le \globalsens(f)$.
Then the standard Laplace and Gaussian mechanisms become
\begin{equation}
  \label{eqn:basic-mechanisms}
  \mechanism(\sample)
  \defeq \frac{1}{n} \sum_{u = 1}^n f(x\sups{u})
  + \elementsens(f) \cdot
  \begin{cases}
    \laplace(1 / \diffp) & \mbox{Laplace mechanism}
    \\
    \normal\left(0, \frac{\indic{\diffp > 1}}{\diffp}
    + \frac{2 \log(1/\delta)}{\diffp^2} \right) & \mbox{Gaussian
      mechanism}
  \end{cases}
\end{equation}
and guarantee $\diffp$- or $(\diffp,\delta)$-element-level differential
privacy.
We see utility gains whenever $\elementsens(f) \ll
\globalsens(f)$, which we expect when the number $K$ of elements is large,
providing finer granularity privacy.

%% To make the coming presentation clearer, however, we first begin with an
%% example ....

%% Suppose that each user contributes a set $x \subset \{1, \ldots, k\}$,
%% potentially empty, we wish to estimate the frequencies with which
%% item $j$ appears, and each item $j \in \{1, \ldots, k\}$ is an element.
%% Thus, 

%% intuitive explanation for the utility benefits possible with element-level
%% privacy. 

%% Consider a $d$-dimensional mean-estimation problem, where
%% each user contributes vectors $x = (x_1, \ldots, x_m) \in \R^d$, but
%% where each vector $x_j$ has only one non-zero coordinate. We wish

%% $\frac{1}{m}$

%% $x = (x_1, \ldots, x_m)$, where $x_i \in [0, 1]$. We wish to  

\subsection{Discovering heavy hitters}
\label{sec:heavy-hitters}

The first two examples we consider are to estimate properties of a
multinomial.  We consider a sampling scheme where each of $n$ users
generates a vector $X\sups{u} \in \N^d$, $X \sups{u} \simiid \multinomial(m,
p)$, where $p = (p_1, \ldots, p_d) \in \R^d_+$ is an unknown vector of
probabilities, $p^T \ones = 1$, and $m \in \N$ is the number of trials. The
goal is to estimate different properties of the vector $p$, where
$X\sups{u}_j \in \{0, \ldots, m\}$ indicates the count of appearances of
item $j$ for user $u$. For example, if these multinomials indicate purchases
users make in a grocery store, we may be interested in the items the most
users purchase.  \citet{BhaskarLaSmTh10} provide a sophisticated analysis of
private algorithms for finding multisets of frequent items in item stream;
we consider a much simpler scenario than theirs (we care only about
individual items/elements, and wish to release an ordering of all elements
rather than a top few, the latter adding significant complexity to the
problem), as we treat this more as an illustrative example.  While stylized,
it is illustrative of the approaches possible with element-level privacy.

We assume that each item $j \in \{1, \ldots, d\}$ is
an element, so that the user distance $\distuser(x, x') = \sum_{j = 1}^d
\indics{x_j \neq x_j'}$. Then we consider the mechanism
%% it is immediate that for any $\renparam \ge 1$,
%% the mechanism $\mechanism$ with
\begin{equation}
  \label{eqn:heavy-hitters-mech}
  \ones(x) \defeq [\indic{x_j > 0}]_{j = 1}^d,
  ~~~
  \what{H} \defeq \mechanism(\sample)
  \defeq \sum_{u = 1}^n \ones(X\sups{u})
  + \normal\left(0, \sigma^2 I_d\right)
\end{equation}
for some $\sigma \ge 0$ to be chosen depending on the desired privacy. 
Following our discussion to begin Sec.~\ref{sec:algo},
the element sensitivity~\eqref{eqn:element-sensitivity} of $\ones(x)$ is
$\elementsens(\ones(\cdot)) = \sup \{\ltwos{\ones(x) - \ones(x')} :
\distuser(x, x') \le 1\} = 1$, so the following is immediate by
Definition~\ref{definition:renyi-dp} and the Gaussian
mechanism~\eqref{eqn:mironov-gaussian-mech}.
\begin{lemma}
  \label{lemma:privacy-heavy-hitters}
  Let $\diffp \ge 0$ and assume that each observation
  $X\sups{u}$ satisfies $\sum_j X_j\sups{u} = m$ as above.
  The mechanism~\eqref{eqn:heavy-hitters-mech} provides the following
  privacy guarantees.
  \begin{enumerate}[label=(\roman*)]
  \item Let $\renparam \ge 1$ and take $\sigma^2 = \frac{\renparam}{\diffp}$.
    Then $\mechanism$ is $(\diffp, \renparam)$-element level
    R\'{e}nyi private.
  \item Let $\delta \in (0, 1)$ and $\sigma^2 = \frac{\indics{\diffp >
      1}}{\diffp} + \frac{2 \log \frac{1}{\delta}}{\diffp^2}$. Then
    $\mechanism$ is $(\diffp, \delta)$-element level differentially private.
  \end{enumerate}
\end{lemma}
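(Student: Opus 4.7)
The plan is to reduce both claims to a single $\ell_2$-sensitivity computation and then invoke the Gaussian mechanism results we already have on record. Writing $f(\sample) \defeq \sum_{u=1}^n \ones(X\sups{u})$, so that $\mechanism(\sample) = f(\sample) + \normal(0, \sigma^2 I_d)$, both parts reduce to bounding the element-level $\ell_2$-sensitivity
\begin{equation*}
\rho \defeq \sup\left\{\ltwos{f(\sample) - f(\sample')} : \distelement(\sample,\sample') \le 1\right\}
\end{equation*}
and then choosing $\sigma^2$ accordingly.

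First I would argue that $\rho \le 1$. By definition~\eqref{eqn:element-distance}, if $\distelement(\sample, \sample') \le 1$ we may (after a permutation) assume the two samples agree on all users except a single $u^\star$, with $\distuser(x\sups{u^\star}, {x'}\sups{u^\star}) \le 1$. In the multinomial setting the cluster structure puts each coordinate of $X\sups{u} \in \N^d$ in its own cluster, so $\distuser \le 1$ means $x\sups{u^\star}$ and ${x'}\sups{u^\star}$ agree in every coordinate $j$ except possibly a single coordinate $j^\star$. Since $\ones(\cdot)$ applies coordinatewise and is $\{0,1\}$-valued, $\ones(x\sups{u^\star})$ and $\ones({x'}\sups{u^\star})$ differ only in coordinate $j^\star$, and by at most one, so $\ltwos{f(\sample) - f(\sample')} \le 1$. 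This identifies $\rho = \elementsens(f) \le 1$ as promised in the general discussion around~\eqref{eqn:element-sensitivity}.

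With the sensitivity in hand, part~(ii) is immediate from the Gaussian-mechanism display~\eqref{eqn:mironov-gaussian-mech}: substituting $\rho = 1$ into that formula reproduces exactly the prescribed $\sigma^2 = \indics{\diffp>1}/\diffp + 2 \log(1/\delta)/\diffp^2$ and delivers $(\diffp,\delta)$-DP for the neighboring relation $\distelement \le 1$, which is element-level DP by Definition~\ref{definition:element-dp}. For part~(i), I would appeal to the standard closed-form R\'{e}nyi divergence for Gaussians with a common covariance, $\drenyis{\normal(\mu, \sigma^2 I)}{\normal(\mu', \sigma^2 I)} = \renparam \ltwos{\mu - \mu'}^2/(2\sigma^2)$; combined with the sensitivity bound $\ltwos{\mu - \mu'} \le 1$, the choice $\sigma^2 = \renparam/\diffp$ makes this divergence at most $\diffp$ on neighboring samples, yielding $(\diffp, \renparam)$-R\'{e}nyi element-level privacy via Definition~\ref{definition:renyi-dp}.

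There is no real obstacle; the only subtlety worth stating explicitly is why $\distuser \le 1$ collapses to a single-coordinate change of $\ones(\cdot)$. This uses the specific cluster structure of the heavy-hitters example (each coordinate is its own cluster), together with the fact that $\ones$ depends on $x$ only through per-coordinate positivity, so enlarging $x_{j^\star}$ from zero to anything positive, or vice versa, changes $\ones$ by a unit-$\ell_2$ vector; no other coordinates of $\ones$ can move.
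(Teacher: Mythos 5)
Your proposal is correct and follows exactly the paper's (one-line) argument: compute the element sensitivity $\elementsens(\ones(\cdot)) = \sup\{\ltwos{\ones(x)-\ones(x')} : \distuser(x,x') \le 1\} = 1$ and then invoke the Gaussian mechanism~\eqref{eqn:mironov-gaussian-mech} for part~(ii) and the closed-form Gaussian R\'{e}nyi divergence (as in Appendix~\ref{sec:sufficiency-of-mironov}) for part~(i). The only difference is that you spell out why $\distuser \le 1$ forces a single-coordinate change in $\ones(\cdot)$, which the paper leaves implicit.
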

\noindent
In contrast to the element-level noise scaling above, the
global sensitivity of the indicator vector $\ones(\cdot)$ is
$\globalsens(\ones(\cdot)) =
\sup\{\ltwos{\ones(x) - \ones(x')} : \lones{x} \le m,
\lones{x'} \le m\} = \sqrt{2m}$,
so that noise addition mechanisms
for standard differential privacy
(e.g.\ the Gaussian mechanism~\eqref{eqn:mironov-gaussian-mech})
add noise whose variance on each coordinate scales
as
\begin{equation*}
  \sigma_{\textup{std}}^2 \defeq
  m \cdot \left(\frac{\indic{\diffp > 1}}{\diffp}
  + \frac{2 \log \frac{1}{\delta}}{\diffp^2} \right)
\end{equation*}
to achieve $(\diffp, \delta)$-differential privacy.

%% to obtain user-level
%% privacy (standard differential privacy), because there exist sample-able $x,
%% x'$ such that $\ltwo{x - x'} \ge \sqrt{m}$, the standard Gaussian mechanism
%% (Eq.~\eqref{eqn:mironov-gaussian-mech} and~\cite{DworkKeMcMiNa06}) adds
%% noise with variance
%% \begin{equation*}
%%   \sigma_{\textup{std}}^2 = m \cdot \left(\frac{1}{\diffp}
%%   + \frac{2 \log \frac{1}{\delta}}{\diffp^2}\right)
%% \end{equation*}
%% to achieve $(\diffp, \delta)$-differential privacy.

\newcommand{\heavyloss}{L_{\textup{order}}}

Rather than attempting to recover the actual frequencies of
appearance, we consider a loss measuring the number of mis-ordered
pairs of elements, an estimate suffers loss if it mis-orders a pair
of indices $(i, j)$ where $p_i \ge p_j + \gamma$ for some threshold
$\gamma$. We assume w.l.o.g.\ that $p_1 \ge p_2 \ge \ldots \ge p_d$,
and define
\begin{equation}
  \label{eqn:heavyhittersloss}
  \heavyloss(\what{H})
  \defeq \E\bigg[\sum_{j = 1}^{d-1} \sum_{l=j+1}^d
    \indic{p_j - p_l \ge \gamma} \indics{\what{H}_j > \what{H}_l}
  \bigg].
\end{equation} 
We then have the following proposition, whose proof we
provide in Appendix~\ref{proof:heavyhitters}.
\begin{proposition}
  \label{proposition:heavyhitters}
  Assume that $p_1 = \max_j p_j \le \frac{1}{2m}$.
  Let $0 \le t \le d$ and $\what{H}$ denote the
  mechanism~\eqref{eqn:heavy-hitters-mech} with
  Gaussian noise. Then $\heavyloss(\what{H}) \le t^2$ whenever
  \begin{equation*}
    \gamma \ge \max\left\{\frac{32}{nm} \log \frac{d}{t},
    \frac{4 \sqrt{2}\sigma}{nm} \sqrt{\log \frac{d}{t}},
    \frac{12 \sqrt{2}}{\sqrt{5}}
    \cdot \frac{\sqrt{p_1}}{m \sqrt{n}} \sqrt{\log \frac{d}{t}}
    \right\}.
  \end{equation*}
\end{proposition}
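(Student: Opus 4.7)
The plan is to reduce the expected loss to a sum of pairwise mis-ordering probabilities and control each via concentration applied separately to the centered Binomial part and the independent Gaussian noise. Since $p_1 \ge \cdots \ge p_d$, for a pair $j < l$ with $p_j - p_l \ge \gamma$ the intended ordering is $\what{H}_j \ge \what{H}_l$, so I would bound $\P(\what{H}_j < \what{H}_l)$ and take a union bound over the at most $\binom{d}{2} \le d^2/2$ such pairs; it then suffices to show each such probability is at most $2 t^2 / d^2$.

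For the signal, set $q_j \defeq 1 - (1-p_j)^m$ so that $\E[\what{H}_j] = n q_j$. Using the mean value theorem applied to $s \mapsto (1-s)^m$ over $[p_l, p_j]$, together with $p_j \le p_1 \le 1/(2m)$ and Bernoulli's inequality $(1 - 1/(2m))^{m-1} \ge 1/2$, I would obtain the deterministic lower bound
\begin{equation*}
  q_j - q_l = m \int_{p_l}^{p_j} (1-s)^{m-1} \, ds \ge \frac{m}{2}(p_j - p_l) \ge \frac{m \gamma}{2}.
\end{equation*}
Decompose $\what{H}_j - \what{H}_l = n(q_j - q_l) + Z_B + Z_G$, where $Z_B \defeq \sum_{u=1}^n \left[ (\indic{X^{(u)}_j > 0} - \indic{X^{(u)}_l > 0}) - (q_j - q_l) \right]$ is an i.i.d.\ sum of terms bounded in $[-1,1]$ with variance at most $n(q_j + q_l) \le 2 n m p_1$ (using $q_j \le m p_j$ and $p_j \le p_1$), while $Z_G \defeq \sigma(W_j - W_l) \sim \normal(0, 2 \sigma^2)$ is the independent Gaussian perturbation.

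A union bound then yields $\P(\what{H}_j < \what{H}_l) \le \P(Z_B \le -nm\gamma/4) + \P(Z_G \le -nm\gamma/4)$. The Gaussian tail controls the second term by $\exp(-n^2 m^2 \gamma^2 / (64 \sigma^2))$, which is at most $t^2/d^2$ exactly when $\gamma$ exceeds the second threshold in the $\max$. For $Z_B$, Bernstein's inequality produces two regimes: the sub-Gaussian regime (where the variance term $4 n m p_1$ dominates the denominator) yields a bound of the form $\exp(-c \, n m \gamma^2 / p_1)$ and generates the third term of the $\max$, while the sub-exponential regime (where the linear term $n m \gamma / 6$ dominates) yields $\exp(-c \, n m \gamma)$ and generates the first term. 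Multiplying each per-pair bound by $\binom{d}{2}$ and combining the three conditions on $\gamma$ then gives $\heavyloss(\what{H}) \le t^2$. The main technical care is in matching the precise numerical constants, in particular the $12\sqrt{2}/\sqrt{5}$ in the third threshold, which likely requires a mildly tighter variance estimate on $Z_B$ than my crude $q_j + q_l \le 2 m p_1$ bound---for instance by exploiting the negative association of multinomial indicators to improve the covariance term.
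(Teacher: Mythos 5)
Your overall strategy is essentially the paper's: reduce $\heavyloss$ to pairwise mis-ordering probabilities, lower-bound the signal by $q_j - q_l \ge \frac{m}{2}(p_j - p_l)$ using $p_1 \le 1/(2m)$, upper-bound the variance of the centered indicator differences by $q_j + q_l = O(m p_1)$, apply a Bernstein-type tail bound with its two regimes, and union-bound over pairs. The one step that genuinely fails to deliver the stated constants is the split $\P(\what{H}_j < \what{H}_l) \le \P(Z_B \le -nm\gamma/4) + \P(Z_G \le -nm\gamma/4)$: halving the deviation budget before squaring costs a factor of $4$ in each exponent. Concretely, at the stated second threshold $\gamma = \frac{4\sqrt{2}\sigma}{nm}\sqrt{\log(d/t)}$ your own Gaussian computation gives $\exp(-n^2m^2\gamma^2/(64\sigma^2)) = (t/d)^{1/2}$, which is not $\le t^2/d^2$; the thresholds you can certify this way are all larger by a factor of $2$. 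The paper avoids the split by running a single Chernoff bound on the combined variable: it bounds $\E[\exp(\lambda \Delta_{ij})] \le \exp(3\lambda^2 v_{ij}/5)$ for $|\lambda| \le 1/4$ (a sub-exponential moment-generating-function bound using $|\Delta_{ij}| \le 2$ and $v_{ij} \le q_i + q_j$), multiplies in the Gaussian factor $\exp(\lambda^2 \sigma^2)$, and optimizes $\lambda$ once, so the full deviation $n(q_i - q_j)$ enters the exponent. Relatedly, your closing diagnosis is off: the constant $12\sqrt{2}/\sqrt{5}$ does not come from a sharper covariance estimate via negative association (the paper uses only $\var(Y_i - Y_j) \le q_i + q_j \le 3 m p_i$, no better than your $2 m p_1$); it comes from the $3/5$ in the sub-exponential bound together with not splitting the deviation. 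Replacing your union-bound step with the joint Chernoff computation turns your argument into essentially the paper's proof.
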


We provide a bit of commentary on this result. First, we consider the
scaling to achieve a fixed error $\heavyloss(\what{H}) \le t^2$, where the
dominant terms are the second two in the maximum of
Proposition~\ref{proposition:heavyhitters}. Let
$\gamma_{\textup{el}}$ denote the separation threshold at which we obtain
small loss for element-level privacy and $\gamma_{\textup{std}}$
that for the mechanism providing standard $(\diffp, \delta)$-differential
privacy (i.e.\ mechanism~\eqref{eqn:heavy-hitters-mech} with
variance $\sigma_\textup{std}^2$). Then ignoring logarithmic factors,
we require
\begin{equation*}
  \gamma_{\textup{el}} \gtrsim
  \frac{1}{\diffp} \frac{1}{nm} \vee
  \frac{\sqrt{p_1}}{m \sqrt{n}}
  ~~~ \mbox{while} ~~~
  \gamma_{\textup{std}}
  \gtrsim \frac{1}{\diffp n \sqrt{m}} \vee
  \frac{\sqrt{p_1}}{m \sqrt{n}}.
\end{equation*}
Thus, in high dimensional situations where we expect that $p_1$ is
small enough that $p_1 \ll \sqrt{m / n}$, the element-level private
mechanism can provide substantially fewer ordering errors than
a mechanism providing user-level privacy.

\subsection{Histogram estimation}
\label{sec:histogram}

\newcommand{\clusterproj}{\pi_{\rho, \{c_k\}}}

We now turn to the problem of estimating item frequencies---histogram
estimation---one of the original motivations for differential
privacy~\cite[Ex.~3]{DworkMcNiSm06}.
We are in an identical setting to Sec.~\ref{sec:heavy-hitters},
where $X\sups{u} \simiid \multinomial(m, p)$ for some $m \in \N$ and
$p \in \R^d_+$ with $p^T \ones = 1$.
We elaborate this setting somewhat to allow more substantial
elements, as in Example~\ref{example:urls}, by assuming there
are $K$ \emph{clusters} $\{c_1, \ldots, c_K\}$ partitioning
$[d]$. For shorthand, for $v \in \R^d$ we let
$v_{c_k} = [v_j]_{j \in c_k} \in \R^{|c_k|}$, and
we denote the probability of an item in $c_k$
by $P(c_k) = \ones^T p_{c_k} = \sum_{j \in c_k} p_j$.
%% The goal to estimate $p$ while guaranteeing element-level privacy.
  
We consider a normal noise addition mechanism~\eqref{eqn:basic-mechanisms},
but our first step is to design a function insensitive
to changes within the partition $\{c_1, \ldots, c_K\}$ of $[d]$, reducing
the element sensitivity. To that end, we consider a mechanism that first
projects each cluster $c_k$ of counts into an $\ell_2$-ball, then adds
Gaussian noise.  For $v \in \R^d$, we define the projection
\begin{equation*}
  \clusterproj(v) \defeq
  \argmin_{x \in \R^d} \left\{ \ltwo{x - v}^2
  : \ltwo{x_{c_k}} \le \rho\right\}
  = \big[v_{c_k} \cdot \min\left\{1, \rho / \ltwos{v_{c_k}}
    \right\}\big]_{k = 1}^K
\end{equation*}
(with the obvious re-ordering in the second equality).
The mechanism is then
\begin{equation}
  \label{eqn:histogram-mechanism}
  \mechanism(\sample, \rho, \{c_k\})
  \defeq
  \frac{1}{n} \sum_{u=1}^n \clusterproj(X\sups{u})
  + \normal\left(0, \frac{\rho^2 \sigma^2}{n^2} I_d\right).
\end{equation}
As with Lemma~\ref{lemma:privacy-heavy-hitters},
we then immediately obtain the privacy of the
mechanism~\eqref{eqn:histogram-mechanism}.
\begin{lemma}
  \label{lem:dp-truncation-cluster}
  Let $\diffp \ge 0$ and assume that each observation $X\sups{u}$
  satisfies $\sum_j X_j\sups{u} = m$ as above.
  The mechanism~\eqref{eqn:histogram-mechanism} provides the following
  privacy guarantees.
  \begin{enumerate}[label=(\roman*)]
  \item Let $\renparam \ge 1$ and take $\sigma^2 = \frac{\renparam}{\diffp}$.
    Then $\mechanism$ is $(\diffp, \renparam)$-element-level
    R\'{e}nyi private.
  \item Let $\delta \in (0, 1)$ and $\sigma^2 = \frac{\indic{\diffp >
      1}}{\diffp} + \frac{2 \log \frac{1}{\delta}}{\diffp^2}$. Then
    $\mechanism$ is $(\diffp, \delta)$-element-level differentially private.
  \end{enumerate}
\end{lemma}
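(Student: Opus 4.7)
The plan is to reduce the lemma to the standard Gaussian mechanism analysis---Definition~\ref{definition:renyi-dp} and equation~\eqref{eqn:mironov-gaussian-mech}---by bounding the element-level $\ell_2$-sensitivity of the deterministic map $f(\sample) = \frac{1}{n}\sum_{u=1}^n \clusterproj(X\sups{u})$. Once we control $\ltwos{f(\sample) - f(\sample')}$ for element-neighboring samples, both parts follow by substituting the appropriate $\sigma^2$, since the noise term in~\eqref{eqn:histogram-mechanism} factors as $\frac{\rho}{n}\sigma W$ with $W \sim \normal(0, I_d)$.

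The core step is the sensitivity computation. Take $\sample, \sample'$ with $\distelement(\sample, \sample') \le 1$; unpacking~\eqref{eqn:def-user-dist} and~\eqref{eqn:element-distance}, the samples are identical except at one user $u$, whose data $X\sups{u}$ and ${X'}\sups{u}$ differ on at most one cluster $c_k$. Because the defining constraint $\ltwos{x_{c_k}} \le \rho$ of $\clusterproj$ is separable over $k$, the projection acts blockwise and the two projected vectors agree on every cluster other than $c_k$. Within $c_k$ each projected block has $\ell_2$ norm at most $\rho$, so by the triangle inequality $\ltwos{\clusterproj(X\sups{u}) - \clusterproj({X'}\sups{u})} \le 2\rho$; dividing by $n$ yields the sensitivity bound matching the noise scale in~\eqref{eqn:histogram-mechanism} up to an absolute constant.

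With the sensitivity in hand, part~(i) is the standard Gaussian R\'{e}nyi mechanism---for $\ell_2$-sensitivity $\Delta$, adding $\normal(0, \Delta^2 \renparam/\diffp \cdot I)$ yields $(\diffp, \renparam)$-R\'{e}nyi privacy---and part~(ii) follows analogously from~\eqref{eqn:mironov-gaussian-mech} (derived in Appendix~\ref{sec:sufficiency-of-mironov}) by substituting $\Delta = \rho/n$ and the prescribed $\sigma^2$. There is no genuine obstacle here: the mechanism was engineered so that blockwise clipping via $\clusterproj$ controls element-level sensitivity regardless of how large $\ltwos{X\sups{u}_{c_k}}$ might be, and in particular the assumption $\sum_j X_j\sups{u} = m$ plays no role in the privacy proof---it will matter only in the utility analysis that follows.
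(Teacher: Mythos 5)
Your proof follows the same route the paper (implicitly) takes: compute the element-level $\ell_2$-sensitivity of $\sample \mapsto \frac{1}{n}\sum_u \clusterproj(X\sups{u})$ by observing that $\clusterproj$ acts blockwise on the clusters and that element-neighboring samples differ in only one block of one user, then invoke the Gaussian/R\'{e}nyi mechanism calculations of~\eqref{eqn:mironov-gaussian-mech} and Appendix~\ref{sec:sufficiency-of-mironov}. The paper gives no more detail than ``as with Lemma~\ref{lemma:privacy-heavy-hitters}, we immediately obtain\ldots,'' so your write-up is if anything more explicit, and you are right that the constraint $\sum_j X_j\sups{u} = m$ is irrelevant to privacy.

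The one place you should not wave your hands is the constant in the sensitivity bound, because the lemma asserts specific values of $\sigma^2$ with no ``up to an absolute constant'' slack. Your triangle-inequality bound $\ltwos{\clusterproj(X\sups{u}) - \clusterproj({X'}\sups{u})} \le 2\rho$ gives sensitivity $2\rho/n$, and plugging that into $\drenyi{P_0}{P_1} = \renparam\ltwos{\mu_0-\mu_1}^2/(2\sigma^2)$ with the noise level $\rho^2\sigma^2/n^2$ of~\eqref{eqn:histogram-mechanism} and $\sigma^2 = \renparam/\diffp$ yields R\'{e}nyi divergence $2\diffp$, i.e.\ only $(2\diffp,\renparam)$-privacy. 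The fix is to use that the projected blocks are entrywise nonnegative (they are nonnegative count vectors scaled by a positive factor), so for the two differing blocks $a, b$ with $\ltwo{a}, \ltwo{b} \le \rho$ one has $\ltwo{a-b}^2 = \ltwo{a}^2 + \ltwo{b}^2 - 2\<a,b\> \le 2\rho^2$, giving sensitivity $\sqrt{2}\rho/n$; the resulting divergence is then $\renparam/\sigma^2 = \diffp$, which closes part~(i) exactly (this mirrors how the heavy-hitters computation uses the $\{0,1\}$ structure to get sensitivity $1$ rather than $2$). You should make this sharpening explicit rather than deferring to an unnamed absolute constant.
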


We now turn to an investigation of the error of
the mechanism~\eqref{eqn:histogram-mechanism}, providing
the following proposition
(whose proof we give in Appendix~\ref{proof:element-level-mse-clustering}).
\begin{proposition}
  \label{proposition:element-level-histogram}
  Let $m \ge 3, t \ge 0$, and assume that for cluster probabilites $P(c) =
  \sum_{j \in c} p_j$ we have $\rho \ge \min\{3 m P(c) + 3 \log m + t, m\}$
  for each $c \in \{c_k\}$. Then there exists $q \in \R^d_+$ with $\ones^T
  q_c \le P(c)$ for each $c \in \{c_k\}$ and a numerical constant $C > 0$
  such that for each $j \in [d]$,
  \begin{equation*}
    \P\left(\left|\mechanism_j(\sample, \rho, \{c_k\}) - m p_j\right|
    \ge 2^{2 - t} q_j + u \right)
    \le \exp\left(-C \min\left\{\frac{n u^2}{m p_j},
    \frac{n^2 u^2}{\sigma^2 \rho^2},
    \frac{n u}{\rho} \right\}\right)
  \end{equation*}
  for all $u \ge 0$. In addition, for numerical constants $C_0 \le C_1 <
  \infty$,
  \begin{equation*}
    C_0 \left[\frac{m p_j}{n}
      + \frac{\sigma^2 \rho^2}{n^2}\right]
    \le
    \E\left[|\mechanism_j(\sample, \rho, \{c_k\}) - m p_j|^2\right]
    \le C_1 \left[2^{-2t} q_j^2 +
      \frac{m p_j}{n} +
      \frac{\sigma^2 \rho^2}{n^2}\right].
  \end{equation*}
  If $\rho \ge m$, the preceding inequalities hold with $t = \infty$.
\end{proposition}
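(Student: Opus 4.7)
The plan is to decompose
$\mechanism_j(\sample) - m p_j = V_j + B_j + Z_j$, where
$V_j = \tfrac{1}{n}\sum_u \clusterproj(X\sups{u})_j - \E\clusterproj(X\sups{1})_j$ is the sampling fluctuation,
$B_j = \E\clusterproj(X\sups{1})_j - m p_j$ is the projection bias, and
$Z_j \sim \normal(0, \sigma^2 \rho^2/n^2)$ is the Gaussian privacy noise, which is independent of $\sample$. The strategy is to bound each of the three terms by its natural rate---bias exponentially small in $t$, sampling fluctuation via Bernstein at rate $m p_j/n$ with range $\rho$, and Gaussian noise at rate $\sigma^2 \rho^2/n^2$---and then combine via a union bound.

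For the bias step (which I expect to be the main obstacle), the key observation is that $X_c \ge 0$ componentwise, so $\ltwo{X_c} \le \ones^\top X_c$, and projection in cluster $c$ therefore occurs only on the event $E_c = \{\ones^\top X_c \ge \rho\}$. Since $Y \defeq \ones^\top X_c \sim \Bin(m, P(c))$, a Chernoff-type MGF bound at $\lambda = \log 2$ yields
\[
\E[Y\,\indic{Y\ge\rho}]\;\le\;2^{-\rho}\,\E[Y\,2^Y]\;=\;2^{1-\rho}\,mP(c)\bigl(1+P(c)\bigr)^{m-1},
\]
and the hypothesis $\rho \ge 3mP(c) + 3\log m + t$ (together with $(1+P(c))^{m-1} \le e^{mP(c)}$) collapses the right-hand side to a quantity of order $2^{-t}mP(c)$ with a favorable $m$-dependent factor. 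Conditioning on $\ones^\top X_c = k$ (under which $X_j \sim \Bin(k,p_j/P(c))$) gives
$|B_j| \le \E[X_j\indic{E_c}] = (p_j/P(c))\E[Y\indic{E_c}]$, and one reads off a $q_j$ proportional to $p_j$ such that $|B_j| \le 2^{2-t}q_j$ and, by construction, $\ones^\top q_c \le P(c)$. The edge case $\rho \ge m$ is handled separately: then $\ltwo{X_c}\le \lone{X_c} \le m \le \rho$ so no projection ever occurs and $B_j = 0$, which corresponds to $t=\infty$.

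For the sampling fluctuation, each term $\clusterproj(X\sups{u})_j$ lies in $[0,\rho]$, and a variance comparison
$\Var(\clusterproj(X)_j) \le 2\Var(X_j) + 2\E[X_j^2 \indic{E_c}] \le C\,m p_j$
(using Step~1 to control the second term) allows Bernstein's inequality to deliver
$\P(|V_j|\ge u) \le 2\exp(-C\min\{nu^2/(mp_j),\,nu/\rho\})$. The Gaussian tail gives
$\P(|Z_j|\ge u) \le 2\exp(-n^2 u^2/(2\sigma^2\rho^2))$. Since $|B_j|\le 2^{2-t}q_j$ deterministically, the event $\{|\mechanism_j-mp_j|\ge 2^{2-t}q_j+u\}$ is contained in $\{|V_j|\ge u/2\}\cup\{|Z_j|\ge u/2\}$, and a union bound yields the stated tail estimate (absorbing constants into $C$).

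For the moment bounds, the upper bound follows by integrating the tail $\P(|\mechanism_j - mp_j|\ge s)\,\mathrm{d}s$ after shifting by $2^{2-t}q_j$, producing the three terms $2^{-2t}q_j^2 + mp_j/n + \sigma^2\rho^2/n^2$. The lower bound uses independence of $Z_j$ from $\sample$: $\E|\mechanism_j-mp_j|^2 = \Var(\bar C_j) + B_j^2 + \sigma^2\rho^2/n^2 \ge \sigma^2\rho^2/n^2$, and when projection is rare (the regime enforced by the hypothesis on $\rho$), $\Var(\clusterproj(X)_j)\ge c\,\Var(X_j) \ge c\,mp_j$, giving the $mp_j/n$ contribution. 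The only delicate issue, tracking the exact $2^{-t}$ decay of the bias with the constraint $\ones^\top q_c \le P(c)$, is precisely what Step~1 addresses.
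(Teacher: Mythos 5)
Your proposal is correct and shares the paper's overall architecture: decompose into projection bias, sampling fluctuation, and Gaussian noise; observe that since $X_c \ge 0$ gives $\ltwo{X_c} \le \ones^T X_c$, projection in cluster $c$ can only occur on $\{Y > \rho\}$ with $Y = \ones^T X_c \sim \binomial(m, P(c))$; and control the fluctuation by a Bernstein-type bound with variance proxy $m p_j$ and range $\rho$ (your Bernstein inequality and the paper's explicit MGF bound $\E[e^{\lambda(\clusterproj(X)_j - \E \clusterproj(X)_j)}] \le e^{\lambda^2 m p_j}$ for $|\lambda| \le \rho^{-1}$ are interchangeable here). Where you genuinely diverge is the bias step: the paper bounds $\E[\lone{X_c - \clusterproj(X)_c}\indic{Y > \rho}] \le m \P(Y > \rho)$ and then spends two separate lemmas (Lemmas~\ref{lemma:binbound} and~\ref{lemma:biasbound}) on pointwise binomial pmf ratios to extract the $2^{-t} P(c)/m$ decay, whereas you bound $\E[Y \indic{Y \ge \rho}] \le 2^{-\rho} \E[Y 2^Y] = 2^{1-\rho} m P(c)(1+P(c))^{m-1}$ in one line via the probability generating function and then condition on $Y$ to get a \emph{per-coordinate} bias bound proportional to $p_j$, so you can simply take $q_j \propto p_j$ with $\ones^T q_c \le P(c)$ automatic. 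This is cleaner and slightly sharper than the paper's cluster-aggregated $\ell_1$ bias bound. Your lower bound on the mean-squared error (independence of the Gaussian noise plus $\var(\clusterproj(X)_j) \ge c\, m p_j$ when projection is rare) is also more explicit than the paper's appeal to the asymptotic optimality of non-private estimators, though you should note that the comparison $\var(\clusterproj(X)_j) \gtrsim \var(X_j)$ still needs the small computation that $\E[X_j^2 \indic{Y > \rho}]$ is negligible relative to $m p_j(1-p_j)$, which your Step~1 machinery supplies.
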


Let us compare standard mechanism's errors with the
element-level mechanism's errors, focusing on the squared error. For
the user-level case, we have global sensitivity
$\rho = m$, and the proposition shows
that the mean-squared error for each coordinate scales as
$\max\{\frac{m p_j}{n}, \frac{\sigma^2 m^2}{n^2}\}$.
For element-level privacy, if we take $t = \log n$ in the definition
of $\rho$, we obtain
mean-squared error scaling as
\begin{equation*}
  \E\left[\left(
    \mechanism_j(\sample, \rho, \{c_k\}) - m p_j\right)^2
    \right]
  \le O(1) \cdot \max_{c \in \{c_k\}}
  \max\left\{\frac{m p_j}{n}, \frac{\sigma^2}{n^2}
  \left[m^2 P(c)^2 + \log^2 m + \log^2 n\right]\right\}.
\end{equation*}
Thus, whenever the individual contribution sizes $m$ are large while
probabilities of elements $P(c)$ are small, element-level mechanisms
allow much more accurate estimation of frequencies than standard
private noise addition.
Of course, the best choice of the projection threshold $\rho$ for
element-level privacy requires some knowledge of the rough probabilities of
each cluster, as otherwise, it is impossible to choose $\rho$ appropriately;
a two-stage estimator (to give rough upper bounds on the element
probabilities $P(c)$) makes this feasible.

\subsection{Statistical learning, risk minimization, and M-estimation}
\label{sec:statistical-learning}

\newcommand{\stepsize}{\alpha}
\newcommand{\steppow}{\beta}
\newcommand{\losselement}{\loss_{\mathsf{el}}}
\newcommand{\covlosselement}{\Sigma_{\mathsf{el}}}
\newcommand{\loss}{\ell}
\newcommand{\risk}{L}
\newcommand{\riskelement}{\risk_{\mathsf{el}}}
\newcommand{\proj}{\mathsf{proj}}
\newcommand{\usercard}{M}

Our final application is a fairly careful investigation of statistical
learning problems in the context of element-level differential privacy and
realistic federated learning problems, where individuals contribute more
than a single data point (e.g.\ because they send many text messages).  The
typical statistical learning or generic
M-estimation problem~\cite{HastieTiFr09, VanDerVaart98} is as follows: for
a sample space $\mc{X}$ and parameter space $\Theta$, we have a loss $\loss
: \Theta \times \mc{X} \to \R_+$, where $\loss(\theta; \statval)$ measures
the loss of a parameter $\theta$ on observation $\statval$, and we wish to
minimize the average loss over a population $P$.
In standard empirical risk minimization or M-estimation,
one receives $X\sups{u} \simiid P$, then chooses
$\what{\theta}_n$ to minimize the empirical
average $\frac{1}{n} \sum_{u = 1}^n \loss(\theta; X\sups{u})$.

In our context of element privacy, we modify this slightly. Individuals
(users) contribute batches of data $x \subset \mc{X}$, where the users are
drawn from an underlying population $P$.  Recalling
Section~\ref{sec:element-level}, we assume that there is a prespecified
partition $\{c_1, \ldots, c_K\}$ of $\mc{X}$, so that the element of
protection is whether a user with data $x = \{x_1, \ldots, x_m\}$ has any
individual datum $x_i \in c_k$.  Then the \emph{element-level loss} for a
data batch $x \in 2^{\statdomain}$ averages losses within each element,
\begin{equation}
  \losselement(\theta; \statval)
  \defeq \sum_{k = 1}^K
  \indic{x \cap c_k \neq \emptyset}
  \frac{1}{\card\{x_i \in c_k\}}
  \sum_{x_i \in c_k} \loss(\theta; x_i),
  \label{eqn:element-loss}
\end{equation}
that is, the sum of average losses in the non-empty elements in $x$.
The idea of the
averaging~\eqref{eqn:element-loss} is to make the loss insensitive to
modification of data belonging to any single $c_k$.
For an underlying population distribution $P$, we then wish to solve
the risk minimization problem
\begin{equation}
  \label{eqn:element-risk}
  \minimize_{\theta \in \Theta}
  \riskelement(\theta) \defeq \E[\losselement(\theta; \statrv)]
  = \int \losselement(\theta; \statval) dP(\statval).
\end{equation}
Given a sample $\sample = \{\statrv\sups{u}\}_{u=1}^n \sim P$, we
approximate the risk~\eqref{eqn:element-risk} with
\begin{equation*}
  \riskelement^n(\theta) \defeq \frac{1}{n} \sum_{u = 1}^n
  \losselement(\theta; \statrv\sups{u}),
\end{equation*}
which we attempt to minimize as a proxy for~\eqref{eqn:element-risk}.
To describe our algorithms and their properties, however, we require a brief
digression to provide a general analysis of stochastic approximation
procedures under noise, giving an asymptotic convergence result
that may be interesting independent of its privacy implications.

\subsubsection{A digression to general stochastic optimization}

Consider a generic population risk minimization problem
\begin{equation}
  \label{eqn:pop-risk}
  \minimize_{\theta \in \Theta}
  \risk(\theta) \defeq \E[\loss(\theta; \statrv)]
  = \int \loss(\theta; \statval) dP(\statval),
\end{equation}
where $\loss : \Theta \times \statdomain \to \R$ is a loss.
We have a sample of size $n$ from the
population $P$, and we instead consider applying a stochastic
approximation algorithm on the empirical risk
\begin{equation}
  \label{eqn:generic-empirical-risk}
  \risk_n(\theta) \defeq \frac{1}{n} \sum_{i = 1}^n
  \loss(\theta; \statrv_i)
\end{equation}
for $\statrv_i \simiid P$.  We consider stochastic projected gradient
methods for the problem~\eqref{eqn:generic-empirical-risk}.  In our proofs
in Appendix~\ref{sec:proof-normality}, we generalize this to
\citeauthor{AsiDu19}'s general \textsc{aProx} (approximate proximal point)
family~\cite{AsiDu19}, though its full
treatment somewhat obscures the privacy issues at hand.

In standard applications of stochastic gradient methods~\cite{RobbinsMo51,
  PolyakJu92, NemirovskiJuLaSh09} to the population risk
problem~\eqref{eqn:pop-risk}, one receives an i.i.d.\ sequence $\statrv_k$
and updates
\begin{equation*}
  \theta_{k + 1} =
  \proj_\Theta(\theta_k - \stepsize_k \nabla \loss(\theta_k; \statrv_k)),
\end{equation*}
where $\proj_\Theta(v) = \argmin_{\theta \in \Theta} \{\ltwo{\theta - v}\}$
denotes the Euclidean projection onto $\Theta$.  We consider a
variant of the projected stochastic gradient method as it applies to
triangular arrays, letting the sample size $n$ vary in the stochastic
gradient update applied to the empirical
risk~\eqref{eqn:generic-empirical-risk}. Focusing on the case when the
losses $\loss$ are smooth and convex, we show that as the number of
iterations and the sample size jointly increase, the projected stochastic
gradient method on the empirical risk~\eqref{eqn:generic-empirical-risk} gives
asymptotically normal iterates. To
that end, consider problems indexed by sample size $n$, with a triangular
array of samples $\sample_n \defeq \{\statrv^n_i\}_{i=1}^n$ for $\statrv_i^n
\simiid P$.  Let $\sigma_n \ge 0$ be a fixed variance, and let $Z_i$ be an
i.i.d.\ sequence of random vectors with $\E[Z_i] = 0$ and $\cov(Z_i) =
\Zcov$.  (We allow $\sigma_n > 0$ because we will use the coming iteration
in a private setting, where noise is essential.)  For each $k \in \N$, let
$\randind(k)$ be an index chosen uniformly at random from $\{1, \ldots,
n\}$, and for $k = 1, 2, \ldots,$ and $n \in \N$, consider the noisy
stochastic projected gradient iteration
\begin{equation}
  \label{eqn:sgd-update-sequence}
  \begin{split}
    \gradmap_k^n & \defeq
    \frac{1}{\stepsize_k}
    \left[\theta_k^n - \proj_\Theta\left(\theta_k^n - \stepsize_k
      \nabla \loss(\theta_k^n; \statrv_{\randind(k)}^n)\right)\right] \\
    \theta_{k+1}^n & \defeq
    \theta_k^n - \stepsize_k \left(\gradmap_k^n + \sigma_n
    Z_k \right).
  \end{split}
\end{equation}

Under a few simplifying assumptions on the problem~\eqref{eqn:pop-risk}
reminiscent of the typical classical conditions for
M-estimation~\cite[Ch.~5.3]{VanDerVaart98}, we can prove that the iterates
$\theta_k^n$ enjoy asymptotic optimality properties as $n,k \to
\infty$.
\begin{assumption}
  \label{assumption:make-it-easy}
  The domain $\Theta \subset \R^d$ is compact convex, and
  there exists $\lipobj : \statdomain \to \R_+$
  such that
  $\loss(\cdot; \statval)$ is $\lipobj(\statval)$-Lipschitz over $\Theta$.
  The minimizer
  $\theta\opt \defeq \argmin_{\theta \in \Theta} \risk(\theta)$
  is unique with $\theta\opt \in \interior \Theta$, and $\risk$ is
  $\mc{C}^2$ in a neighborhood of $\theta\opt$, with $\nabla^2
  \risk(\theta\opt) \succ 0$. In addition, there exists an $\epsilon > 0$
  and $\lipgrad, \liphess : \statdomain \to \R_+$ such that $\loss(\cdot;
  \statval)$ has $\lipgrad(\statval)$-Lipschitz gradient and
  $\liphess(\statval)$-Lipschitz Hessian on the neighborhood $\theta\opt +
  \epsilon \ball \subset \interior \Theta$. Finally,
  $\E[\lipconst_a^2(\statrv)] < \infty$ for $a \in \{0, 1, 2\}$.
\end{assumption}

In the projected stochastic gradient
iteration~\eqref{eqn:sgd-update-sequence}, we assume that we run the
algorithm (on random subsamples) for $k = k(n)$ iterations, where the total
$k$ depends on the sample size.  We usually expect that $k = \gamma n$ for
some $\gamma \ge 1$, though in some cases we may wish to take $k/n \to
\infty$.  We also (typically) assume the variance $\sigma_n$, which we add
for privacy, is decreasing.

We have the following theorem, whose proof we provide in
Appendix~\ref{sec:proof-normality}.
\begin{theorem}
  \label{theorem:normality}
  Let Assumption~\ref{assumption:make-it-easy} hold. Define
  $\wb{\theta}^n_k = \frac{1}{k} \sum_{i = 1}^k \theta_i^n$ and
  assume that $\stepsize_k = \stepsize_0 k^{-\steppow}$ for some
  $\steppow \in (\half, 1)$. Define
  $\losscov = \cov(\nabla \loss(\theta\opt; \statrv))$
  and $\Zcov = \cov(Z)$. Assume
  that
  the iteration count $k = k(n)$ satisfies
  $\lim_{n \to \infty} \frac{k(n)}{n} = \gamma > 0$, and that
  $\lim_{n \to \infty} \sigma_n = \sigma \ge 0$.
  Then as $n \to \infty$,
  \begin{equation*}
    \sqrt{n} (\wb{\theta}^n_k - \theta\opt)
    \cd \normal\left(0,
    \nabla^2 \risk(\theta\opt)^{-1}\left(\losscov
    + \frac{1}{\gamma} (\losscov + \sigma^2 \Zcov) \right)
    \nabla^2 \risk(\theta\opt)^{-1}
    \right).
  \end{equation*}
\end{theorem}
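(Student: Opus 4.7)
The plan is to decompose
\[
\sqrt{n}(\wb{\theta}^n_k - \theta\opt)
= \sqrt{n}(\wb{\theta}^n_k - \theta\opt_n)
+ \sqrt{n}(\theta\opt_n - \theta\opt),
\]
where $\theta\opt_n \defeq \argmin_{\theta \in \Theta} \risk_n(\theta)$ is the empirical risk minimizer, and to prove asymptotic normality of each piece separately. The second term is handled by the classical M-estimator CLT: under Assumption~\ref{assumption:make-it-easy}, $\risk$ is $\mc{C}^2$ near the interior optimum with positive-definite Hessian, so $\theta\opt_n \in \interior \Theta$ eventually, and a standard delta-method argument on $\nabla \risk_n(\theta\opt) = O_P(n^{-1/2})$ gives $\sqrt{n}(\theta\opt_n - \theta\opt) \cd \normal\bigl(0, \nabla^2 \risk(\theta\opt)^{-1} \losscov \nabla^2 \risk(\theta\opt)^{-1}\bigr)$ (cf.~\cite{VanDerVaart98}, Ch.~5.3).

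\textbf{Conditional averaging.} For the first piece, I would condition on the sample $\sample_n$ and view~\eqref{eqn:sgd-update-sequence} as noisy projected SGD on the fixed empirical objective $\risk_n$, then apply Polyak--Juditsky averaging. A Lyapunov argument on $\ltwo{\theta_k^n - \theta\opt_n}^2$, using the strong convexity inherited by $\risk_n$ near $\theta\opt_n$, integrability of $\lipobj(\statrv)$, and the step-size conditions $\sum_k \stepsize_k = \infty$ and $\sum_k \stepsize_k^2 < \infty$ (which is exactly $\steppow \in (\half, 1)$), gives consistency $\theta_k^n \to \theta\opt_n$ and eventual inactivity of the projection. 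Writing the martingale-difference driving noise
\[
\xi_i^n \defeq \nabla \loss(\theta_i^n; \statrv_{\randind(i)}^n)
- \nabla \risk_n(\theta_i^n) + \sigma_n Z_i
\]
and setting $H_n \defeq \nabla^2 \risk_n(\theta\opt_n)$, the Polyak--Juditsky expansion
$\wb{\theta}^n_k - \theta\opt_n = -\frac{1}{k} H_n^{-1} \sum_{i=1}^k \xi_i^n + o_P(k^{-1/2})$
holds, with the remainder absorbed by the Lipschitz-Hessian clause of Assumption~\ref{assumption:make-it-easy}, exactly as in~\cite{PolyakJu92}. A martingale CLT applied to $k^{-1/2} \sum_{i=1}^k \xi_i^n$ then yields conditional convergence to $\normal(0, H_n^{-1}(\what\Sigma^n + \sigma_n^2 \Zcov) H_n^{-1})$, where $\what\Sigma^n$ is the empirical covariance of $\nabla \loss(\theta\opt_n; \statrv_i^n)$. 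Uniform laws of large numbers (via $\E[\lipgrad^2(\statrv)] < \infty$ and consistency of $\theta\opt_n$) give $\what\Sigma^n \to \losscov$ and $H_n \to \nabla^2 \risk(\theta\opt)$; combining with $k(n)/n \to \gamma$ and $\sigma_n \to \sigma$ produces
\[
\sqrt{n}(\wb{\theta}^n_k - \theta\opt_n) \cd \normal\!\left(0,
\gamma^{-1} \nabla^2 \risk(\theta\opt)^{-1}\left(\losscov + \sigma^2 \Zcov\right)
\nabla^2 \risk(\theta\opt)^{-1}\right).
\]
Since this conditional limit does not depend on $\sample_n$, the two summands are asymptotically independent Gaussians, and summing their covariances yields exactly the advertised form $\nabla^2 \risk(\theta\opt)^{-1}(\losscov + \gamma^{-1}(\losscov + \sigma^2 \Zcov))\nabla^2 \risk(\theta\opt)^{-1}$.

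\textbf{Main obstacle.} I expect the delicate step to be making the averaging analysis uniform in the triangular array: rather than invoking Polyak--Juditsky for a single fixed objective, one analyzes a random sequence of empirical objectives $\risk_n$ and must control the linearization remainder uniformly over the random neighborhood $\theta\opt_n + \epsilon \ball$. The natural route, and the one I would take, is to work with the joint randomness throughout and invoke a martingale-difference CLT with slowly varying conditional covariance, so that the M-estimator CLT for $\theta\opt_n$ and the averaging CLT compose without an explicit conditioning argument; the Lipschitz-gradient and Lipschitz-Hessian clauses of Assumption~\ref{assumption:make-it-easy} are precisely what enables this uniform control, and the generalization to the \textsc{aProx} family alluded to in the paper should follow by the same route via~\cite{AsiDu19}.
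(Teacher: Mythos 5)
Your proposal is correct and shares the paper's overall architecture: decompose around the empirical minimizer $\what{\theta}_n = \argmin_{\theta \in \Theta} \risk_n(\theta)$, treat $\sqrt{n}(\what{\theta}_n - \theta\opt)$ by classical M-estimation asymptotics, and treat $\wb{\theta}^n_k - \what{\theta}_n$ by a Polyak--Juditsky averaging expansion whose remainder terms (linearization error, localized gradient errors, projection errors, and the matrix approximation errors) are dispatched by separate lemmas; your variance bookkeeping also matches the claimed limit. The genuine difference is the device used for the final CLT and for the asymptotic independence of the two pieces. You condition on the sample, apply a conditional martingale CLT to the driving noise, and invoke the fact that a deterministic conditional limit forces asymptotic independence from any $\mc{F}^n$-measurable quantity. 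The paper instead regroups the sum over iterations as a sum over data points weighted by the multinomial counts of how often each point is drawn, Taylor-expands $\nabla \loss(\what{\theta}_n; \cdot)$ about $\theta\opt$ so that the mean part of the count-weighted sum cancels against the linearization of $\what{\theta}_n - \theta\opt$, and then applies a joint multiplier central limit theorem to the pair consisting of the centered-count-weighted sum and the plain empirical sum; the block-diagonal limiting covariance delivers the independence unconditionally in one step. The two routes are essentially equivalent---the multiplier CLT is the unconditional packaging of your conditioning argument---but the paper's version sidesteps having to formalize convergence in probability of conditional laws, which is exactly the delicacy you flag in your closing paragraph (and your proposed remedy, working with the joint randomness via a martingale CLT with slowly varying conditional covariance, is close in spirit to what the paper actually does). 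One further difference of substance: the paper never argues that the projection becomes inactive; it carries normal-cone/model-subgradient error terms through the recursion and proves that their normalized sum vanishes, which is what allows the same proof to cover the general \textsc{aProx} family rather than only plain projected SGD.
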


We provide a bit of commentary. First, the optimal covariance possible (by
the local asymptotic minimax theorem for stochastic
optimization~\cite{DuchiRu19}) for any estimator of $\theta\opt$ given $n$
observations is $\nabla^2 \risk(\theta\opt)^{-1} \losscov
\nabla^2 \risk(\theta\opt)^{-1}$. Thus, if $\gamma =
\lim\frac{k}{n}$ is large, we have limited asymptotic efficiency loss;
moreover, if the limiting variance $\sigma_n^2 \to \sigma^2$ is zero,
then the efficiency loss is precisely the factor
$1 + 1/\gamma$. In our privacy application, there
is a tradeoff between $k$, the number of iterations,
and the scale $\sigma_n$ of the necessary noise given a sample of size $n$.

\subsubsection{A private stochastic gradient method}

\newcommand{\mechsgd}{\mechanism_{\sigma, \rho, \stepsize, q}}
\newcommand{\mechsgdk}{\mechanism_{\sigma, \rho, \stepsize_k, q}}
\newcommand{\elementupdate}[2]{%
  \mathsf{sgd}\textup{-}\mathsf{el}^\loss_{\stepsize,\rho}({#1}, {#2})
}

We now turn to the appropriate variant of the projected gradient
method~\eqref{eqn:sgd-update-sequence} for privacy. The key from an
element-level privacy perspective is to apply a projected
gradient update on each of a user's elements, then average
them together.  Algorithm~\ref{algorithm:on-device-update}
captures this.

\begin{algorithm}[ht]
  \caption{Element-level projected gradient update
    $\elementupdate{\theta_0}{\statval}$}
  \label{algorithm:on-device-update}
  \begin{algorithmic}
    \Require Projection parameter $\rho$, stepsize $\stepsize$,
    initial model $\theta_0$,
    partition of $\mc{X}$ into
    $\mc{C} = \{c_1, \ldots, c_K\}$, and user data
    $x = \{x_1, \ldots, x_m\}$
    \State \textbf{for each} $k \in \{1, \ldots, K\}$ such that $x \cap c_k
    \neq \emptyset$
    \State ~~~Set $\mc{B} = \{x_i : x_i \in c_k\}$
    \State ~~~$\theta^+_k \leftarrow
    \proj_\Theta(\theta_0 - \stepsize \frac{1}{|\mc{B}|}
    \sum_{x \in \mc{B}} \nabla \loss(\theta_0; x))$
    %% \argmin_{\theta \in \Theta}
    %% \left\{\frac{1}{|\mc{B}|}
    %% \sum_{x \in \mc{B}} \loss_{\theta_0}(\theta; x)
    %% +\frac{1}{2 \stepsize} \ltwo{\theta - \theta_0}^2\right\}$
    \State ~~~$ \Delta_k \leftarrow (\theta^+_{k} - \theta_0) / \stepsize$
    ~~~\mbox{and} ~~~$[\Delta_k]_\rho \leftarrow \Delta_k
    \min \{1, \frac{\rho}{\ltwo{\Delta_k}}\}$ 
    \State \Return $\sum_k [\Delta_k]_\rho$
  \end{algorithmic}
\end{algorithm}

Because Algorithm~\ref{algorithm:on-device-update} divides its updates
into the clusters $c_k$ before computing projections (clipping
them to a particular radius) and updates,
its combination with appropriate noise
immediately yields several privacy properties.
%% For the first,
%% we recall the definition~\eqref{eqn:def-user-dist} of the
%% user distance.
%% \begin{observation}
%%   \label{observation:update-is-private}
%%   Let $\renparam \ge 1$ and for a collection $x = \{x_1, \ldots, x_m\}
%%   \subset \mc{X}$, let $\mechanism(x) = \elementupdate{\theta_0}{x} +
%%   \normal(0, \rho^2 \sigma^2 I)$ be the element-level update of
%%   Alg.~\ref{algorithm:on-device-update} with Gaussian noise. Then for any
%%   $x, x' \subset \mc{X}$,
%%   \begin{equation*}
%%     \drenyi{\mechanism(x)}{\mechanism(x')}
%%     \le \frac{\renparam \distuser(x, x')^2}{2 \sigma^2}.
%%   \end{equation*}
%% \end{observation}
%% \begin{proof}
%%   Let $[\Delta_k(x)]_\rho$ and $[\Delta_k(x')]_\rho$ be the
%%   normalized updates for each cluster Alg.~\ref{algorithm:on-device-update}
%%   computes on $x$ and $x'$, respectively. Then
%%   \begin{align*}
%%     \drenyi{\mechanism(x)}{\mechanism(x')}
%%     & = \drenyi{\normal\Big(\sum_k [\Delta_k(x)]_\rho, (\rho \sigma)^2 I\Big)}{
%%       \normal\Big(\sum_k [\Delta_k(x')]_\rho, (\rho \sigma)^2 I\Big)} \\
%%     & = \frac{\renparam \ltwos{\sum_k [\Delta_k(x)]_\rho
%%         - \sum_k [\Delta_k(x')]_\rho}^2}{2 \sigma^2 \rho^2}
%%     \le \frac{4 \rho^2 \distuser(x, x')^2}{2 \rho^2 \sigma^2},
%%   \end{align*}
%%   giving the result.
%% \end{proof}
The most important result for us is to apply
Alg.~\ref{algorithm:on-device-update}
in a stochastic-gradient-type scheme, which allows us to both leverage
the moments-accountant
(recall Corollary~\ref{corollary:momentacc})
and convergence guarantees of stochastic gradient-type methods.
%% Let $[\Delta_k(x)]_\rho$ be
%% the normalized and projected \textsc{aProx} update
%% for element $c_k$ that Alg.~\ref{algorithm:on-device-update} computes
%% for user data $x$.
Following the subsampling~\eqref{eqn:subsampled-sum},
for $q \in (0, 1)$ let
$B_u \simiid \bernoulli(q)$ or $B_u$ be uniform
on $\sum_u B_u = q n$, and
for a sample $\sample = \{x\sups{u}\}_{u=1}^n$ define the subsampled
mechanism
\begin{equation*}
  \mechsgd(\sample; \theta_0)
  \defeq \bigg(\sum_{u = 1}^n
  B_u \cdot \elementupdate{\theta_0}{\statval\sups{u}}\bigg)
  + \normal(0, \rho^2 \sigma^2 I).
\end{equation*}
For any sequence of stepsizes, we may define
the \emph{private stochastic approximation method}
\begin{equation}
  \label{eqn:private-element-sgd}
  \theta_{k+1} \defeq \theta_k - \stepsize_k
  \frac{1}{qn} \mechsgdk(\sample; \theta_k).
\end{equation}
We consider the privacy of the iteration~\eqref{eqn:private-element-sgd}
both in the standard (user-level) private scenario and under element-level
privacy. It is immediate that the update $\elementupdate{\theta_0}{\cdot}$
in Alg.~\ref{algorithm:on-device-update} has element sensitivity at most $2
\rho$, where neighboring data $x, x'$ guarantee
$\ltwos{\elementupdate{\theta_0}{\statval} -
  \elementupdate{\theta_0}{\statval'}} \le 2 \rho$.
For standard privacy, we consider the global sensitivity of the
update: assuming the upper bound
$\card(x) \le \usercard$ on the
cardinality of user data,
we have
$\ltwos{\elementupdate{\theta_0}{\statval}
  - \elementupdate{\theta_0}{\statval'}} \le 2 (K \wedge \usercard)
\rho$ for any two sets $\statval, \statval' \subset \mc{X}$.
We immediately obtain the
following two corollaries on the privacy of the private stochastic gradient
update~\eqref{eqn:private-element-sgd}.

\begin{corollary}
  \label{corollary:all-thetas-are-pretty-private}
  Let $\theta_{1:T} \defeq \{\theta_1, \ldots, \theta_T\}$ be the outputs of
  the iteration~\eqref{eqn:private-element-sgd} and $\diffp_\renparam(q,
  \sigma)$ be as in Corollary~\ref{corollary:momentacc}.  Then
  $\theta_{1:T}$
  is $(T \diffp_\renparam(q, \sigma),
  \renparam)$-element-level R\'{e}nyi private, and for any $\delta > 0$,
  is $(\inf_{\renparam \ge 1} \{T \diffp_\renparam(q, \sigma)
  + \frac{\log \delta^{-1}}{ \renparam - 1}\}, \delta)$-element-level
  differentially private.
\end{corollary}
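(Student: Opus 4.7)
The plan is to reduce the iteration~\eqref{eqn:private-element-sgd} to $T$ adaptive applications of the subsampled Gaussian sum mechanism~\eqref{eqn:subsampled-sum}, then chain together Corollary~\ref{corollary:momentacc} (moments accountant), Corollary~\ref{corollary:composition} (adaptive Rényi composition), and Corollary~\ref{corollary:post-processing} (post-processing), finishing with Mironov's Rényi-to-approximate-DP conversion recorded just below Definition~\ref{definition:renyi-dp}.

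First I would verify that, for any fixed $\theta_k$, the mechanism $\mechsgdk(\sample;\theta_k)$ is exactly an instance of the subsampled sum~\eqref{eqn:subsampled-sum}. Examining Algorithm~\ref{algorithm:on-device-update}, the output $\elementupdate{\theta_k}{\statval\sups{u}}$ has the form $\sum_{j=1}^{K}[\Delta_j\sups{u}]_\rho$, where the summand for clusters $c_j$ not intersecting $x\sups{u}$ is zero and each $[\Delta_j\sups{u}]_\rho$ satisfies $\ltwos{[\Delta_j\sups{u}]_\rho}\le \rho$ by construction of the clipping operator. Hence, setting $x_j\sups{u}\defeq [\Delta_j\sups{u}]_\rho$ realizes $\mechsgdk$ as the mechanism in~\eqref{eqn:subsampled-sum} with noise scale $\rho\sigma$ and the same Bernoulli or fixed-size subsampling rule.

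With this identification, Corollary~\ref{corollary:momentacc} immediately gives that $\mechsgdk(\sample;\theta_k)$ is $(\diffp_\renparam(q,\sigma),\renparam)$-element-level Rényi private for each fixed $\theta_k$. Since $\theta_{k+1}$ is a deterministic function of $\theta_k$ and $\mechsgdk(\sample;\theta_k)$ (subtracting a scaled copy of the mechanism output), each iteration is a post-processing step, so releasing the whole trajectory $\theta_{1:T}$ is equivalent to releasing the $T$ mechanism outputs. The sequence is adaptive because $\theta_k$ depends on $Z_1\sups{k-1}$, but Corollary~\ref{corollary:composition} allows exactly this: adaptive composition of $T$ element-level $(\diffp_\renparam(q,\sigma),\renparam)$-Rényi private mechanisms yields $(T\diffp_\renparam(q,\sigma),\renparam)$-Rényi privacy. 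Post-processing via Corollary~\ref{corollary:post-processing} preserves this, giving the first claim.

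The approximate-DP bound then follows from the standard Rényi-to-DP translation \cite{Mironov17}, which I would apply at each $\renparam\ge 1$ separately and optimize: for any $\delta>0$, $(T\diffp_\renparam(q,\sigma),\renparam)$-Rényi privacy implies $(T\diffp_\renparam(q,\sigma)+\frac{\log\delta^{-1}}{\renparam-1},\delta)$-DP, and taking the infimum over $\renparam$ yields the second claim. The only non-routine point to be careful about is confirming that adaptive composition, rather than just independent composition, is available for element-level Rényi privacy; but Corollary~\ref{corollary:composition} is stated in the adaptive form via the $\mechanism_i:\mc{X}^n\times\mc{Z}_1^{i-1}\to\mc{Z}_i$ signature, so no additional work is required, and the argument is essentially a bookkeeping exercise once the reduction to~\eqref{eqn:subsampled-sum} is in place.
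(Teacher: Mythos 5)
Your proposal is correct and follows exactly the route the paper intends: the paper states the corollary as ``immediate'' from the observation that $\elementupdate{\theta_0}{\cdot}$ decomposes into per-cluster clipped updates of norm at most $\rho$ (so $\mechsgdk$ is an instance of the subsampled sum~\eqref{eqn:subsampled-sum}), combined with the discussion following Corollary~\ref{corollary:momentacc} on adaptive $T$-fold composition and Mironov's R\'{e}nyi-to-approximate-DP conversion. Your write-up simply makes the reduction, the post-processing step, and the bookkeeping explicit.
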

\begin{corollary}
  \label{corollary:all-thetas-user-level}
  Let the conditions of
  Corollary~\ref{corollary:all-thetas-are-pretty-private} hold. Let
  $\sigma_{\textup{std}} = \sigma / (K \wedge \usercard)$.  Then
  $\theta_{1:T}$ is $(T \diffp_\renparam(q, \sigma_{\textup{std}}),
  \renparam)$-R\'{e}nyi differentially private, and for any $\delta > 0$, is
  $(\inf_{\renparam \ge 1} \{T \diffp_\renparam(q, \sigma_{\textup{std}}) +
  \frac{\log \delta^{-1}}{ \renparam - 1}\}, \delta)$-element-level
  differentially private.
\end{corollary}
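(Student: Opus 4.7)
The plan is to reduce Corollary \ref{corollary:all-thetas-user-level} to the already-established Corollary \ref{corollary:all-thetas-are-pretty-private} (and its underlying moments accountant, Corollary \ref{corollary:momentacc}) by re-interpreting the noise parameter through the lens of global, rather than element-level, sensitivity. The only way the earlier corollary used the update $\elementupdate{\theta_0}{\cdot}$ was through the norm bound $\ltwos{[\Delta_k]_\rho} \le \rho$ on each of the per-cluster clipped updates, combined with the fact that two $\distuser$-neighbors differ on only one cluster. To move to user-level privacy, the only change is the sensitivity bound.

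First I would bound the global (user-level Hamming) sensitivity of $\elementupdate{\theta_0}{\cdot}$. Because $\elementupdate{\theta_0}{x} = \sum_k [\Delta_k]_\rho$ is a sum of at most $K$ clipped vectors of norm $\rho$, and because $x$ itself has cardinality at most $\usercard$ (so at most $\usercard$ clusters are populated), we have $\ltwos{\elementupdate{\theta_0}{x}} \le (K \wedge \usercard)\rho$, hence by the triangle inequality the user-level sensitivity is at most $2(K \wedge \usercard)\rho$, as stated in the text preceding the corollary.

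Next I would apply Corollary \ref{corollary:momentacc} after a scaling. Set $v_u \defeq \elementupdate{\theta_0}{\statval\sups{u}}$, so $\ltwos{v_u} \le (K\wedge \usercard)\rho$, and observe that the noise added in $\mechsgdk$ has standard deviation $\rho\sigma = (K\wedge \usercard)\rho \cdot \sigma_{\textup{std}}$ on each coordinate. Thus, \emph{with respect to the user-level Hamming distance}, the mechanism $\mechsgdk$ is of exactly the form analyzed by Corollary \ref{corollary:momentacc}, with per-user norm bound $(K\wedge \usercard)\rho$ and noise multiplier $\sigma_{\textup{std}} = \sigma/(K\wedge \usercard)$. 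This yields a single-step R\'enyi guarantee of $\diffp_\renparam(q, \sigma_{\textup{std}})$ under user-level neighboring.

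Finally I would compose over the $T$ iterations. Since each $\theta_{k+1}$ is obtained from $\theta_k$ by a post-processing of $\mechsgdk(\sample; \theta_k)$, adaptive R\'enyi composition (Corollary \ref{corollary:composition}) plus post-processing (Corollary \ref{corollary:post-processing}) gives that $\theta_{1:T}$ is $(T\diffp_\renparam(q,\sigma_{\textup{std}}),\renparam)$-R\'enyi DP, and Mironov's conversion $(\diffp,\renparam)\!\to\!(\diffp+\frac{\log(1/\delta)}{\renparam-1},\delta)$ (discussed after Definition \ref{definition:renyi-dp}) yields the $(\diffp,\delta)$ bound by optimizing over $\renparam \ge 1$. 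The only nontrivial step is the sensitivity bookkeeping in the first paragraph; everything else is essentially a black-box invocation of results already stated in the excerpt, so I expect no real obstacle beyond being careful that the rescaling of $\sigma$ to $\sigma_{\textup{std}}$ correctly matches the Gaussian mechanism's noise-to-sensitivity ratio under the new (user-level) neighboring relation.
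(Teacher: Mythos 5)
Your proposal is correct and matches the paper's intended argument: the paper gives no explicit proof (the corollary is stated as "immediate" from the global sensitivity bound $\ltwos{\elementupdate{\theta_0}{\statval} - \elementupdate{\theta_0}{\statval'}} \le 2(K \wedge \usercard)\rho$ stated just before it), and your sensitivity bookkeeping, rescaling of the noise multiplier to $\sigma_{\textup{std}} = \sigma/(K \wedge \usercard)$, application of the moments accountant under user-level neighboring, and R\'enyi composition plus Mironov conversion are exactly the steps the paper intends. The only minor remark is that the final $(\diffp,\delta)$ clause in the paper's statement says ``element-level differentially private,'' which appears to be a typo for ``differentially private'' given that this corollary concerns the user-level guarantee; your reading (user-level) is the correct one.
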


Pursuing the discussion following
Corollary~\ref{corollary:momentacc}, let us assume we subsample a constant
fraction $q = m/n$ of the data in the
iteration~\eqref{eqn:private-element-sgd}, where $m$ is fixed and does not
grow with $n$.  Then for $0 < \delta < 1$, the entire
collection $\theta_{1:T}$ is $(O(1) \diffp, \delta)$-element-level
differentially private, where
\begin{equation}
  \diffp \le
  \inf_{\renparam \in [0, \sigma^2 \log \frac{n}{m}]} \left\{
  \frac{T q^2}{\sigma^2}
  +
  \frac{T q^2 \renparam}{\sigma^2}
  + \frac{\log \delta^{-1}}{\renparam} \right\}
  = \frac{T m^2}{n^2 \sigma^2}
  + O(1) \cdot \max\bigg\{\sqrt{\frac{T m^2}{n^2\sigma^2}
  \log \frac{1}{\delta}},
  \frac{\log \delta^{-1}}{\sigma^2 \log \frac{n}{m}} \bigg\}.
  %% \le
  %% \frac{T m^2}{n^2 \sigma^2}(1 + \beta \log n)
  %% + \frac{\log \frac{1}{\delta}}{\beta \log n},
  %% \frac{T q^2}{\sigma^2} + \frac{2}{\sigma} \sqrt{T q^2 \log \delta^{-1}}
  %% = \frac{T m^2}{n^2 \sigma^2} +
  %% \frac{2m }{\sigma n} \sqrt{T \log \delta^{-1}},
  \label{eqn:epsilon-level-sgd}
\end{equation}
%% where we have chosen $\renparam = \beta \log n$, so that this holds
%% (roughly) so long as
%% $\sigma^2 \ge \beta$. Notably, we may take $\beta$ growing,
%% so that
%% we provide $(\diffp_n, \delta)$-privacy with $\diffp_n \downarrow 0$
%% while allowing the number
%% $T$ of iterations to grow even super-linearly in $n$.

\subsubsection{Applications of element-level private stochastic approximation}

While the updates~\eqref{eqn:private-element-sgd}
provide privacy no matter the loss, their utility comes in conjunction with our
analysis in Theorem~\ref{theorem:normality}.
To that end, we now provide a generic convergence result with
a brief application to generalized linear model estimation; our
coming experiments evidence the utility of our definitions and mechanisms.
We first recall the element-level population risk~\eqref{eqn:element-risk},
which averages a standard loss $\loss$ into the element-level
loss $\losselement$. We make a few additional assumptions
on the standard loss $\loss$ over our data $\statdomain$
parallelling Assumption~\ref{assumption:make-it-easy}.
\begin{assumption}
  \label{assumption:allow-privacy}
  There exists $\lipobj < \infty$ such that
  $\theta \mapsto \loss(\theta; \statval)$ is $\lipobj$-Lipschitz
  over $\Theta$ for each $\statval \in \statdomain$.
  The minimizer $\theta\opt \defeq \argmin_{\theta \in \Theta}
  \riskelement(\theta)$ is unique with $\theta\opt \in \interior \Theta$,
  and $\riskelement$ is $\mc{C}^2$ on an $\epsilon$-neighborhood
  of $\theta\opt$ with $\nabla^2 \riskelement(\theta\opt) \succ 0$.
  There are $\lipgrad, \liphess : \statdomain \to \R_+$ such that
  $\loss(\cdot; \statval)$ has $\lipgrad(\statval)$-Lipschitz
  gradient and $\liphess(\statval)$-Lipschitz Hessian
  on $\theta\opt + \epsilon \ball \subset \Theta$,
  where $\E[\lipconst_a^2(\statrv)] < \infty$ for $a \in \{1, 2\}$.
\end{assumption}

The key consequence of the first Lipschitz condition in
Assumption~\ref{assumption:allow-privacy} is that no projection is necessary
in Alg.~\ref{algorithm:on-device-update} to guarantee that
$\elementupdate{\theta}{\cdot}$ has bounded element sensitivity, so that the
private iteration~\eqref{eqn:private-element-sgd} is an instance of the
stochastic approximation iteration~\eqref{eqn:sgd-update-sequence}.
Indeed, the Lipschitz condition is equivalent to $\ltwos{\partial
  \loss(\theta; \statval)} \le \lipobj$ for all $\theta \in \Theta$, and in
turn, the definition~\eqref{eqn:sgd-update-sequence} guarantees that
$\ltwos{\gradmap_k^n} \le \ltwos{\nabla \loss(\theta_k^n;
  \statrv_{\randind(k)}^n)} \le \lipobj$ (cf.~\cite{DuchiRu18c}). As a
consequence, the element-level update of
Algorithm~\ref{algorithm:on-device-update} performs no projection in the
definition $[\Delta_k]_\rho$ whenever $\rho \ge \lipobj$.

Now, recall the asymptotic normality result of
Theorem~\ref{theorem:normality}.  For each $n$ we let
$\{\statrv\sups{u}\}_{u=1}^n \simiid P$ and $\theta_i^n$ be generated by the
iteration~\eqref{eqn:private-element-sgd} for the given sample
$\{\statrv\sups{u}\}_{u=1}^n$ and assume the projection level $\rho \ge
\lipobj$.  Let the stepsizes $\stepsize_k = \stepsize_0 k^{-\steppow}$ for
some $\steppow \in (\half, 1)$ and subsampling rate $q = m / n$ for a fixed
$m$.  Combining Theorem~\ref{theorem:normality} with
Corollaries~\ref{corollary:all-thetas-are-pretty-private}
and~\ref{corollary:all-thetas-user-level} and the discussion immediately
following~\eqref{eqn:epsilon-level-sgd}, we have the following proposition,
which shows that the private stochastic iteration guarantees both asymptotic
normality, and privacy.
\begin{proposition}
  \label{proposition:private-normality}
  Let Assumption~\ref{assumption:allow-privacy} hold, and define
  $\wb{\theta}^n_k = \frac{1}{k} \sum_{i = 1}^k \theta_i^n$, where the
  number of iterations $k = k(n)$ satisfies $\lim_n k(n) / n = \gamma$. Let
  $\covlosselement = \cov(\nabla \losselement(\theta\opt; \statrv))$.  Then
  \begin{equation*}
    \sqrt{n} (\wb{\theta}^n_k - \theta\opt)
    \cd \normal\left(0,
    \nabla^2 \riskelement(\theta\opt)^{-1}
    \left(\covlosselement + \frac{1}{\gamma}\Big(\frac{1}{m}
    \covlosselement
    + \frac{\rho^2 \sigma^2}{m^2} I\Big)\right)
    \nabla^2 \riskelement(\theta\opt)^{-1}
    \right).
  \end{equation*}
  Fix $\delta > 0$ and let $\diffp(\tau) = \inf_{\renparam} \{ \frac{\gamma
    m^2}{n \tau^2} + \frac{\gamma m^2}{n \tau^2} \renparam +
  \frac{\log\delta^{-1}}{\renparam} \mid \renparam \le \tau^2 \log
  \frac{n}{m}\}$ for shorthand. Then
  \begin{enumerate}[label=(\roman*)]
  \item If $\sigma^2 \ge 2$, then the
    collection $\{\theta_i^n\}_{i = 1}^k$ is
    $(O(1) \cdot \diffp(\sigma), \delta)$-element-level differentially private.
  \item Assume that each user data $\statval$ has cardinality at most
    $\card(\statval) \le \usercard$. If $\sigma^2 \ge
    (K \wedge \usercard)^2 \tau^2$, where $\tau^2 \ge 2$,
    then
    then the
    collection $\{\theta_i^n\}_{i = 1}^k$ is
    $(O(1) \cdot \diffp(\tau), \delta)$-differentially private.
  \end{enumerate}
  %% Additionally, for any $\delta > 0$,
  %% there are $\diffp_n \downarrow 0$ such that
  %% the collection $\{\theta_i^n\}_{i=1}^k$ is $(\diffp_n, \delta)$-element-level
  %% differentially private.
\end{proposition}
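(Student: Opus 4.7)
The proof splits naturally into an asymptotic-normality argument and a privacy argument. The privacy part is essentially bookkeeping on top of Corollaries~\ref{corollary:all-thetas-are-pretty-private} and~\ref{corollary:all-thetas-user-level}; the normality part requires reducing the private iteration~\eqref{eqn:private-element-sgd} to the abstract stochastic-approximation iteration~\eqref{eqn:sgd-update-sequence} and then invoking (a slight extension of) Theorem~\ref{theorem:normality}.

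The plan for the normality statement is first to use Assumption~\ref{assumption:allow-privacy} to eliminate the nonlinearities in Algorithm~\ref{algorithm:on-device-update}. The $\lipobj$-Lipschitz hypothesis on $\loss(\cdot;\statval)$ gives $\ltwos{\nabla\loss(\theta;\statval)}\le\lipobj$, whence $\ltwos{\Delta_k}\le\lipobj\le\rho$ and the clipping $[\Delta_k]_\rho$ is trivial; because $\theta\opt\in\interior\Theta$ and the proof of Theorem~\ref{theorem:normality} already handles the (eventually inactive) domain projection, the $\proj_\Theta$ step contributes only an $o_P(n^{-1/2})$ term to $\wb{\theta}^n_k-\theta\opt$. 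Hence $\elementupdate{\theta}{\statval}$ coincides (up to sign) with $\nabla\losselement(\theta;\statval)$, and~\eqref{eqn:private-element-sgd} reduces to a noisy mini-batch stochastic gradient method on $\riskelement^n$. Conditioned on $\sample$, the subsampled batch $\frac{1}{qn}\sum_u B_u\nabla\losselement(\theta_k;\statrv\sups{u})$ has conditional mean $\nabla\riskelement^n(\theta_k)$ and conditional covariance asymptotic to $\frac{1}{m}\covlosselement$ at $\theta\opt$ (as $q=m/n\to 0$, the Bernoulli and fixed-size subsampling rules share the same leading-order variance), while the additive Gaussian noise enters as $\sigma_n Z_k$ with $\sigma_n\equiv \rho\sigma/m$ and $\Zcov=I$. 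Thus the iteration is exactly~\eqref{eqn:sgd-update-sequence} applied to $\losselement$, save that the stochastic gradient is a mean of $m$ samples rather than a single sample. The main obstacle is therefore a mini-batch extension of Theorem~\ref{theorem:normality}: its proof in Appendix~\ref{sec:proof-normality} (linearization around $\theta\opt$, martingale CLT for Polyak--Ruppert averaging, control of higher-order remainders via $\lipgrad,\liphess$) should carry over verbatim once the martingale-difference increments $\nabla\loss(\theta_k;X_{\randind(k)})-\nabla\risk_n(\theta_k)$ are replaced by their mini-batch analogues, whose conditional covariance is a factor $1/m$ smaller. This substitution inserts the $\frac{1}{m}\covlosselement$ and $\frac{\rho^2\sigma^2}{m^2}I$ terms inside the $\frac{1}{\gamma}$ factor of the announced covariance, while the outer $\covlosselement$ comes, as in Theorem~\ref{theorem:normality}, from the i.i.d.\ CLT on the empirical risk around the population risk.

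For the privacy claims, I would invoke Corollary~\ref{corollary:all-thetas-are-pretty-private} with $q=m/n$ and $T=k(n)$ to obtain element-level $(T\diffp_\renparam(q,\sigma),\renparam)$-R\'{e}nyi privacy, substitute the bound $\diffp_\renparam(q,\sigma)\le\frac{q^2\renparam}{(1-q)\sigma^2}+O(q^3/\sigma^3)$ (valid for $\renparam\le\sigma^2\log\frac{1}{q\sigma}$ and $\sigma\ge\sqrt{2}$, as noted following~\eqref{eqn:epsilon-level-sgd}), yielding $T\diffp_\renparam\lesssim\frac{\gamma m^2}{n\sigma^2}(1+\renparam)$, and finally apply Mironov's R\'{e}nyi-to-$(\diffp,\delta)$ conversion~\cite{Mironov17} to convert to element-level $(O(1)\cdot\diffp(\sigma),\delta)$-DP. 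Part~(ii) is identical once Corollary~\ref{corollary:all-thetas-user-level} replaces the effective noise level by $\sigma/(K\wedge\usercard)$; the hypothesis $\sigma^2\ge(K\wedge\usercard)^2\tau^2$ makes this effective noise at least $\tau\ge\sqrt{2}$, and the same computation delivers $\diffp(\tau)$.
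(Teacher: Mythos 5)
Your proposal is correct and follows essentially the same route as the paper, which gives no standalone proof but simply combines Theorem~\ref{theorem:normality} (via the observation that Assumption~\ref{assumption:allow-privacy} renders the clipping in Algorithm~\ref{algorithm:on-device-update} inactive) with Corollaries~\ref{corollary:all-thetas-are-pretty-private} and~\ref{corollary:all-thetas-user-level} and the moments-accountant bound following~\eqref{eqn:epsilon-level-sgd}. If anything you are more careful than the paper: you explicitly flag the mini-batch extension of Theorem~\ref{theorem:normality} (the source of the $\frac{1}{m}\covlosselement$ and $\frac{\rho^2\sigma^2}{m^2}I$ terms) as the one step not literally covered by the stated theorem, a point the paper elides.
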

\noindent
As in the preceding examples, we see roughly the same tradeoffs between
user-level (standard) and element-level privacy: for a given
level $\diffp$, it is possible to provide
the less-stringent element-level privacy
with noise a factor
$K \wedge \usercard$ less than that for user-level privacy.

In general, the partitioning that the element-level
loss~\eqref{eqn:element-loss} and risk~\eqref{eqn:element-risk} in the data
space may change the resulting estimated parameters from more standard
sampling schemes. However, any normalization of user's data (as some users
contribute many data points, some contribute few) in any application
engenders changes in the ``optimal'' parameter $\theta\opt$, so we believe
this of limited impact.
To give a somewhat concrete example, consider generalized linear
models (GLMs)~\cite{McCullaghNe89}:

\begin{example}[Generalized linear models]
  In a GLM, for an individual data point $x \in \R^d$
  we have $Y$ with density (or p.m.f.)
  \begin{equation*}
    p_\theta(y \mid x) = \exp(T(y) \theta^T x - A(\theta; x)) h(y),
  \end{equation*}
  where $h$ is a base measure, $A(\theta; x)$ is the log-partition function
  $A(\theta; x) = \log \int e^{T(y) \theta^T x} h(y) dy$, and $T$ the
  sufficient statistic. In this case
  for loss $\loss(\theta; x, y) = -\log p_\theta(y \mid x)$,
  any partition of $\mc{X}$ into elements guarantees that
  $\theta\opt = \argmin_\theta \riskelement(\theta)$ remains fixed.
  The Fisher information may change, of course:
  given a partition of $\mc{X}$ into clusters $\{c_k\}_{k = 1}^K$, defining
  $p_k = \P(\statrv\sups{u} \cap c_k \neq \emptyset)$, we have
  $\riskelement(\theta) = \sum_{k = 1}^K p_k \E[\loss(\theta; X, Y) \mid X
    \in c_k]$, so that
  modifying the partition $c_k$ changes $\nabla^2 \riskelement(\theta\opt)$
  and $\covlosselement$; in some situations, this can decrease the
  asymptotic variance, while in others, it may increase, depending
  on the degree of stratification and relative probabilities.
\end{example}

\section{Experiments}
\label{sec:experiments}

To demonstrate the behavior of element-level private mechanisms, we present
a series of experimental results in crowdsourced (federated) learning and
stochastic optimization. We perform both simulations, where we may control
all aspects of the experiments, and real-world experiments.
Our theoretical results and intuition suggest that as
the number of elements we consider grows---meaning that
the elements provide a finer partition of the input space $\mc{X}$---we
should observe performance improvements.
%
%% For example, in frequency/histogram
%% estimation (Sec.~\ref{sec:histogram}), we expect the mean-squared
%% error for individual coordinates to decrease quadratically in
%% $P(c_k) = \sum_{j \in c_k} p_j$, the probability of observing a particular
%% cluster $c_k$.
%
In large-scale estimation, such as federated
learning~\cite{McMahanMoRaHaAr17}, user data is rarely i.i.d.  For example,
some users take many photos of their children, others of dogs, others of
hikes with friends; thus, a user may provide data only relating to a few
elements. Motivated by this potential variability, for
datasets with no pre-existing users, we diversify our
experiments by constructing pseudo-users and assigning them varying
numbers of elements.

In the remainder of the section, we present results for histogram estimation
(Sec.~\ref{sec:histogram-experiment}), a simulated logistic regression
experiment (Sec.~\ref{sec:simulated-logreg}), and then two experiments on
fitting large image classification models, the first on tuning a model to a
new dataset based on Flickr images (Sec.~\ref{sec:flickr-experiment}), and
the second an investigation on training a full neural network
(Sec.~\ref{sec:cifar-experiment}).  An essential part of each experiment is
to describe how we choose the elements to protect---this decision is more of
a policy decision than a purely mathematical one, and consequently deserves
care and thought, especially in real-world applications.  In each
experiment, we provide user-level or element-level $(\diffp,
\delta)$-differential privacy, where $\delta = n^{-1.1}$, where $n$ is the
total number of users.

%% In this section, we first present the results for histogram estimation
%% experiment for frequent words using the Reddit comments dataset
%% \cite{Baumgartner17}.  Next, we present our results for stochastic
%% optimization.  Specifically, we present both simulated results--- where we
%% can more precisely demonstrate the losses due to privacy---and experiments
%% on large image classification tasks.

%% For all the stochastic optimization experiments, we compare the element-level 
%% private stochastic gradient method presented in Algorithm \ref{algorithm:on-device-update} 
%% to user-level private stochastic gradient method.
%% The privacy budget management is calculated according to moments
%% accountant method in Corollary \ref{corollary:momentacc} and \cite{AbadiChGoMcMiTaZh16}. 
%% For all our experiments, we set the privacy parameter $\delta=n^{-1.1}$, where $n$
%% is the total number of users.

\subsection{Histogram estimation}
\label{sec:histogram-experiment}

We consider the problem of estimating frequent words on a dataset consisting
of Reddit comments~\cite{Baumgartner17}, where unique usernames
identify users. Proposition~\ref{proposition:element-level-histogram}
predicts that element-level privacy with appropriate parameter
settings in the mechanism~\eqref{eqn:histogram-mechanism} should reduce
squared error by a factor of roughly $\max_{c \in \{c_k\}} P(c)^2$, so
that increasing cluster counts should yield further improvements.

\begin{figure}[t]
  \begin{center}
    \vspace{-1cm}
    \begin{overpic}[width=0.9\columnwidth]{%,grid]{%
	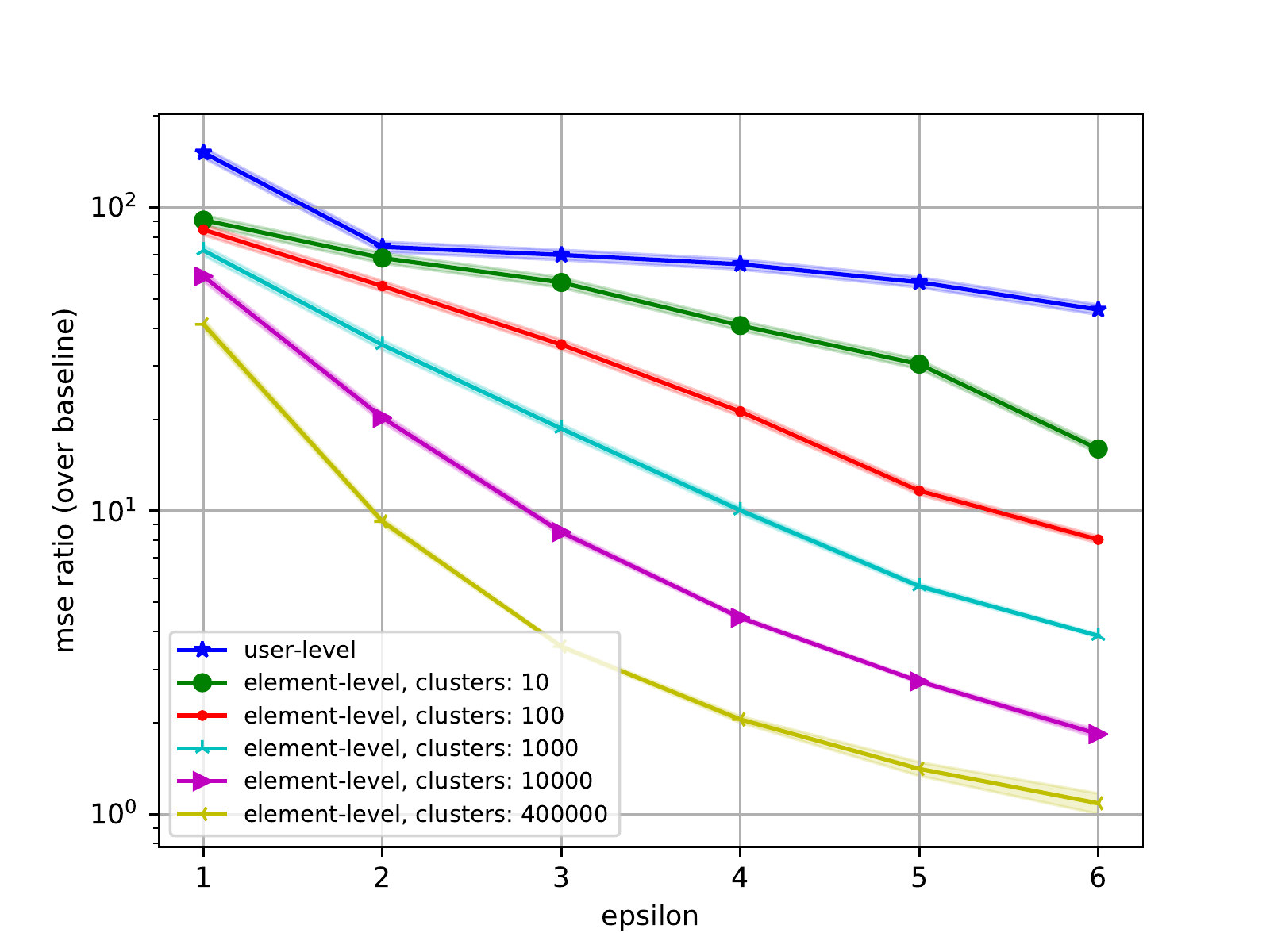}
      \put(2,22){
	\tikz{\path[draw=white,fill=white] (0, 0) rectangle (.6cm,6cm);}
      }
      \put(1,15){
        \rotatebox{90}{Error ratio
          $\ltwobig{\what{H} - H_0}^2 / \ltwo{H_1 - H_0}^2$
        }
      }
      \put(30,1){
	\tikz{\path[draw=white,fill=white] (0, 0) rectangle (4cm,.4cm);}
      }
      \put(43,2){Privacy level $\diffp$}
    \end{overpic} 
    \caption{\label{fig:exp_reddit} Mean-squared error ratio
      $\ltwos{\what{H} - H_0}^2 / \ltwos{H_1 - H_0}^2$
      of private error over baseline error
      for word frequency estimation on a dataset of
      Reddit comments, plotted versus
      privacy level $\diffp$. Confidence intervals are $\pm 1.64$
      standard errors.
      Each line corresponds to a partition of the space of
      words into $1$ (user-level) or more clusters
      $c_k$.}
  \end{center}
\end{figure}

In the experiment, we consider the first $n = 2000$ users with the largest
number of tokens (words), using as our dictionary those words in the
vocabulary of GloVe (Global Vectors for Word
Representation)~\cite{PenningtonSoMa14}, yielding dictionary of size $d =
400000$, where we choose a random subsample of each user's words to obtain $m
= 4000$ words per individual.  Additionally, we remove the 100 most
frequent stopwords (e.g.\ ``the'', ``and'', ``a'').  GloVe embeds words into
$\R^{100}$, and using these embedded vectors, we cluster the $d = 4 \cdot
10^5$-sized vocabulary into $K = 10, 100, 1000, 10000,$ and $400000$
clusters (elements); assuming the embedding is ``semantically meaningful''
as claimed~\cite{PenningtonSoMa14}, these elements should naturally
demarcate themes and conversation foci.  Within each experiment, we
calculate the histogram to be estimated by first randomly dividing users
into two disjoint sets $\sample_0$ and $\sample_1$ and defining the ``true''
histogram $H_0 = \frac{1}{m|\sample_0|}\sum_{u \in \sample_0} X\sups{u}$.
We then estimate $H_0$ using the sample $\sample_1$ via the
mechanism~\eqref{eqn:histogram-mechanism},
$\what{H}=\frac{1}{m}\mechanism(S_1, \rho, \{c_k\})$.
In each individual experiment---that is, for each choice of privacy level
$\diffp$ and total number of clusters---we use a validation
set to choose the truncation threshold
$\rho \in \{1, 2, \ldots, 10\} \cup \{15, 20, \ldots, 50\}
\cup \{70, 100, 150, 200\}$ minimizing the mean-squared error, so
that our results reflect the best behavior for each method.

We estimate the baseline mean squared error to be $\ltwos{H_0 - H_1}^2$.  In
Figure \ref{fig:exp_reddit}, we plot the ratio
$\ltwos{\what{H} - H_0}^2 / \ltwos{H_1 - H_0}^2$ of squared
error for the private estimation algorithm over the
baseline mean squared error against the privacy parameter $\diffp$.  The
results broadly are as expected: increasingly fine partitions yield better
estimators. Moreover, for a given privacy level $\diffp$, the separation
between the mean-squared error is roughly linear on a logarithmic scale,
which is what we expect from reductions scaling as $\max_{c \in \{c_k\}}
P(c)^2$.

%% \jcdcomment{Figure out errors to plot. Also need to describe what we see a
%%   little. (Need $\ell_\infty$ plot I think?) We should describe the
%%   results, saying they fit broadly with our theory as they exhibit linear
%%   improvement in number of clusters (on log scale).
%% }

\subsection{Simulated logistic regression}
\label{sec:simulated-logreg}

%% \jcdcomment{Let's get standard errors, not 90\% intervals! The 90\% intervals
%%   are too large to allow visualization. How many runs is each experiment?}
%%  \ojcomment{Each experiment has 10 trials. Reddit has actually 15 trials. Updated all.}

\begin{figure}[t]
  \begin{center}
    \vspace{-1cm}
    \begin{overpic}[width=0.9\columnwidth]{%,grid]{%
	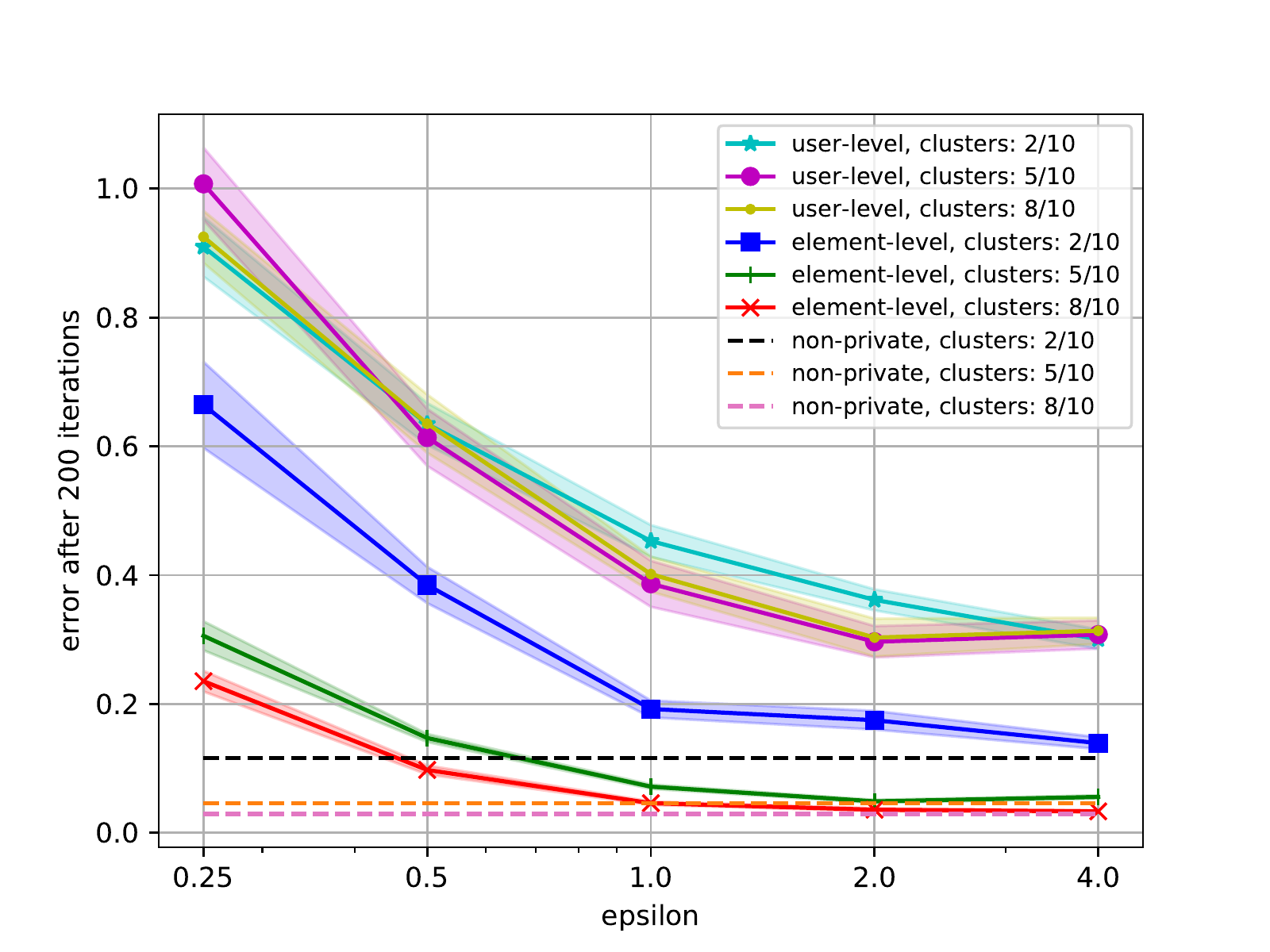}
      \put(2,22){
	\tikz{\path[draw=white,fill=white] (0, 0) rectangle (.6cm,6cm);}
      }
      \put(1,25){
        \rotatebox{90}{Error
          $\ltwobig{\what{\theta} - \theta\opt}$
        }
      }
      \put(30,1){
	\tikz{\path[draw=white,fill=white] (0, 0) rectangle (4cm,.4cm);}
      }
      \put(43,2){Privacy level $\diffp$}
    \end{overpic} 
    \caption{\label{fig:logistic-simulation} Logistic regression simulation
      with user sample size $n=1000$, each user providing
      $m = 50$ datapoints, in dimension $d = 10$. The horizontal axis
      indexes privacy parameter $\diffp$ while the vertical axis indexes
      error $\ltwos{\what{\theta} - \theta^\star}$
      after $T=200$ stochastic gradient updates~\eqref{eqn:private-element-sgd}.
      Confidence intervals are $\pm 1.64$ standard errors.
      The legend label
      ``clusters: $k / 10$'' represent the number of clusters (elements)
      a user has ($k_u$) over the total possible
      number of distinct clusters ($10$).
    }
  \end{center}
\end{figure}

The remainder of our experiments consider M-estimation and statistical risk
minimization, as in Section~\ref{sec:statistical-learning}.  We begin with a
simulation study to more precisely control the hypotheses and experiments,
focusing on logistic regression.  For each experiment, we generate data via
the following hierarchical model: first, we draw $K = 10$ element centers
$c_1, \ldots, c_K \simiid \uniform(\sphere^{d-1})$ and $\theta\opt \sim
\uniformdist(\sphere^{d-1})$. Then we generate pairs $(X_i, Y_i) \in \R^d
\times \{\pm 1\}$ according to the logistic model
\begin{equation*}
  p_\theta(y \mid x) = \frac{1}{1 + \exp(-y \<x, \theta\opt\>)},
  ~~~
  C_i \sim \uniform(\{c_k\}_{k=1}^K)
  ~~ \mbox{and} ~~
  X_i = C_i + \uniform(\sphere^{d-1}),
\end{equation*}
so each datum $X_i$ belongs to the cluster around element
$C_i$. Following the model that users provide several data points,
we generate data for $n = 1000$ users, each consisting of
$m = 50$ pairs $(x_i, y_i)$.

Given a collection of users, we apply the private
stochastic gradient method~\eqref{eqn:private-element-sgd}
with the element-level update $\elementupdate{\theta}{(\statval, y)}$
of Alg.~\ref{algorithm:on-device-update}. We vary the diversity
of data users provide, so that in different experiments users
provide data from $k = 2, 5, 8$ of the $K = 10$ clusters; we expect that
the more diverse the data the users provide (i.e.\ coverage of clusters),
the more element-level privacy should improve over standard (user-level)
private mechanisms.

We calculate the privacy parameter $\diffp$ for both user- and element-level
privacy using \citeauthor{AbadiChGoMcMiTaZh16}'s moments accountant
(Corollary~\ref{corollary:momentacc} and~\cite{AbadiChGoMcMiTaZh16}).  We
perform $T = 200$ private updates~\eqref{eqn:private-element-sgd}, choosing
stepsize $\stepsize_k = \stepsize_0 / \sqrt{k}$. In any real-world
deployment, one chooses hyperparameters to maximize a method's performance,
so for each fixed privacy level $\diffp$, we (experimentally) find a
subsampling rate $q$ and initial stepsize $\stepsize_0$ to yield the best
performance for each method.

We show results in Figure~\ref{fig:logistic-simulation}, where we plot the
error $\ltwos{\what{\theta} - \theta\opt}$ for the final estimated
$\what{\theta}$ of the private stochastic gradient iteration against the
provided privacy level $\diffp$.  Broadly, the results are as expected: as
we increase the diversity of elements for which each user has data, the
estimation error decreases for a given element privacy level $\diffp$, while
user-level private mechanisms exhibit little change on this axis. Of note,
however, is the baseline error: the more clusters (i.e.\ more stratified the
data per user), the better a \emph{non-private} stochastic gradient scheme
estimates $\theta\opt$. We believe this occurs because the stratification
of data within users improves problem conditioning. Even
with this difference, however, the convergence of the error of the
private stochastic gradient method to that of the non-private
error is faster for scenarios with more clusters.

%% We compute the error $\ltwo{\theta^\star - \theta}$ for both user- and
%% element- level privacy after $T = 500$ and plot the error versus the privacy
%% parameter epsilon with $90\%$ confidence interval.  As demonstrated in
%% Figure \ref{fig:exp_logistic_err}, element-level private algorithm
%% outperforms the user-level private algorithm for a range of privacy
%% parameter $\epsilon$.

%% Furthermore, we compare the performance of the algorithms in terms of minimum number of rounds
%% to reach a specific test accuracy. We plot the ratio of minimum time that user-level privacy achieves $80\%$ 
%% accuracy over the minimum time that element-level privacy achieves the same accuracy
%% versus initial step-size $\eta_0$ in 
%% Figure \ref{fig:exp_logistic_stepsize} with $90\%$ confidence interval.

%% \begin{figure}[ht]
%%   \begin{center}
%%       \begin{overpic}[width=0.9\columnwidth]{%,grid]{%
%% 	  }
%%       \end{overpic} 
%%     \caption{\label{fig:exp_logistic_stepsize} The ratio of minimum training rounds (user-level over element level) to achieve
%%     $80\%$ accuracy versus the initial step-size for logistic regression simulation. The \
%%     $90\%$ confidence interval is also shown.}
%%   \end{center}
%% \end{figure}

\subsection{Large-scale multiclass image classification: the Flickr dataset}
\label{sec:flickr-experiment}

Following our simulated logistic regression results, we
investigate element-level privacy in the context of
model fitting for a large image
classification task, following our methodology in
Section~\ref{sec:statistical-learning}. In this experiment, we vary several
parameters: the privacy level $\diffp \in \{1, 3, \infty\}$, the number of
distinct clusters into which we partition the input space ($K = 50, 500$),
and, as we discuss in the introduction to the experiments, we also vary the
diversity of images of individual users, so that we provide nominal
``users'' with data from 5, 30, or 100 distinct clusters/elements. As
in the previous experiments,
we expect the following: as the number of clusters $K$ increases,
element-level private methods should improve relative to the user-level
private method, and similarly, as the diversity of individual
users' images increases (the number of distinct elements), we expect to see
further relative improvement.  This is
natural: in Algorithm~\ref{algorithm:on-device-update} and
the update~\eqref{eqn:private-element-sgd}, the magnitude of
noise addition relative to the scale of a user's contribution decreases
linearly in the number of distinct elements a user provides.

To this end, we perform a model tuning experiment on the Flickr dataset
\cite{ThomeeShFrElNiPoBoLi16} using a ResNet50 network \cite{HeZhReSu16}
pre-trained on ImageNet~\cite{DengDoSoLiLiFe09}, with reference
implementation~\cite{PaszkeGrChChYaDeLiDeAnLe17}.  This tuning means we fit
only the \emph{last layer} of the network, that is, we fit a multiclass
logistic regression on input features $x \in \R^d$, $d = 2048$, defined by
the outputs of the second-to-last ResNet50 layer. We use the 100 most
popular Flickr image tags as labels, which represent 89\% of the chosen
data, and used an ``unknown'' label for anything remaining, resulting in a
101 class multiclass problem. To construct the elements into which we
partition the images, we chose a uniformly random subset of 100,000 Flickr
images, then used KMeans++~\cite{ArthurVa07} to cluster them into $K = 50$
and $500$ clusters. Then a given image representation $x$ simply belongs to
the nearest cluster centroid.  To fit the resulting model, we use the
stochastic gradient method in Algorithm~\ref{algorithm:on-device-update} as
applied in the update~\eqref{eqn:private-element-sgd}.  We construct a
nominal collection of $n=8000$ users, assigning each $m=100$ labeled images
$(x, y)$.  We vary the image allocations, so that (depending on the
experiment) each user has images from on average $k = 5, 30, 100$ distinct
elements.  We perform $T =$ 40,000 updates~\eqref{eqn:private-element-sgd}
in each experiment.

We present results in Figure~\ref{fig:flickr}, plotting the maximum top-5
accuracy achieved (i.e.\ there is no loss if the correct label belongs to
the five highest-scoring predicted labels for an example $x$) versus
iteration for many parameter settings. In the figure, we simultaneously
present results for different privacy levels $\diffp$, number $K$ of
clusters, and diversity of clusters per user. We highlight a few of the most
salient points. First, user-level privacy with $\diffp = 1$ is substantially
worse than any other method. Second, we see roughly what we expect, in that
the element-level private algorithms achieve higher accuracy as the number
of clusters and per-user diversity increase.  Given that true internet-scale
datasets are several times larger than the 400,000 image dataset we
construct, this suggests the element-level private mechanisms can
provide strong utility with satisfactory privacy.

%% \jcdcomment{Need to change ``rounds'' in plot title in
%%   Fig.~\ref{fig:flickr}, and also describe the legend here a little bit.}
 
\begin{figure}[ht]
  \begin{center}
      \begin{overpic}[width=0.9\columnwidth]{%,grid]{%
	  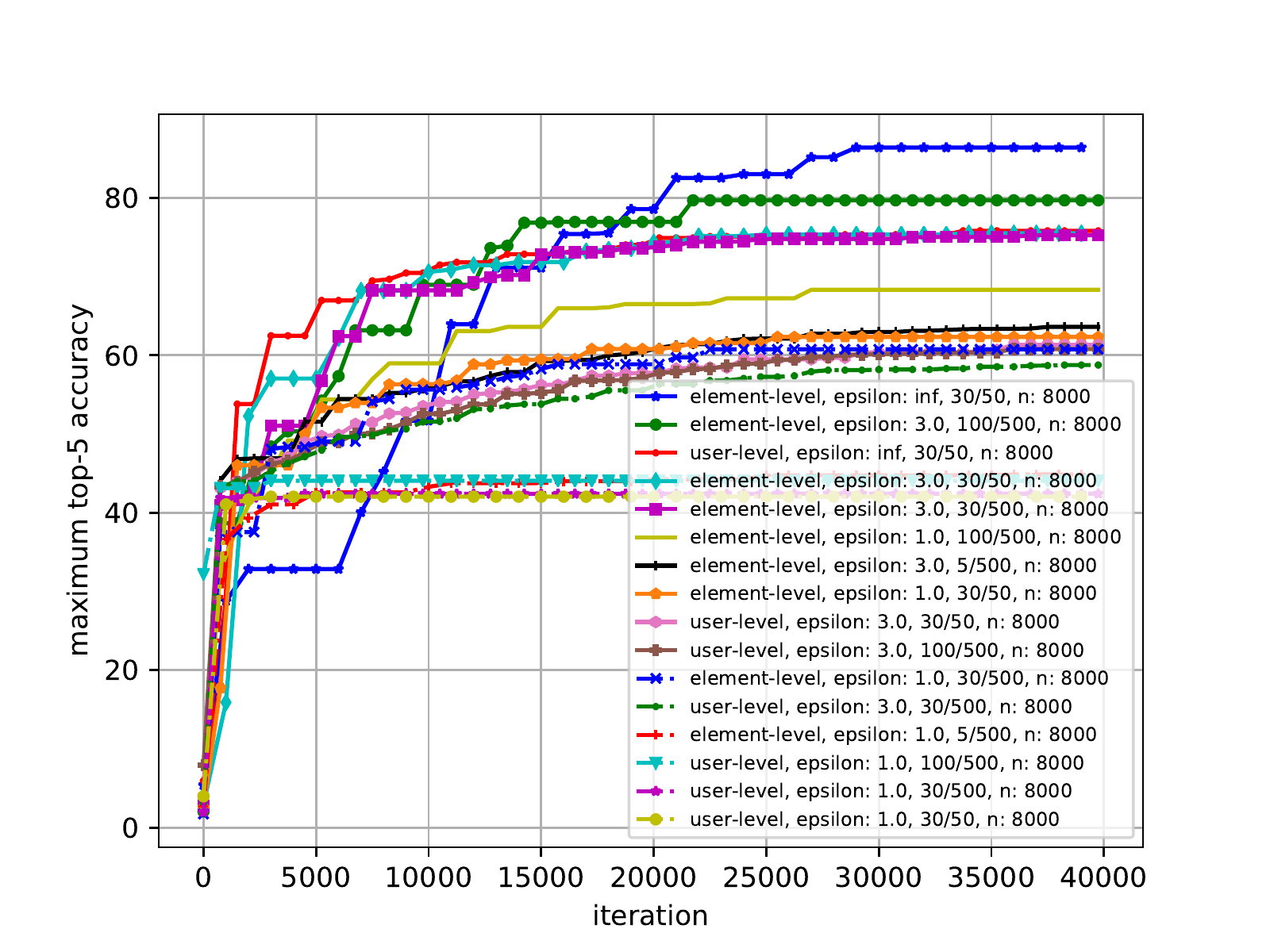}
        \put(53.5,9.5){
          \tikz{\path[draw=white,fill=white] (0, 0) rectangle(5.1cm,5.2cm);}
        }
        \put(53.7,43.2){\footnotesize non-private, 100/500 clusters present}
        \put(53.7,41){\footnotesize element-level, $\diffp = 3.0$,
          100/500}
        \put(53.7,38.9){\footnotesize non-private, 30/50 clusters present}
        \put(53.7,36.7){\footnotesize element-level, $\diffp = 3.0$, 30/50}
        \put(53.7,34.4){\footnotesize element-level, $\diffp = 3.0$, 30/500}
        \put(53.7,32.2){\footnotesize element-level, $\diffp = 1.0$, 100/500}
        \put(53.7,30){\footnotesize element-level, $\diffp = 3.0$, 5/500}
        \put(53.7,27.75){\footnotesize element-level, $\diffp = 1.0$, 30/50}
        \put(53.7,25.5){\footnotesize user-level, $\diffp = 3.0$, 30/50}
        \put(53.7,23.25){\footnotesize user-level, $\diffp = 3.0$, 100/500}

        \put(53.7,21.0){\footnotesize element-level, $\diffp = 1.0$, 30/500}
        \put(53.7,18.8){\footnotesize user-level, $\diffp = 3.0$, 30/500}
        \put(53.7,16.6){\footnotesize element-level, $\diffp = 1.0$, 5/500}
        \put(53.7,14.4){\footnotesize user-level, $\diffp = 1.0$, 100/500}
        \put(53.7,12.2){\footnotesize user-level, $\diffp = 1.0$, 30/500}
        \put(53.7,10.0){\footnotesize user-level, $\diffp = 1.0$, 30/50}
      \end{overpic} 
      \caption{\label{fig:flickr} Training curves for the private
        element-level stochastic gradient method,
        Alg.~\ref{algorithm:on-device-update} coupled with
        update~\eqref{eqn:private-element-sgd}, over $T =$ 40,000 updates on
        the Flickr dataset. Each line displays the best top-5 prediction
        accuracy achieved before iteration $t$. The legend ratio
        $k / K$ represent the number of clusters (elements) a user has
        $(k_u)$ over the total possible number of distinct clusters $(K)$. }
  \end{center}
\end{figure}

\subsection{Fully training a neural network: image classification on CIFAR10}
\label{sec:cifar-experiment}

We present our final experimental results for a classification problem on
the CIFAR10 dataset~\cite{KrizhevskyHi09}, showing that it is possible to
privately train a neural network while providing element-level privacy. We
use the relatively simple convolutional neural network model architecture in
the PyTorch tutorial~\cite{PaszkeGrChChYaDeLiDeAnLe17}.  To construct the
cluster centroids (elements), we mimic the method we propose for Flickr:
we upsample the CIFAR image (using PyTorch), pass the resulting image through
the pre-trained ResNet50 network above, and then cluster the resulting
2048-dimensional vectors using KMeans++~\cite{ArthurVa07} to
construct $K = 100$ centroids that partition the CIFAR dataset.

%% with
%% some modification as we have no pre-trained network.

%% \jcdcomment{Used Flickr network sceneprints, resized CIFAR images to be
%%   big.}

%% We first perform a
%% wavelet decomposition of each CIFAR image, then use
%% KMeans++~\cite{ArthurVa07} on the first 32 components to construct $K = 100$
%% centroids that partition the CIFAR dataset. The idea is that these
%% ``low frequency'' components of the image should form a natural and
%% meaningful partition of the space into distinct elements.

We again perform a federated learning experiment over $T = $ 40,000 steps
(Alg.~\ref{algorithm:on-device-update} and
update~\eqref{eqn:private-element-sgd}). Similar to our experiment with
Flickr---except that we train a full neural network---we considered $n \in
\{2, 8\} \cdot 10^3$ users, each assigned $m = 100$ images from $k = 5$ or
$30$ of the $K = 100$ elements we cluster. Users may have repeat data.  In
Figure~\ref{fig:cifar10}, we plot the difference in top-1 accuracy between a
private method and the fully-trained (non-private) tutorial convolutional
neural network~\cite{PaszkeGrChChYaDeLiDeAnLe17} against iteration, varying
the privacy parameter $\diffp$ and cluster diversity.
As expected, we see two effects: first, as the sample size $n$ grows, the
accuracy improves; second, as the diversity of elements per user decreases,
performance degrades as expected. All user-level private instantiations have
accuracy more at least 15\%-worse than the non-private accuracy.
Conversely, the element-level-private algorithm with $\diffp = 3$, $n =
8000$, and high element diversity per-user (30/100 data clusters present)
achieves top-1 accuracy nearly equal to non-private training.

%% \begin{figure}[ht]
%%   \begin{center}
%%       \begin{overpic}[width=0.9\columnwidth]{%,grid]{%
%% 	  }
%%       \end{overpic} 
%%     \caption{\label{fig:cifar10}CIFAR}
%%   \end{center}
%% \end{figure}

\begin{figure}[ht]
  \begin{center}
      \begin{overpic}[width=0.9\columnwidth]{%,grid]{%
	  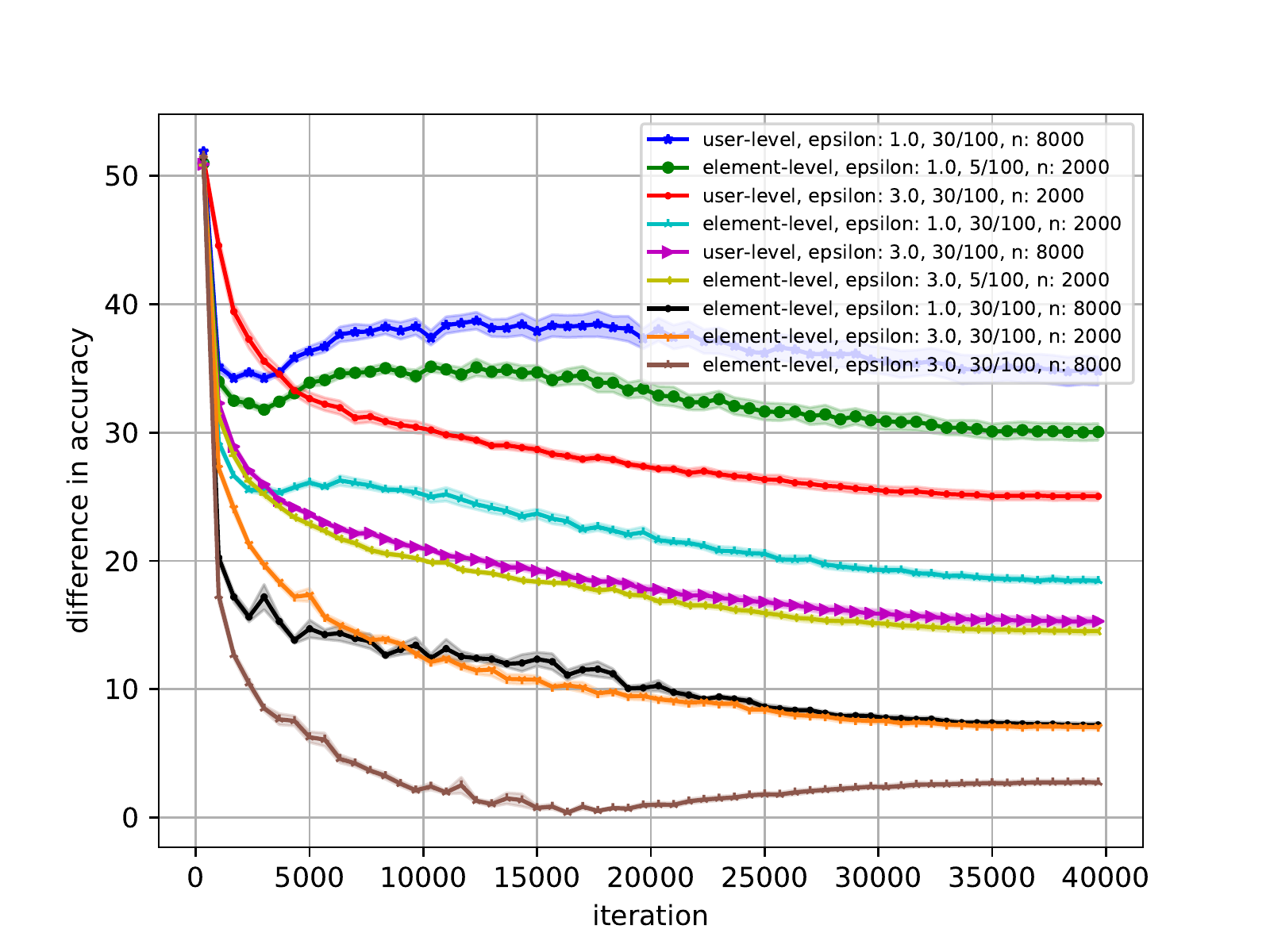}
      \end{overpic} 
      \caption{\label{fig:cifar10} Difference in accuracy of a convolutional
        neural network model on the CIFAR10 dataset trained with privacy and
        without. Each line corresponds to given privacy type (element- or
        user-level), privacy level $\diffp$, effective number of users $n
        \in \{2000,8000\}$, and diversity of elements each user provides ($k
        = 5$ (5/100) or $k = 30$ (30/100)).  Confidence interval are $\pm
        1.64$ standard errors.}
  \end{center}
\end{figure}

\section{Discussion}

We conclude with a brief discussion. Element-level differential privacy
allows one to choose the granularity at which to provide privacy
protections. As we see both in the theoretical results and experiments, this
can allow substantially improved utility over standard private
algorithms. This additional flexibility, however, comes with a challenge:
one must carefully choose the elements (partition of the data space
$\mc{X}$) to provide sufficient privacy, as increasing the number of
clusters allows improved statistical accuracy while decreasing the number
improves privacy. This apparent tradeoff requires a per-application policy
decision, where one balances privacy---in the coarseness of the partitioning
into elements---against utility; as in standard privacy, where the choice of
$\diffp$ is a policy decision that must trade privacy against utility, care
is likely necessary here.

\appendix

%% \section{Miscellaneous calculations}

\section{Sufficiency of mechanism~\eqref{eqn:mironov-gaussian-mech}}
\label{sec:sufficiency-of-mironov}

The R\'{e}nyi divergence between Gaussian distributions
$P_i = \normal(\mu_i, \sigma^2 I)$ is $\drenyi{P_0}{P_1}
= \frac{\renparam \norm{\mu_0 - \mu_1}^2}{2 \sigma^2}$. Thus
for any mechanism defined by
$\mechanism(\sample) = f(\sample) + \normal(0, \rho^2 \sigma^2)$, we have
$\drenyi{\mechanism(\sample)}{\mechanism(\sample')}
\le \frac{\renparam}{2 \sigma^2}$, so that
Mironov's results~\cite{Mironov17} guarantee the mechanism is
$(\frac{\renparam}{2 \sigma^2} + \frac{\log(1/\delta)}{\renparam - 1},
\delta)$-differentially private. Setting $\renparam =
1 + \frac{2 \log(1/\delta)}{\diffp}$ and
$\sigma^2 = (1 + 2 \diffp^{-1} \log \frac{1}{\delta}) / \diffp$
gives the result.

\section{Proof of Proposition~\ref{proposition:heavyhitters}}
\label{proof:heavyhitters}

We begin by presenting two lemmas that give us 
the tools to prove the proposition.

\begin{lemma}
  \label{lemma:indicator-properties}
  Let $q_j = 1 - (1 - p_j)^m$, $q_{jl} = 1 - (1 - p_l - p_j)^m$,
  $X \sim \multinomial(m, p)$, and
  $Y = \ones(X)$. Then
  $\E[Y_j] = q_j$, $\E[Y_j^2] = q_j$,
  $\E[Y_jY_l] = q_j + q_l - q_{jl}$, and
  $\var(Y_j - Y_l) \le q_j + q_l$.
\end{lemma}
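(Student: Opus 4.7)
The plan is to treat each of the four identities separately and rely on the fact that $Y_j = \indic{X_j \ge 1}$ is an indicator of a simple event about the multinomial coordinates. The first two claims are essentially definitional: $X_j$ is marginally $\Bin(m, p_j)$, so $\E[Y_j] = \P(X_j \ge 1) = 1 - (1-p_j)^m = q_j$, and $Y_j^2 = Y_j$ because $Y_j \in \{0,1\}$, giving $\E[Y_j^2] = q_j$ immediately.

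For the third identity, I would use inclusion–exclusion on the indicator $Y_j Y_l = \indic{X_j \ge 1, X_l \ge 1}$, writing
\begin{equation*}
  Y_j Y_l = 1 - \indic{X_j = 0} - \indic{X_l = 0} + \indic{X_j = 0,\, X_l = 0}.
\end{equation*}
Taking expectations, the first two events have probabilities $(1-p_j)^m$ and $(1-p_l)^m$ by the marginal binomial argument, while the joint event $\{X_j = 0, X_l = 0\}$ corresponds to every one of the $m$ i.i.d.\ trials landing outside categories $j$ and $l$; by the multinomial construction this has probability $(1 - p_j - p_l)^m$. Grouping terms yields $\E[Y_j Y_l] = q_j + q_l - q_{jl}$.

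For the variance bound, my plan is to avoid the messy algebra one would get by expanding $\var(Y_j - Y_l) = \var(Y_j) + \var(Y_l) - 2\cov(Y_j, Y_l)$ in terms of $q_j, q_l, q_{jl}$. Instead I would use the observation that $Y_j - Y_l$ takes values in $\{-1, 0, 1\}$, so $(Y_j - Y_l)^2 = |Y_j - Y_l|$, and since $Y_j, Y_l \ge 0$,
\begin{equation*}
  |Y_j - Y_l| \le Y_j + Y_l.
\end{equation*}
Then $\var(Y_j - Y_l) \le \E[(Y_j - Y_l)^2] = \E[|Y_j - Y_l|] \le \E[Y_j] + \E[Y_l] = q_j + q_l$, which is the desired bound. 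I do not anticipate any real obstacle here, since each step reduces to an elementary property of indicators and the marginal binomial/multinomial probabilities; the only judgement call is picking the slick pointwise inequality $(Y_j - Y_l)^2 \le Y_j + Y_l$ instead of expanding the covariance and verifying $q_{jl} \le q_j + q_l$ by hand.
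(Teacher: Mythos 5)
Your proof is correct. The first two claims and the variance bound coincide with the paper's argument: the paper writes $\var(Y_j - Y_l) \le \E[Y_j^2 + Y_l^2] = q_j + q_l$, which is exactly your pointwise inequality $(Y_j - Y_l)^2 \le Y_j + Y_l$ in light of $Y_j^2 = Y_j$. Where you genuinely diverge is the cross-moment $\E[Y_j Y_l]$: the paper partitions $\{X_j > 0\} \cup \{X_l > 0\}$-type events into the four joint cells $\{X_j = 0 \text{ or } > 0\} \times \{X_l = 0 \text{ or } > 0\}$ and evaluates $\P(X_j > 0, X_l = 0)$ via the conditional law $X_j \mid (X_l = 0) \sim \mathsf{Bin}(m, p_j/(1-p_l))$, then cleans up algebraically. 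Your inclusion--exclusion identity $Y_j Y_l = 1 - \indic{X_j = 0} - \indic{X_l = 0} + \indic{X_j = 0, X_l = 0}$ reaches the same answer in one line, needing only the three probabilities $(1-p_j)^m$, $(1-p_l)^m$, and $(1-p_j-p_l)^m$, which are $1 - q_j$, $1 - q_l$, $1 - q_{jl}$ respectively; this avoids the conditional-binomial computation entirely and is the cleaner route. The only fact both arguments must supply is that $\P(X_j = 0, X_l = 0) = (1 - p_j - p_l)^m$, which you correctly justify from the i.i.d.\ trial construction of the multinomial.
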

\begin{proof}
  The first and second claims are immediate. For the third, we have
  \begin{align*}
    \E[Y_j Y_l] & =
    \P(X_j > 0, X_l > 0) \\
    & = 1 - \P(X_j = 0, X_l = 0) - \P(X_j > 0, X_l = 0)
    - \P(X_j = 0, X_l > 0).
  \end{align*}
  As $X_j \mid (X_l = 0)\sim \binomial(m, \frac{p_j}{1 - p_l})$,
  we have
  \begin{align*}
    \P(X_j > 0, X_l = 0)
    & = \P(X_j > 0 \mid X_l = 0) \P(X_l = 0) \\
    & = \left[1 - \left(1 - \frac{p_j}{1 - p_l}\right)^m\right]
    (1 - p_l)^m
    = q_{jl} - q_l,
  \end{align*}
  and we similarly obtain that $\P(X_j = 0, X_l = 0) = 1 - q_{lj}$.
  Algebraic manipulations then give $\E[Y_j Y_l] = q_j + q_l - q_{jl}$.

  Finally, $\var(Y_j - Y_l) \le \E[Y_j^2 + Y_l^2] \le q_j + q_l$, as desired.
\end{proof}
Next, we prove the upper-bound on the probability that the
private mechanism $\what{H}$ mis-orders the two bins $i$ and $j > i$.

\begin{lemma}
  \label{lemma:order-error-bound}
  Let $i < j$, so that $q_i = 1 - (1 - p_i)^m \ge q_j = 1 - (1 - p_j)^m$.
  Then
  \begin{equation*}
    \P\left(\what{H}_i < \what{H}_j\right)
    \le
    \exp\left(-\min\left\{\frac{5 n(q_i - q_j)^2}{12
      v_{ij} + 20 \frac{\sigma^2}{n}},
    \frac{n (q_i - q_j)}{8}\right\}\right).
  \end{equation*}
\end{lemma}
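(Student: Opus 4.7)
My plan is to reduce the event $\{\what H_i < \what H_j\}$ to a one-sided lower tail of a sum of bounded independent variables plus Gaussian noise, and then apply a Chernoff bound. Write $S \defeq \what H_i - \what H_j$; by the definition of the mechanism~\eqref{eqn:heavy-hitters-mech},
\[
  S = \sum_{u=1}^n D^{(u)} + W,
  \qquad
  D^{(u)} \defeq \ones(X^{(u)})_i - \ones(X^{(u)})_j \in \{-1,0,1\},
\]
where $W$ is the difference of the two independent $\normal(0,\sigma^2)$ noise coordinates, so $W \sim \normal(0,2\sigma^2)$ and is independent of the $D^{(u)}$. Lemma~\ref{lemma:indicator-properties} gives $\E[D^{(u)}] = q_i - q_j \ge 0$ and $\var(D^{(u)}) \le v_{ij}$, so $\{\what H_i < \what H_j\} = \{S < 0\}$ is precisely the event that $S$ deviates below its mean by $n(q_i - q_j)$.

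Next, I would control this lower tail by a Bernstein-style Chernoff argument. The centered summands $D^{(u)} - \E D^{(u)}$ satisfy $|D^{(u)} - \E D^{(u)}| \le 2$ with variance at most $v_{ij}$, so the standard Bernstein MGF bound gives, for all $\lambda \in [0, 3/2)$,
\[
  \E\bigl[e^{-\lambda(D^{(u)} - \E D^{(u)})}\bigr]
  \le \exp\!\left(\frac{\lambda^2 v_{ij}/2}{1 - 2\lambda/3}\right),
\]
while $\E[e^{-\lambda W}] = e^{\lambda^2 \sigma^2}$ exactly. Multiplying across the independent terms and applying Markov's inequality to $-(S - \E S) \ge n(q_i - q_j)$ yields
\[
  \P(S < 0)
  \le \exp\!\left(-\lambda n(q_i - q_j) + \frac{n \lambda^2 v_{ij}/2}{1 - 2\lambda/3} + \lambda^2 \sigma^2\right).
\]

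The two-regime $\min$ in the lemma then comes from optimizing (or nearly so) over $\lambda$ in this Chernoff exponent. When $q_i - q_j$ is small relative to the effective standard deviation $\sqrt{v_{ij} + \sigma^2/n}$, the optimizer $\lambda^\star$ is proportional to $(q_i - q_j)/(v_{ij} + \sigma^2/n)$ and stays well below $3/2$; one can then replace $(1 - 2\lambda^\star/3)^{-1}$ by an explicit constant, and the exponent collapses to a quadratic-in-$(q_i-q_j)$ expression of the form $\frac{c_1 n(q_i-q_j)^2}{c_2 v_{ij} + c_3 \sigma^2/n}$, matching the first branch $\frac{5 n (q_i-q_j)^2}{12 v_{ij} + 20\sigma^2/n}$. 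When $q_i - q_j$ is comparatively large, I would instead take $\lambda$ equal to a fixed constant of order $1$ (below $3/2$); the linear term $-\lambda n(q_i-q_j)$ then dominates and yields the second branch $n(q_i-q_j)/8$. The main obstacle is purely bookkeeping: selecting the crossover threshold between the two regimes and bounding $(1 - 2\lambda/3)^{-1}$ and $\lambda$ tightly enough to land on the exact numerical constants $5$, $12$, $20$, and $8$ in the statement. The probabilistic content is otherwise standard, resting entirely on Lemma~\ref{lemma:indicator-properties} and the Gaussian MGF.
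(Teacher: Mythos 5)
Your proposal is correct and follows essentially the same route as the paper: the paper likewise writes $\what H_i - \what H_j$ as a sum of the centered bounded differences $Y_i^{(u)} - Y_j^{(u)} - (q_i - q_j)$ plus $\normal(0,2\sigma^2)$ noise, applies a Bernstein/sub-exponential MGF bound ($\E[e^{\lambda \Delta_{ij}}] \le e^{3\lambda^2 v_{ij}/5}$ for $|\lambda| \le \nicefrac{1}{4}$, which is exactly your Bernstein bound after using $1 - 2\lambda/3 \ge 5/6$), and obtains the two branches of the $\min$ by setting $\lambda = \min\{\nicefrac{1}{4},\, 5t/(6 n v_{ij} + 10\sigma^2)\}$ with $t = n(q_i - q_j)$. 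The bookkeeping you defer does land on the stated constants with these choices.
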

\noindent See Section~\ref{proof:ordererrorbound} for a proof.

By a binomial expansion, we see that if
$p_i + p_j \le \frac{1}{m}$, then
\begin{align*}
  q_i - q_j & =
  (1 - p_j)^m - (1 - p_i)^m \ge m(p_i - p_j)
  - \frac{m^2}{2} (p_i^2 - p_j^2)
  %% = m (p_i - p_j) - \frac{m}{2} (p_i - p_j) \cdot m(p_i + p_j)
  \ge \frac{m}{2} (p_i - p_j) \\
  q_i + q_j
  & = 2 - (1 - p_j)^m - (1 - p_i)^m
  \le m (p_i + p_j) + \frac{m^2}{2} (p_i^2 + p_j^2)
  \le 3 m p_i,
\end{align*}
so Lemma~\ref{lemma:order-error-bound} implies
\begin{align*}
  \E[\indic{p_i - p_j \ge \gamma} \indics{\what{H}_j > \what{H}_i}]
  & \le
  \max\left\{\exp\left(- \frac{5 n m^2 (p_i - p_j)^2}{
    48 v_{ij} + 80 \frac{\sigma^2}{n}}\right),
  \exp\left(-\frac{n m (p_i - p_j)}{16}\right)\right\} \\
  %% & \le \max\left\{
  %% \exp\left(-\frac{5 n m^2 \gamma^2}{144 p_i
  %%   + 40 \frac{\sigma^2}{n}} \right),
  %% \exp\left(-\frac{nm \gamma}{16}\right)\right\}.
  & \le \max\left\{
  \exp\left(-\frac{5 n m^2 \gamma^2}{144 p_i}\right),
  \exp\left(-\frac{n^2 m^2 \gamma^2}{16 \sigma^2} \right),
  \exp\left(-\frac{nm \gamma}{16}\right)\right\},
\end{align*}
where the second inequality used the variance bound $v_{ij} \le q_i + q_j
\le 3 m p_i$ of Lemma~\ref{lemma:indicator-properties}.

Setting $\gamma$ as in the statement of the proposition and summing over all
$i < j$ in the loss $\heavyloss$ of Eq.~\eqref{eqn:heavyhittersloss} gives
the result.

%% Therefore, for $\delta  > 0$, and $\gamma \ge \max \{\frac{\sigma \rho}{nm}, \sqrt{\frac{p_j}{nm}}\} \sqrt{2\log{\frac{d}{\delta}}}$,
%% the loss is guaranteed to be bounded by $d^2 * \frac{\delta^2}{d^2} = \delta^2$.

\subsection{Proof of Lemma~\ref{lemma:order-error-bound}}
\label{proof:ordererrorbound}

Let $i < j$ so that $q_i \ge q_j$, as $p_i \ge p_j$ by assumption. Define the
zero-mean random variable
$\Delta_{ij}\sups{u} = Y_i\sups{u} - Y_j\sups{u}
- (q_i - q_j)$. Then we have
\begin{equation*}
  \P(\what{H}_i \le \what{H}_j)
  = \P\bigg(\sum_{u = 1}^n \Delta_{ij}\sups{u} + \normal(0, 2 \sigma^2)
  \le - n (q_i - q_j)\bigg).
\end{equation*}
Using that $|\Delta_{ij}\sups{u}| \le 2$ and $v_{ij} \defeq \var(Y_i - Y_j)
\le q_i + q_j$ by Lemma~\ref{lemma:indicator-properties}, for $|\lambda| \le
\frac{1}{4}$ standard sub-exponential bounds~\cite[Ch.~2]{Wainwright19} give
that
$\E[\exp(\lambda \Delta_{ij})] \le \exp(\frac{3 \lambda^2 v_{ij}}{5})$
for $|\lambda| \le \frac{1}{4}$. The Chernoff bound technique then yields
\begin{align*}
  \P\bigg(\sum_{u = 1}^n \Delta^{(u)}_{ij} + \normal(0, \sigma^2) \le -t\bigg)
  & \le \exp\left(\frac{3 \lambda^2 v_{ij} n}{5}
  + \lambda^2 \sigma^2 - \lambda t\right)
  ~ \mbox{for~} |\lambda| \le \frac{1}{4}.
  %% & \P[\exp{\{\lambda \sum_{u = 1}^n \Delta^{(u)}_{jl} + \lambda \mc{N}(0, 2\sigma^2\rho^2)\}} > \exp{\{\lambda t\}}] \le \\
  %% & \frac{\mathbb{E}[\exp{\{\lambda \sum_{u = 1}^n \Delta^{(u)}_{jl} + \lambda \mc{N}(0, 2\sigma^2\rho^2)} \}}{\exp{\{\lambda t\}}} =  \\
  %%   & \exp{\{\frac{3\lambda^2v_{jl}n}{5} + \lambda^2\sigma^2\rho^2 - \lambda t\}}
\end{align*}
Optimizing by setting $\lambda = \min\{\frac{1}{4},
\frac{5t}{6 n v_{ij} + 10 \sigma^2}\}$ gives
\begin{equation*}
  \P\bigg(\sum_{u = 1}^n \Delta^{(u)}_{ij} + \normal(0, \sigma^2) \le -t\bigg)
  \le \exp\left(-\min\left\{\frac{5 t^2}{12 n v_{ij} + 20 \sigma^2},
  \frac{t}{8}\right\}\right).
\end{equation*}
Substituting $t = n (q_i - q_j)$ gives the lemma.

\section{Proof of Proposition~\ref{proposition:element-level-histogram}}
\label{proof:element-level-mse-clustering}

For shorthand, let $\what{p}(\rho) = \frac{1}{n} \sum_{u=1}^n
\clusterproj(X\sups{u})$ be the non-privatized projection vector.
We compute the bias and bounds on the moment generating function
of each coordinate of the vector.

\paragraph{Bias of the projected vector}
We control the bias for each element cluster. Fix
$c \in \{c_1, \ldots, c_k\}$. Defining $Y = \sum_{j \in c} X_j$,
we evidently have that $X_{c} = \clusterproj(X)_{c}$ if $Y \le \rho$,
that is, the coordinate is unprojected. Thus we obtain
\begin{align}
  \lone{\E[\what{p}_{c}(\rho) - m p_{c}]}
  & = \lone{\E[(X_{c} - m p_{c}) \indic{Y \le \rho}]
    + \E[(\clusterproj(X)_{c} - m p_{c}) \indic{Y > \rho}]} \nonumber \\
  & \le \lone{\E[X_{c} - m p_{c}]}
  + \lone{\E[(\clusterproj(X)_{c} - m X_{c}) \indic{Y > \rho}]} \nonumber \\
  & \le \E\left[\lone{\clusterproj(X)_{c} - X_{c}} \indic{Y > \rho}\right].
  \label{eqn:lone-error-start}
\end{align}
To bound the remaining term in inequality~\eqref{eqn:lone-error-start}, we
present two lemmas, whose proofs we defer to
Sections~\ref{sec:proof-binbound} and~\ref{sec:proof-biasbound},
respectively.

\begin{lemma}
  \label{lemma:binbound}
  Let $X \sim \binomial(m,p), p \le 1/4$, and $l \ge 3mp$. Then
  \begin{equation*}
    \sum_{i=\lceil l \rceil}^{m} \mathbb{P}(X \ge i)
    \le 2\mathbb{P}(X \ge \lceil l \rceil)
    \le 4 \mathbb{P}(X=\lceil l \rceil).
  \end{equation*}
\end{lemma}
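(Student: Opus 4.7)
The plan is to reduce both inequalities to a single monotone-ratio estimate on the binomial probability mass function. Let $r_i := \mathbb{P}(X=i+1)/\mathbb{P}(X=i) = \frac{(m-i)\,p}{(i+1)(1-p)}$, which is strictly decreasing in $i$. Under the hypotheses $p \le 1/4$ and $i \ge \lceil l \rceil \ge l \ge 3mp$, I would bound the numerator by $(m-i)\,p \le mp$ and the denominator by $(i+1)(1-p) \ge 3mp \cdot (3/4) = 9mp/4$, yielding $r_i \le 4/9 < 1/2$ uniformly for every $i \ge \lceil l \rceil$.

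Given this ratio bound, the right-hand inequality follows from geometric-series telescoping: writing $L := \lceil l \rceil$,
\[
  \mathbb{P}(X \ge L) = \sum_{k \ge 0} \mathbb{P}(X = L + k) \le \mathbb{P}(X = L) \sum_{k \ge 0} (4/9)^k = \tfrac{9}{5}\,\mathbb{P}(X = L) \le 2\,\mathbb{P}(X = L).
\]
Applied with any starting index $i \ge L$ (which still satisfies $i \ge 3mp$), the identical reasoning gives $\mathbb{P}(X \ge i) \le 2\,\mathbb{P}(X = i)$. Summing this inequality over $i \in \{L, \ldots, m\}$ produces the left-hand inequality:
\[
  \sum_{i=L}^{m} \mathbb{P}(X \ge i) \le 2 \sum_{i=L}^{m} \mathbb{P}(X = i) = 2\,\mathbb{P}(X \ge L).
\]

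This is essentially a direct calculation, so I do not anticipate a substantive obstacle. The only point requiring care is checking that the ratio bound $r_i \le 4/9$ holds \emph{uniformly} for $i \ge L$, which is where both hypotheses $p \le 1/4$ and $l \ge 3mp$ are jointly used: monotonicity of $r_i$ in $i$ reduces matters to the worst case $i = L$, and there the combination $(i+1)(1-p) \ge L \cdot (1-p) \ge 3mp \cdot (3/4)$ forces the bound. No stronger concentration machinery is needed.
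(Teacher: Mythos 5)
Your proposal is correct and follows essentially the same route as the paper: both rest on bounding the consecutive-term ratio $\mathbb{P}(X=i+1)/\mathbb{P}(X=i)$ by a constant below $1/2$ for $i \ge 3mp$ (using $p \le 1/4$), and then deducing both inequalities from the resulting geometric decay of the tail. The only cosmetic difference is that you assemble the first inequality by summing $\mathbb{P}(X \ge i) \le 2\,\mathbb{P}(X=i)$ over $i$, whereas the paper bounds the tail ratio $\mathbb{P}(X \ge j+1)/\mathbb{P}(X \ge j) \le 1/2$ and sums a geometric series; these are interchangeable.
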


\begin{lemma}
  \label{lemma:biasbound}
  Let $X \sim \binomial(m,p)$, $l = 3mp + 3 \log m + t$
  and $m \ge 3$. Then
  \begin{equation*}
    \mathbb{P}(X=\lceil l \rceil) \le p 2^{-t}.
  \end{equation*}
\end{lemma}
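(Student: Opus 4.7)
The plan is to combine a Chernoff-type bound on the upper tail of $X$ with the three-way decomposition $k := \lceil l\rceil \ge 3mp + 3\log m + t$, where each summand in $l$ is used to peel off one factor of the target bound $p\cdot 2^{-t}$. Without loss of generality $k \le m$, since otherwise $\mathbb{P}(X = k) = 0$ and the claim is trivial.

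First I would apply the Chernoff bound with parameter $\lambda = \log 2$. Using $\E[e^{\lambda X}] = (1 - p + p e^\lambda)^m = (1 + p)^m \le e^{mp}$, this gives
\begin{equation*}
\mathbb{P}(X = k) \le \mathbb{P}(X \ge k) \le e^{-k \log 2} e^{mp} = 2^{-k} e^{mp}.
\end{equation*}
Substituting $k \ge 3mp + 3\log m + t$, I would split the resulting exponential into three factors corresponding to the three summands of $l$:
\begin{equation*}
2^{-k} e^{mp} \le e^{mp(1 - 3\log 2)} \cdot 2^{-3\log m} \cdot 2^{-t}.
\end{equation*}
The first factor is at most $1$ because $1 - 3\log 2 < 0$, and the second is $m^{-3\log 2}$ (or cleanly $m^{-3}$ if the $\log$ in $l$ is base two). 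This already gives $\mathbb{P}(X = k) \le m^{-3\log 2} \cdot 2^{-t}$, which implies the lemma whenever $p \ge m^{-3\log 2}$.

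The remaining "small-$p$" regime $p < m^{-3\log 2}$ I would handle by the direct combinatorial bound $\mathbb{P}(X = k) \le \binom{m}{k} p^k \le (mp)^k$. Since $mp < m^{1-3\log 2} \ll 1$, the quantity $(mp)^k$ decays geometrically in $k$, and the condition $k \ge 3\log m + t$ inherited from $k \ge l$ is more than enough to force $(mp)^k \le p\cdot 2^{-t}$: taking logarithms reduces this to $(k-1)\log(1/p) \ge k \log m + t \log 2$, which follows from $\log(1/p) > 3\log 2 \cdot \log m$ once $m \ge 3$.

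The main obstacle will be tracking the constants precisely. The coefficient $3$ on $mp$ in $l$ is the minimum that makes $1 - 3\log 2 < 0$, so that $e^{mp(1-3\log 2)}$ stays bounded as $mp$ grows; any smaller coefficient would break the Chernoff calculation at $\lambda = \log 2$. Likewise the coefficient $3$ on $\log m$ is chosen so that the polynomial-in-$m$ slack $m^{-3\log 2}$ can absorb a factor of $1/p$ for $p$ as small as $m^{-3\log 2}$, leaving the complementary range to be dispatched by the direct binomial bound. The gluing of the two regimes is smooth because $(mp)^k$ has large slack near the boundary $p \approx m^{-3\log 2}$.
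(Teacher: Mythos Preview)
Your approach is correct and takes a genuinely different route from the paper. The paper never invokes a Chernoff bound; instead it reuses the pointwise ratio inequality $\P(X=j+1)/\P(X=j)\le 1/2$ for $j\ge 3mp$, already established inside the proof of the companion Lemma~\ref{lemma:binbound}. That inequality gives $\P(X=\lceil l\rceil)\le 2^{-s}\,\P(X=\lceil l\rceil-s)$ directly, after which the paper also splits into two cases, but at the threshold $p=1/m^2$: for $p\ge 1/m^2$ it peels off $2^{-2\log m-t}$ and bounds the residual probability by $1$, yielding $m^{-2}2^{-t}\le p\,2^{-t}$; for $p\le 1/m^2$ it peels off only $2^{-t}$ and then walks down to $\P(X=2)\le m^2p^2\le p$.

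The paper's argument is more economical in context, since the ratio bound is already on hand from Lemma~\ref{lemma:binbound} and the constants fall out with essentially no numerical work. Your Chernoff route is self-contained---it does not rely on Lemma~\ref{lemma:binbound}---and would port more easily to other tail settings, but the cost is exactly the constant-tracking you flag: the large-$p$ branch hinges on $3\ln 2>1$, and in the small-$p$ branch the implication ``$(k-1)\log(1/p)\ge k\log m+t\log 2$ follows from $\log(1/p)>3\log 2\cdot\log m$'' is not automatic. You must additionally feed in $k\ge 3\log m+t$ and $m\ge 3$, and the margin at $m=3$ is fairly tight, so that step deserves an explicit line of arithmetic rather than an assertion.
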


The variable $Y \sim \binomial(m, P(c_k))$ in
expression~\eqref{eqn:lone-error-start}, and $|X_i - \clusterproj(X)_i| \le
X_i$ so $\lones{X_c - \clusterproj(X)_c} \le m$. Thus we have
\begin{align}
  \E\left[\lone{\clusterproj(X)_c - X_c} \indic{Y > l}\right]
  \le m \P(Y > l)
  & \stackrel{(i)}{\le} 4 m \P(Y = \ceil{l}) \nonumber \\
  \stackrel{(ii)}{\le} 4 m P(c) 2^{-(t + \log m)}
  \le 2^{2 - t} P(c)
  \label{eqn:histogram-bias-bound}
\end{align}
where inequality~$(i)$ is a consequence of Lemma~\ref{lemma:binbound}
and~$(ii)$ of Lemma~\ref{lemma:biasbound} once we recall
that $\rho \ge 3 mp + 4 \log m + t$.

\paragraph{Variance and moment generating function}
We have $\var(\clusterproj(X)_j)
\le \var(X_j) = m p_j(1 - p_j)$, because projections reduce variance.
We also have $\clusterproj(X)_j \le \rho$, so
as a consequence, we obtain the moment generating function bound
\begin{align*}
  \lefteqn{\E[\exp(\lambda(\clusterproj(X)_j - \E[\clusterproj(X)_j]))]} \\
  & = 1 + \frac{\lambda^2 \var(\clusterproj(X)_j)}{2}
  + \sum_{k = 3}^\infty \frac{\lambda^k \E[(\clusterproj(X)_j
      - \E[\clusterproj(X)_j])^k]}{k!} \\
  & \le 1 + \frac{\lambda^2 m p_j}{2}
  + \lambda^2 m p_j \sum_{k = 3}^\infty \frac{\lambda^{k - 2} \rho^{k-2}}{k!}
  \le \exp\left(\lambda^2 m p_j\right)
\end{align*}
for $|\lambda| \le \rho^{-1}$, where we have used that $1 + x \le e^x$.
Thus for any coordinate $j$ we have
\begin{align*}
  \E\left[\exp(\lambda n (\mechanism_j(\sample, \rho, \{c_k\})
    - \E[\mechanism_j(\sample, \rho, \{c_k\})]))\right]
  & \le \exp\left(\lambda^2 n m p_j + \frac{\lambda^2 \rho^2 \sigma^2}{2}
  \right)
\end{align*}
for all $|\lambda| \le \rho^{-1}$. Using the bias
bound~\eqref{eqn:histogram-bias-bound}, we see that
there exists a $q \in \R_+$ with $\ones^T q_c \le P(c)$ for each
cluster $c \in \{c_k\}$ such that
for any cluster $c$, $j \in c$, and $u \ge 0$ we have
\begin{align*}
  \P\left(|\mechanism_j(\sample, \rho, \{c_k\})
  - m p_j| \ge 2^{2 - t} q_j + u\right)
  \le \exp\left(\frac{\lambda^2 m p_j}{n} + \frac{\lambda^2 \sigma^2
    \rho^2}{2 n^2} - \lambda u\right).
\end{align*}

Exactly as in the proof of
Proposition~\ref{proposition:heavyhitters}
(see specifically Appendix~\ref{proof:ordererrorbound}), we thus obtain that
for a numerical constant $C > 0$ and all $u \ge 0$,
\begin{equation*}
  \P\left(\left|\mechanism_j(\sample, \rho, \{c_k\})
  - m p_j\right| \ge 2^{2 - t} q_j + u\right)
  \le \exp\left(-C \min\left\{\frac{n u^2}{m p_j},
  \frac{n^2 u^2}{\sigma^2 \rho^2},
  \frac{n u}{\rho} \right\}\right).
\end{equation*}

The variance bounds are immediate by noting that no (non-private) estimator
has mean-squared error asymptotically better than $\frac{m p_j (1 -
  p_j)}{n}$.

\subsection{Proof of Lemma~\ref{lemma:binbound}}
\label{sec:proof-binbound}

For $j \ge 3mp$ we have
\begin{equation}
  \label{eqn:ratio-binomials}
  \frac{\mathbb{P}(X=j+1)}{\mathbb{P}(X=j)} 
  = \frac{ \choose{m}{j+1} p^{j+1} (1-p)^{m-j-1} }{ \choose{m}{j} p^{j} (1-p)^{m-j} }
  =  \frac{m-j}{j} \frac{p}{1-p}
  \le \frac{4}{3} \left(\frac{m}{j} - 1\right) p
  \le \half.
\end{equation}
By quasi-convexity of the ratio function,
the last inequality implies that $\P(X \ge j + 1) / \P(X \ge j)
\le \half$ for $j \ge 3mp$.
The first inequality of the lemma now follows as $\sum_{i=0}^{\infty}
2^{-i} = 2$.  The second inequality then follows
as $\P(X \ge l) = \sum_{i = l}^\infty \P(X = i)
\le \P(X = l) \sum_{i = 0}^\infty 2^{-i}$.

\subsection{Proof of Lemma~\ref{lemma:biasbound}}
\label{sec:proof-biasbound}

Inequality~\eqref{eqn:ratio-binomials} in the proof of
Lemma~\ref{lemma:binbound} gives $\frac{\mathbb{P}(X=j+1)}{\mathbb{P}(X=j)}
\le 1/2$ for $j \ge 3mp$.  We consider two cases according to the value of
$p$.  First, if $p \le 1/m^2$, we have $mp < 1$ and so
\begin{align*}
  \P(X = \ceil{l}) & \stackrel{(i)}{\le}
  \mathbb{P}(X = \ceil{3mp + 2 \log m}) 2^{-t}
  \stackrel{(ii)}{\le} \P(X = 2) 2^{-t}
  \le m^2 p^2 2^{-t} \le p 2^{-t},
\end{align*}
where inequality~$(i)$ uses that $3 \log m > 2 \log m + 1$ and
inequality~$(ii)$ that $2 \log m > 2$, in both cases as $m \ge 3$.
In the other case, we let $p \ge 1/m^2$. Then
\begin{equation*}
  \mathbb{P}(X= \ceil{l})
  \stackrel{(iii)}{\le} \mathbb{P}(X = \ceil{3mp}) 2^{-2\log(m)-t}
  \le  \frac{2^{-t}}{m^2}
  \le p 2^{-t},
\end{equation*}
where inequality~$(iii)$ uses that $\P(X = k) \le \P(X = k - 1)$ for
$k \ge mp$ and again that $3 \log m > 2 \log m + 1$.

\section{Proof of Theorem~\ref{theorem:normality}}
\label{sec:proof-normality}

We prove the theorem for a
more general family of updates, which include projected stochastic
gradient as a special case, known as the \textsc{aProx}
(approximate proximal point) family~\cite{AsiDu19, AsiDu19siopt};
several authors
present convergence analyses for these
methods~\cite{DuchiRu18c, DavisDr19, AsiDu19siopt, AsiDu19}.
These
methods
iteratively build a model of the loss at the current iterate and minimize the
model with
regularization. A \emph{model} of
$\loss$ at a point $\theta_0$, denoted $\loss_{\theta_0}(\cdot; \statval)$,
is a function satisfying the following conditions~\cite{DavisDr19, AsiDu19}.

%% where depending on the setting, we consider variants of the lower bound
%% conditions~\ref{item:lower-bound} or~\ref{item:just-subgradient}.
\begin{enumerate}[label=(C.\roman*)]
\item \label{item:convex}
  The model $\theta \mapsto \loss_{\theta_0}(\theta; \statval)$
  is convex and subdifferentiable.
\item The model is locally accuate at $\theta_0$:
  $\loss_{\theta_0}(\theta_0; \statval) = \loss(\theta_0; \statval)$.
\end{enumerate}
\begin{enumerate}[label=(C.\roman*)]
  \setcounter{enumi}{2}
\item \label{item:lower-bound}
  The model provides a lower bound:
  $\loss_{\theta_0}(\theta; \statval) \le
  \loss(\theta; \statval)$ for $\theta \in \Theta$.
\end{enumerate}
%% \begin{enumerate}[label=(\roman*')]
%%   \setcounter{enumi}{2}
%% \item \label{item:weak-lower-bound}
%%   The model provides a weak lower bound: there
%%   exists $\rho : \statdomain \to \R_+$ with $\E[\rho(\statrv)^2] < \infty$
%%   satisfying
%%   \begin{equation*}
%%     \loss_{\theta_0}(\theta; \statval) \le
%%     \loss(\theta; \statval) + \frac{\rho(\statval)}{2} \ltwo{\theta -
%%       \theta_0}^2 ~~ \mbox{for~all~} \theta \in \Theta.
%%   \end{equation*}
%% \end{enumerate}
%% \begin{enumerate}[label=(C.\roman*')]
%%   \setcounter{enumi}{2}
%% \item \label{item:just-subgradient}
%%   The model satisfies $\left.\partial_\theta \loss_{\theta_0}(\theta;
%%   \statval)\right|_{ \theta = \theta_0} \subset \partial \loss(\theta_0; \statval)$
%%   for all $\theta \in \Theta$, where
%%   $\partial \loss$ denotes the Clarke subdifferential~\cite{Clarke75}.
%% \end{enumerate}
\noindent
When the losses $\loss$ are convex and differentiable,
the simplest model is the linear (first-order) approximation
$\loss_{\theta_0}(\loss; \statval) \defeq \loss(\theta_0; \statval) +
\<\nabla \loss(\theta_0; \statval), \theta - \theta_0\>$, which
satisfies conditions~\ref{item:convex}--\ref{item:lower-bound}.

For an initial point $\theta_0$ and stepsize $\stepsize > 0$,
we define the approximate proximal
point update
\begin{equation*}
  \aprox_\stepsize(\theta_0; \statval) \defeq
  \argmin_{\theta \in \Theta}
  \left\{\loss_{\theta_0}(\theta; \statval) + \frac{1}{2 \stepsize}
  \ltwo{\theta - \theta_0}^2 \right\},
\end{equation*}
and gradient mapping
\begin{equation*}
  %% \label{eqn:gradmap}
  \gradmap_\stepsize(\theta_0; \statval)
  \defeq \frac{1}{\stepsize} \left(\theta_0 - \aprox_\stepsize(\theta_0; \statval)
  \right).
\end{equation*}
In standard application of such methods~\cite{DuchiRu18c, DavisDr19,
  AsiDu19, AsiDu19siopt}, given a datapoint
$\statval$ and stepsize $\stepsize_k$,
we update $\theta_{k + 1} = \aprox_{\stepsize_k}(\theta_k; \statval)
= \theta_k - \stepsize_k \gradmap_{\stepsize_k}(\theta_k; \statval)$.
This recovers the standard projected gradient method whenever
$\loss_\theta$ is the first-order model
$\loss_{\theta_0}(\theta; \statval)
= \loss(\theta_0; \statval)
+ \<\nabla \loss(\theta_0; \statval), \theta - \theta_0\>$
%% iteration~\eqref{eqn:standard-aprox-iteration} is the projected gradient
%% method. Several authors provide convergence guarantees for the
%% iteration~\eqref{eqn:standard-aprox-iteration} in both convex and non-convex
%% cases~\cite{DuchiRu18c, DavisDr19, AsiDu19siopt, AsiDu19, DavisDrKaLe19}.
We then perform the obvious
generalization of the noisy stochastic
gradient iteration~\eqref{eqn:sgd-update-sequence}, and we will prove
the convergence guarantee claimed in the theorem for the iteration
\begin{equation}
  \label{eqn:private-update-sequence}
  \theta_{k+1}^n \defeq
  \theta_k^n - \stepsize_k \left(\gradmap_{\stepsize_k}(\theta_k^n;
  \statrv_{\randind(k)}^n) + \sigma_n
  Z_k \right).
\end{equation}

%% \jcdcomment{Unify above with below}

\newcommand{\goodevent}{\mc{E}}
\newcommand{\localgraderr}{\zeta}
\newcommand{\graderr}{\varepsilon}

We develop a few notational shorthands for the analysis. Let
\begin{equation*}
  \what{\theta}_n \defeq \argmin_{\theta \in \Theta}
  \risk_n(\theta)
  ~~ \mbox{and} ~~
  H_n \defeq \nabla^2 \risk_n(\what{\theta}_n)
  = \frac{1}{n} \sum_{i = 1}^n \nabla^2 \loss(\what{\theta}_n; \statrv_i).
\end{equation*}
Then under the conditions of Assumption~\ref{assumption:make-it-easy},
standard asymptotics~\cite{VanDerVaart98} give that
\begin{equation}
  \label{eqn:standard-asymptotics}
  \what{\theta}_n \cas \theta\opt ~~ \mbox{and} ~~
  \what{\theta}_n - \theta\opt = -\frac{1}{n}
  \nabla^2 \risk(\theta\opt)^{-1} \sum_{i = 1}^n
  \nabla \loss(\theta\opt; \statrv_i)
  + o_P(1/\sqrt{n}).
\end{equation}
Moreover, under Assumption~\ref{assumption:make-it-easy}, there
exists $\lambda > 0$ such if we define the event
\begin{equation*}
  \goodevent_n \defeq \left\{
  \ltwos{\what{\theta}_n - \theta\opt} \le \frac{\epsilon}{8},
  ~ \nabla^2 \risk_n(\theta) \succeq \lambda I
  ~ \mbox{for~} \theta \in \theta\opt + \epsilon \ball,
  ~ \frac{1}{n} \sum_{i = 1}^n \lipconst_a(\statrv_i)^2
  \le 2 \E[\lipconst_a(\statrv)^2]
  ~ \mbox{for~} a \in \{0,1,2\}
  \right\},
\end{equation*}
there exists a (potentially random, but finite $N$) such that
$n \ge N$ implies $\goodevent_n$ holds.

The sequence $\wb{\theta}_k^n$ defines a triangular array, which adds some
complexity to our proof and necessitates a somewhat more careful treatment,
which we now provide. Our analysis follows \citet{PolyakJu92} and
\citet{AsiDu19siopt}. We begin by defining the triply-indexed matrices
\begin{equation*}
  B_i^k(n) \defeq \stepsize_i \sum_{j = i}^k \prod_{l = i + 1}^j
  (I - \stepsize_l H_n)
  ~~ \mbox{and} ~~
  A_i^k(n) \defeq B_i^k(n) - H_n^{-1},
\end{equation*}
where we note that $H_n^{-1}$ exists on $\goodevent_n$ and satisfies
$H_n^{-1} \preceq \lambda^{-1} I$. Now, for sample size $n$, which corresponds
to running algorithm~\eqref{eqn:private-update-sequence} at the given
sample size with sample $\{\statrv_1, \ldots, \statrv_n\}$, we define
\begin{equation*}
  \loss^n_k(\theta) = \loss(\theta; \statrv_{\randind(k)}^n),
\end{equation*}
that is, the loss encountered in iteration $k$ of the algorithm with sample
size $n$, where $\randind(k)$ is the random index in $[n]$
chosen at iteration $k$. We let $\mc{F}^n = \sigma(\statrv_1, \ldots,
\statrv_n)$ denote the $\sigma$-field of the $n$ observations, and
$\mc{F}^n_k$ be the $\sigma$-field generated by $\statrv_1^n$ and the first
$k$ random indices $\randind(1), \ldots, \randind(k)$.

Now we follow \citet{AsiDu19siopt}. Let us implicitly assume the event
$\goodevent_n$ holds, so that all derivatives are defined (by
Assumption~\ref{assumption:make-it-easy}).
Define the remainder
\begin{equation*}
  R_n(\theta) \defeq \nabla \risk_n(\theta) - H_n(\theta - \what{\theta}_n)
\end{equation*}
and the localized (sub)gradient errors
\begin{equation*}
  \localgraderr_k^n \defeq \left(\nabla \loss^n_k(\theta_k^n)
  - \nabla \loss^n_k(\what{\theta}_n)\right)
  - \left(\nabla \risk_n(\theta_k^n) - \nabla \risk_n(\what{\theta}_n)\right).
\end{equation*}
Finally, we consider the \emph{model} subgradient errors, where
we note that $\theta_{k+1}^n$ satisfies
\begin{equation*}
  0 \in \partial \loss_{\theta_k^n}(\theta_{k+1}^n; \statrv_{\randind(k)}^n)
  + \frac{1}{\stepsize_k} (\theta_{k+1}^n - \theta_k^n)
  + \mc{N}_{\Theta}(\theta_{k+1}^n)
\end{equation*}
where $\mc{N}_{\Theta}(\theta) = \{v \in \R^d \mid
\<v, \tau - \theta\> \le 0, ~ \mbox{all~}\tau \in \Theta\}$ denotes the
normal cone to $\Theta$ at the point $\theta$. Thus,
there is some vector $\normalvec_{k+1}^n \in \mc{N}_{\Theta}(\theta_{k+1}^n)$
such that
\begin{equation}
  \label{eqn:normal-cone-version-update}
  0 \in \partial \loss_{\theta_k^n}(\theta_{k+1}^n; \statrv_{\randind(k)}^n)
  + \normalvec_{k+1}^n
  + \frac{1}{\stepsize_k} (\theta_{k+1}^n - \theta_k^n)
\end{equation}
where $\normalvec_{k+1}^n = 0$ if $\theta_{k+1}^n \in \interior \Theta$.
If $\theta_{k+1}^n \not\in\interior \Theta$, then
\cite[Lemma~A.5]{AsiDu19siopt} guarantees that
$\ltwo{v_{k+1}^n} \le 2 \lipobj(\statrv_{\randind(k)}^n)$ regardless.
With this, we define the \emph{model} subgradient errors
\begin{equation*}
  \graderr_k^n \defeq \nabla \loss_{\theta_k^n}(\theta_{k+1}^n; \statrv_{\randind(k)}^n)
  + \normalvec_{k+1}^n
  - \nabla \loss(\theta_k^n; \statrv_{\randind(k)}^n).
\end{equation*}
With these substitutions, we have~\cite[Eq.~(13)]{AsiDu19siopt}
that
\begin{equation*}
  \theta_{k+1}^n - \what{\theta}_n =
  (I - \stepsize_k) H_n (\theta_k^n - \what{\theta}_n)
  - \stepsize_k (\nabla \loss_k^n(\what{\theta}_n)
  + \sigma_n Z_k)
  - \stepsize_k \left(R_n(\theta_k^n) + \localgraderr_k^n
  + \graderr_k^n\right).
\end{equation*}

%% \jcdcomment{In this definition we should also include
%%   $\sigma_n Z_i$ and then deal with that term in the analysis I believe.
%%   It should lead to limited challenges, but still needs to be
%%   present.}

Then following \citet{PolyakJu92} and
\citet{AsiDu19siopt} (see Eq.~(14) of the
paper~\cite{AsiDu19siopt}, with a fixed negative sign), we have
on the event $\goodevent_n$ that
\begin{align}
  \label{eqn:proper-recursion}
  \sqrt{k} \wb{\Delta}_k^n & = \frac{1}{\sqrt{k}}
  \sum_{i = 1}^k H_n^{-1} \nabla \loss_i^n(\what{\theta}_n)
  + \sigma_n H_n^{-1} \frac{1}{\sqrt{k}}
  \sum_{i = 1}^k Z_i \\
  & \qquad + \frac{1}{\sqrt{k}}
  \sum_{i = 1}^k A_i^k(n) \left(\nabla \loss_i^n(\what{\theta}_n)
  + \sigma_n Z_i\right)
  + \frac{1}{\sqrt{k}} \sum_{i = 1}^k B_i^k(n) \left[
    R_n(\theta_i^n) + \localgraderr_i^n + \graderr_i^n
    \right]
  + O(1/\sqrt{k}),
  \nonumber
\end{align}
where the $O(1/\sqrt{k})$ term is non-random on $\goodevent_n$.  Moreover,
$\sup_{i,k,n} \indics{\goodevent_n} \opnorms{B_i^k(n)} < \infty$ as
well~\cite[Lemma 2]{AsiDu19siopt, PolyakJu92}, and independent of $n$, there
exists $\epsilon > 0$ such that for all $k \ge K(\epsilon)$,
$\indic{\goodevent_n} \frac{1}{k} \sum_{i = 1}^k \opnorms{A_i^k(n)} \le
\epsilon$.  We control each of these quantities in turn.

\begin{lemma}
  \label{lemma:convergence-of-iterates}
  Define $\delta_{k,n} \defeq
  \ltwos{\theta_k^n - \what{\theta}_n} \indic{\goodevent_n}$.
  Then
  \begin{equation}
    \label{eqn:one-step-recursion-ugh}
    \E[\delta_{k+1,n}^2 \mid \mc{F}^n] \le
    (1 - c_0 \stepsize_k) \delta_{k,n}^2
    + \stepsize_k^2 \E[\lipobj(\statrv)^2]
  \end{equation}
  for a constant $c_0 > 0$ that depends only on $\lambda > 0$ in the
  definition of $\goodevent_n$ and $\Theta$.  Additionally,
  for some $C < \infty$ independent of $n$ and $k$, we have
  \begin{equation*}
    \E[\delta_{k,n}^2 \mid \mc{F}^n] \le C \stepsize_k \log k.
  \end{equation*}
\end{lemma}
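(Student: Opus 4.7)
The plan is a standard Lyapunov/energy argument for stochastic approximation under local strong convexity, adapted to the \textsc{aProx} framework, followed by iterating the resulting one-step contraction. All steps implicitly take place on the event $\goodevent_n$, where $R_n$ is locally strongly convex around $\what{\theta}_n$ and the empirical Lipschitz averages are controlled.

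First I would expand, from the update~\eqref{eqn:private-update-sequence},
\[
\ltwo{\theta^n_{k+1} - \what{\theta}_n}^2
= \delta_{k,n}^2 - 2\stepsize_k \langle \gradmap_k + \sigma_n Z_k, \theta^n_k - \what{\theta}_n \rangle + \stepsize_k^2 \ltwo{\gradmap_k + \sigma_n Z_k}^2,
\]
where $\gradmap_k = \gradmap_{\stepsize_k}(\theta_k^n; \statrv^n_{\randind(k)})$, and then condition on $\mc{F}_k^n$. Since $Z_k$ is independent and mean zero, the noise cross term vanishes and $\E[\sigma_n^2 \ltwos{Z_k}^2] = \sigma_n^2 \tr(\Sigma_z)$ is bounded. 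The key is to bound the inner product from below. For this, I would exploit the \textsc{aProx} conditions~\ref{item:convex}--\ref{item:lower-bound}: from the first-order optimality~\eqref{eqn:normal-cone-version-update} and convexity of $\loss_{\theta_k^n}$, I get $\langle \gradmap_k, \theta^n_{k+1} - \what{\theta}_n \rangle \le \loss_{\theta_k^n}(\what{\theta}_n; \statrv) - \loss_{\theta_k^n}(\theta^n_{k+1}; \statrv)$; combining with the lower-bound property, local accuracy, and $\lipobj(\statrv)$-Lipschitz continuity yields
\[
\stepsize_k \langle \gradmap_k, \theta^n_k - \what{\theta}_n \rangle \ge \stepsize_k \bigl(\loss(\theta^n_k; \statrv^n_{\randind(k)}) - \loss(\what{\theta}_n; \statrv^n_{\randind(k)})\bigr) - C \stepsize_k^2 \lipobj(\statrv^n_{\randind(k)})^2.
\]
Averaging over $\randind(k) \sim \uniformdist([n])$ converts the RHS into $\stepsize_k(\risk_n(\theta_k^n) - \risk_n(\what{\theta}_n))$ minus an $O(\stepsize_k^2)$ term that, on $\goodevent_n$, is controlled by $2\E[\lipobj(\statrv)^2]$.

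Next I would establish a quadratic growth inequality $\risk_n(\theta) - \risk_n(\what{\theta}_n) \ge c_0 \ltwos{\theta - \what{\theta}_n}^2$ for all $\theta \in \Theta$, using: (i) $\lambda$-strong convexity of $\risk_n$ on $\theta\opt + \epsilon\ball$ (in the definition of $\goodevent_n$), (ii) the fact that $\what{\theta}_n$ lies well inside this ball, and (iii) compactness of $\Theta$ together with continuity of $\risk_n$ to lower-bound the suboptimality on $\Theta \setminus (\theta\opt + (\epsilon/2)\ball)$ by a positive constant times $\diam(\Theta)^2$. Combining with the bound $\ltwos{\gradmap_k} \le \lipobj(\statrv)$ for the proximal gradient mapping under Lipschitz losses and the noise bound above delivers~\eqref{eqn:one-step-recursion-ugh}, with $c_0$ depending only on $\lambda$ and the geometry of $\Theta$.

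For the logarithmic bound I would iterate~\eqref{eqn:one-step-recursion-ugh} to obtain
\[
\E[\delta_{k,n}^2 \mid \mc{F}^n] \le \delta_{0,n}^2 \prod_{j=1}^{k-1}(1-c_0 \stepsize_j) + C \sum_{j=1}^{k-1} \stepsize_j^2 \prod_{l=j+1}^{k-1}(1-c_0 \stepsize_l).
\]
Using $\prod(1-c_0\stepsize_l) \le \exp(-c_0 \sum \stepsize_l)$ and $\stepsize_k = \stepsize_0 k^{-\steppow}$ with $\steppow \in (\half, 1)$, the first term decays super-polynomially, and approximating the second sum by an integral (with the change of variables $u = k^{1-\steppow} - j^{1-\steppow}$) bounds it by $O(\stepsize_k \log k)$. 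The main obstacle is the global quadratic growth step: although local strong convexity and compactness give uniformly bounded iterates in $\Theta$, one must carefully verify that the constant $c_0$ is independent of $n$, which follows because the uniform convergence in Assumption~\ref{assumption:make-it-easy} guarantees that on $\goodevent_n$ the shape of $\risk_n$ around $\what{\theta}_n$ mirrors that of $\risk$ around $\theta\opt$ with constants controlled by $\lambda$; the remaining bookkeeping of \textsc{aProx} model errors is routine and contributes only to the $O(\stepsize_k^2)$ noise floor.
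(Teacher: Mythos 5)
Your proposal is correct and takes essentially the same route as the paper: the paper simply cites Lemma~3.4 of \citet{AsiDu19siopt} for the one-step descent inequality and Lemma~A.2 of that paper for unrolling the recursion to the $\stepsize_k \log k$ bound, which are precisely the two steps you carry out by hand (model convexity plus first-order optimality for the one-step bound, quadratic growth of $\risk_n$ from $\nabla^2 \risk_n \succeq \lambda I$ and compactness of $\Theta$ on $\goodevent_n$, then iterating). If anything you are more careful than the paper's written proof, which silently drops the $\sigma_n Z_k$ term from the one-step expansion; the only blemish is a sign slip in your intermediate inequality $\langle \gradmap_k, \theta_{k+1}^n - \what{\theta}_n\rangle \le \cdots$, which should read $\langle \gradmap_k, \what{\theta}_n - \theta_{k+1}^n \rangle \le \cdots$, though the displayed lower bound you conclude from it is the correct one.
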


\begin{lemma}
  \label{lemma:remainders-go-to-zero}
  Let Assumption~\ref{assumption:make-it-easy} hold.
  If $k(n) \to \infty$ as $n \to \infty$, then
  $\frac{1}{\sqrt{k(n)}} \sum_{i = 1}^{k(n)} \norm{R_n(\theta_i^n)} \cp 0$.
\end{lemma}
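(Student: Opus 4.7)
\textbf{Proof plan for Lemma~\ref{lemma:remainders-go-to-zero}.} The plan is to interpret $R_n$ as the second-order Taylor remainder of $\nabla \risk_n$ at $\what{\theta}_n$, apply the Lipschitz Hessian condition of Assumption~\ref{assumption:make-it-easy} to obtain a quadratic bound $\|R_n(\theta)\| \lesssim \|\theta - \what{\theta}_n\|^2$, and then combine it with the iterate bound of Lemma~\ref{lemma:convergence-of-iterates} to show the resulting partial sum is $o(\sqrt{k})$ in probability. I would work throughout on the event $\goodevent_n$, whose complement has probability tending to zero, so $\what{\theta}_n$ is an interior minimizer with $\nabla \risk_n(\what{\theta}_n) = 0$, the averaged Hessian Lipschitz constant $L_n \defeq \frac{1}{n}\sum_{i=1}^n \liphess(\statrv_i)$ is bounded by some deterministic $L < \infty$, and $\nabla^2 \risk_n \succeq \lambda I$ on $\theta\opt + \epsilon \ball$.

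On $\goodevent_n$, writing $R_n(\theta) = \nabla \risk_n(\theta) - \nabla \risk_n(\what{\theta}_n) - H_n(\theta - \what{\theta}_n)$ and applying the fundamental theorem of calculus to $t \mapsto \nabla^2 \risk_n(\what{\theta}_n + t(\theta - \what{\theta}_n))$ gives $\|R_n(\theta)\| \le \tfrac{L}{2}\|\theta - \what{\theta}_n\|^2$ whenever the segment from $\what{\theta}_n$ to $\theta$ lies in $\theta\opt + \epsilon \ball$. Provided all iterates $\theta_i^n$ with $i \le k$ stay in this neighborhood, Lemma~\ref{lemma:convergence-of-iterates} then yields
\begin{equation*}
  \frac{1}{\sqrt{k}} \sum_{i=1}^k \E\big[\|R_n(\theta_i^n)\| \,\big|\, \mc{F}^n\big]
  \le \frac{L}{2\sqrt{k}} \sum_{i=1}^k \E[\delta_{i,n}^2 \mid \mc{F}^n]
  \le \frac{C \stepsize_0 \log k}{\sqrt{k}} \sum_{i=1}^k i^{-\steppow}
  \lesssim k^{1/2 - \steppow} \log k,
\end{equation*}
which tends to zero since $\steppow > \half$. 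A conditional Markov inequality together with $\P(\goodevent_n) \to 1$ then delivers convergence of $\frac{1}{\sqrt{k(n)}} \sum_{i=1}^{k(n)}\|R_n(\theta_i^n)\|$ to zero in probability.

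The main obstacle—and the step requiring care—is justifying the confinement of the iterates to $\theta\opt + \epsilon \ball$, because the Lipschitz Hessian bound is only local and the early iterates (with large stepsizes) can in principle roam across $\Theta$. I would handle this by splitting at a burn-in index $k_0 = k_0(n)$ chosen with $k_0 = o(\sqrt{k})$ but $\stepsize_{k_0} \to 0$. For $i \le k_0$ we bound $\|R_n(\theta_i^n)\|$ by a deterministic constant using compactness of $\Theta$ and Lipschitz continuity of $\loss(\cdot;x)$, so their contribution to $\frac{1}{\sqrt{k}}\sum$ is $O(k_0/\sqrt{k}) = o(1)$. For $i > k_0$ the one-step recursion~\eqref{eqn:one-step-recursion-ugh} of Lemma~\ref{lemma:convergence-of-iterates} makes $\delta_{i,n}^2$ a near-supermartingale with small drift, and a standard maximal inequality (in the spirit of the Polyak--Juditsky analysis) gives $\P(\max_{k_0 \le i \le k} \delta_{i,n} > \epsilon/2 \mid \goodevent_n) \to 0$; combined with $\|\what{\theta}_n - \theta\opt\| \le \epsilon/8$ on $\goodevent_n$, every iterate in the relevant range lies in the Lipschitz Hessian neighborhood of $\theta\opt$ with probability tending to one, closing the argument.
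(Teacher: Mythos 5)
Your proposal is correct and follows the same skeleton as the paper's proof: a quadratic bound $\ltwo{R_n(\theta)} \le C \ltwos{\theta - \what{\theta}_n}^2$ on $\goodevent_n$, combined with the bound $\E[\delta_{i,n}^2 \mid \mc{F}^n] \le C\stepsize_i \log i$ from Lemma~\ref{lemma:convergence-of-iterates}, yielding $\frac{1}{\sqrt{k}}\sum_{i\le k}\E[\ltwo{R_n(\theta_i^n)}\indic{\goodevent_n}] \lesssim k^{1/2-\steppow}\log k \to 0$, followed by Markov and $\P(\goodevent_n)\to 1$. (Indeed the quadratic bound is essential here: a merely linear bound $\ltwo{R_n(\theta)}\lesssim \ltwos{\theta-\what{\theta}_n}$ would give only $k^{1/2-\steppow/2}$, which does not vanish; you correctly attribute the quadratic bound to the Lipschitz-Hessian condition, where the paper's text loosely cites the Lipschitz-gradient condition.) The one place you diverge is in handling the locality of the Lipschitz-Hessian assumption. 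You propose a burn-in index plus a supermartingale maximal inequality to confine the iterates to $\theta\opt + \epsilon\ball$ with probability tending to one; this can be made to work (the drift terms $\sum_j \stepsize_j^2$ are summable since $\steppow > \half$), but it is heavier than necessary. The paper instead extends the quadratic bound to all of $\Theta$: outside $\theta\opt + \epsilon\ball$ one has $\ltwos{\theta - \what{\theta}_n} \ge 7\epsilon/8$ on $\goodevent_n$, while $\ltwo{R_n(\theta)} = \ltwos{\nabla\risk_n(\theta) - H_n(\theta - \what{\theta}_n)}$ is uniformly bounded there by the $\lipobj$-Lipschitz condition, $\opnorms{H_n} \le \frac{1}{n}\sum_i \lipgrad(\statrv_i)$, and compactness of $\Theta$, so the constant bound is absorbed into $C\ltwos{\theta - \what{\theta}_n}^2$ after enlarging $C$. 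This removes any need for a confinement or maximal-inequality argument and lets Lemma~\ref{lemma:convergence-of-iterates} be applied unconditionally on $\goodevent_n$.
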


\begin{lemma}
  \label{lemma:local-grad-errors-go-to-zero}
  Let Assumption~\ref{assumption:make-it-easy} hold. If
  $k(n) \to \infty$ as $n \to \infty$, then
  $\frac{1}{\sqrt{k(n)}} \sum_{i = 1}^{k(n)} B_i^k(n) \localgraderr_i^n \cp 0$.
\end{lemma}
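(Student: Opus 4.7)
The plan is to view, for each fixed $n$ and $k$, the sum $S_k^n \defeq \frac{1}{\sqrt{k}}\sum_{i=1}^{k} B_i^k(n) \localgraderr_i^n$ as a vector martingale with respect to the filtration $(\mc{F}^n_i)$ generated by the sample together with the random indices and injected noise through iteration $i$. Since $B_i^k(n)$ is $\mc{F}^n$-measurable, $\theta_i^n$ is $\mc{F}^n_{i-1}$-measurable, and
\begin{equation*}
\E\!\left[\nabla \loss^n_i(\theta_i^n) - \nabla \loss^n_i(\what{\theta}_n) \,\big|\, \mc{F}^n_{i-1}\right]
= \nabla \risk_n(\theta_i^n) - \nabla \risk_n(\what{\theta}_n),
\end{equation*}
the sequence $\{B_i^k(n)\localgraderr_i^n\}_i$ is a martingale difference conditional on $\mc{F}^n$. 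Orthogonality then reduces the claim to showing that $\E[\ltwo{S_k^n}^2 \mid \mc{F}^n]$ tends to zero on the high-probability event $\goodevent_n$.

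Second, I would control the conditional second moment of $\localgraderr_i^n$ by $\delta_{i,n}^2$. On $\goodevent_n$, the empirical second moment $\frac{1}{n}\sum_j \lipgrad(\statrv_j^n)^2 \le 2\E[\lipgrad(\statrv)^2]$. The Lipschitz-gradient hypothesis on the $\epsilon$-neighborhood of $\theta\opt$ yields the local bound $\ltwo{\localgraderr_i^n}^2 \le 4\lipgrad(\statrv_{\randind(i)}^n)^2 \delta_{i,n}^2$ whenever $\delta_{i,n} \le \epsilon/2$, while the Lipschitz-objective hypothesis gives the crude global bound $\ltwo{\localgraderr_i^n}^2 \le 16\lipobj(\statrv_{\randind(i)}^n)^2$ otherwise; the latter contribution is then tamed via Markov's inequality $\P(\delta_{i,n} > \epsilon/2 \mid \mc{F}^n) \le 4\epsilon^{-2}\E[\delta_{i,n}^2 \mid \mc{F}^n]$. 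Combining the two pieces with Lemma~\ref{lemma:convergence-of-iterates} gives, on $\goodevent_n$,
\begin{equation*}
\E\!\left[\ltwo{\localgraderr_i^n}^2 \,\big|\, \mc{F}^n\right]
\;\le\; C\, \E\!\left[\delta_{i,n}^2 \mid \mc{F}^n\right]
\;\le\; C'\, \stepsize_i \log i.
\end{equation*}

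Third, I would combine martingale orthogonality with the uniform bound $\sup_{i,k,n} \indic{\goodevent_n}\opnorms{B_i^k(n)} \le M < \infty$ (already established in the setup) to get
\begin{equation*}
\E\!\left[\ltwo{S_k^n}^2 \,\big|\, \mc{F}^n\right]\indic{\goodevent_n}
\;\le\; \frac{M^2}{k}\sum_{i=1}^{k} \E\!\left[\ltwo{\localgraderr_i^n}^2 \,\big|\, \mc{F}^n\right]\indic{\goodevent_n}
\;\le\; \frac{C''}{k}\sum_{i=1}^{k} i^{-\steppow}\log i.
\end{equation*}
Since $\steppow \in (\half, 1)$, the partial sum $\sum_{i \le k} i^{-\steppow}\log i = O(k^{1-\steppow}\log k)$, so dividing by $k$ yields $o(1)$. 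Conditional Chebyshev on $\goodevent_n$ gives $S_{k(n)}^n \cp 0$ conditionally; since $\P(\goodevent_n) \to 1$, unconditional convergence in probability follows.

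The only delicate step is the localization just described: the clean quadratic bound $\ltwo{\localgraderr_i^n}^2 \lesssim \lipgrad^2 \delta_{i,n}^2$ applies only when $\theta_i^n$ sits in the Lipschitz-gradient neighborhood of $\theta\opt$, which holds with high probability (via Lemma~\ref{lemma:convergence-of-iterates}) but not almost surely. Properly accounting for the small-probability excursion events so that their contribution to the summed second moment does not overwhelm the main term is the main obstacle; the rest is a routine martingale $L^2$ computation combined with the stepsize decay $\stepsize_i = \stepsize_0 i^{-\steppow}$.
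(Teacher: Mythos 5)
Your proposal is correct and follows essentially the same route as the paper: conditional martingale orthogonality given $\mc{F}^n$, a bound of $\E[\ltwo{\localgraderr_i^n}^2 \mid \mc{F}^n]$ by a constant times $\E[\delta_{i,n}^2 \mid \mc{F}^n]$ using the local Lipschitz-gradient bound plus the crude $\lipobj$ bound off the neighborhood, then Lemma~\ref{lemma:convergence-of-iterates} and the decay $\stepsize_i = \stepsize_0 i^{-\steppow}$ with $\steppow > \half$. The only (immaterial) difference is that you control the excursion event via Markov's inequality on $\P(\delta_{i,n} > \epsilon/2 \mid \mc{F}^n)$, whereas the paper absorbs that case by noting that on $\goodevent_n$ the distance $\ltwos{\theta_i^n - \what{\theta}_n}$ is bounded below off the neighborhood, so the crude bound is itself $O(\ltwos{\theta_i^n - \what{\theta}_n})$.
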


\begin{lemma}
  \label{lemma:subgrad-errors-go-to-zero}
  Let Assumption~\ref{assumption:make-it-easy} hold.
  If $k(n) \to \infty$ as $n \to \infty$, then
  $\frac{1}{\sqrt{k(n)}} \sum_{i = 1}^{k(n)} \ltwo{\graderr_i^n} \cp 0$.
\end{lemma}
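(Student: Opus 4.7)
The plan is to derive a pointwise bound on $\ltwos{\graderr_k^n}$ in terms of the update magnitude and a normal-cone remainder, control each summand separately, and then sum using the fast decay of the stepsizes $\stepsize_k = \stepsize_0 k^{-\steppow}$ with $\steppow > \half$. The starting observation is that conditions~(C.ii)--(C.iii) force $\loss_{\theta_0}(\cdot; \statval) - \loss(\cdot; \statval)$ to attain its global maximum at $\theta_0$, so at any point where both are differentiable we have $\nabla \loss_{\theta_0}(\theta_0; \statval) = \nabla \loss(\theta_0; \statval)$; combined with $\lipgrad(\statval)$-Lipschitz continuity of $\nabla \loss_{\theta_0}(\cdot; \statval)$ on $\theta\opt + \epsilon \ball$ (Assumption~\ref{assumption:make-it-easy}) this yields
\[
  \ltwos{\graderr_k^n} \le \lipgrad(\statrv_{\randind(k)}^n) \ltwos{\theta_{k+1}^n - \theta_k^n} + \ltwos{\normalvec_{k+1}^n},
\]
valid on the event $\goodevent_n$ once $\theta_k^n, \theta_{k+1}^n$ lie near $\theta\opt$. (For the first-order SGD model, $\nabla \loss_{\theta_0} \equiv \nabla \loss(\theta_0; \cdot)$ and only the normal-cone term survives.)

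For the first summand, the update~\eqref{eqn:private-update-sequence} together with the standard bound $\ltwos{\gradmap_{\stepsize_k}(\theta_k^n; \statval)} \le \lipobj(\statval)$ gives $\ltwos{\theta_{k+1}^n - \theta_k^n} \le \stepsize_k(\lipobj(\statrv_{\randind(k)}^n) + \sigma_n \ltwos{Z_k})$; integrating over $\randind(k)$ and $Z_k$ on $\goodevent_n$ via Cauchy--Schwarz and the finite-moment assumption $\E[\lipconst_a^2(\statrv)] < \infty$ yields $\E[\lipgrad(\statrv_{\randind(k)}^n) \ltwos{\theta_{k+1}^n - \theta_k^n} \mid \mc{F}^n] \indic{\goodevent_n} = O(\stepsize_k)$. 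For the normal-cone term, pick $\epsilon_0 \in (0, \epsilon/8)$ with $\theta\opt + 2\epsilon_0 \ball \subset \interior \Theta$; on $\goodevent_n$ we have $\ltwos{\what{\theta}_n - \theta\opt} \le \epsilon/8 < \epsilon_0$, so $\normalvec_{k+1}^n = 0$ whenever $\ltwos{\theta_{k+1}^n - \what{\theta}_n} \le \epsilon_0$, and otherwise $\ltwos{\normalvec_{k+1}^n} \le 2\lipobj(\statrv_{\randind(k)}^n)$. Chebyshev's inequality combined with Lemma~\ref{lemma:convergence-of-iterates} gives
\[
  \P\bigl(\ltwos{\theta_{k+1}^n - \what{\theta}_n} > \epsilon_0 \mid \mc{F}^n\bigr) \indic{\goodevent_n} \le \frac{C \stepsize_{k+1} \log (k+1)}{\epsilon_0^2},
\]
so that $\E[\ltwos{\normalvec_{k+1}^n} \mid \mc{F}^n] \indic{\goodevent_n} = O(\stepsize_k \log k)$.

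Summing and using $\sum_{i=1}^k i^{-\steppow} = O(k^{1-\steppow})$,
\[
  \frac{1}{\sqrt{k(n)}} \sum_{i=1}^{k(n)} \E\bigl[\ltwos{\graderr_i^n} \mid \mc{F}^n\bigr] \indic{\goodevent_n} = O\bigl(\log k(n) \cdot k(n)^{1/2 - \steppow}\bigr) \to 0,
\]
since $\steppow > \half$; combined with $\P(\goodevent_n) \to 1$ and Markov's inequality, this yields the claimed convergence in probability. The main obstacle is precisely the normal-cone contribution: the $O(\stepsize_k \log k)$ decay of $\E[\ltwos{\normalvec_{k+1}^n}]$ extracted from Lemma~\ref{lemma:convergence-of-iterates} is only just enough to survive multiplication by $k(n)$ and division by $\sqrt{k(n)}$, and this is exactly where the hypothesis $\steppow > \half$ gets used.
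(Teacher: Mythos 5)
Your overall strategy---decompose $\graderr_k^n$ into a smooth piece of size $O(\stepsize_k)$ plus a rare-event piece, control the rare event via Lemma~\ref{lemma:convergence-of-iterates} and Chebyshev, and sum using $\steppow > \half$---is the same as the paper's, but there is a genuine gap in how you bound the rare-event piece. You must control
\begin{equation*}
  \E\left[\lipobj(\statrv_{\randind(k)}^n) \,\indic{\ltwos{\theta_{k+1}^n - \what{\theta}_n} > \epsilon_0} \mid \mc{F}^n\right],
\end{equation*}
and you assert this is $O(\stepsize_k \log k)$ directly from the probability bound $\P(\ltwos{\theta_{k+1}^n - \what{\theta}_n} > \epsilon_0 \mid \mc{F}^n) \le C \stepsize_k \log k / \epsilon_0^2$. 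That step is not justified: the multiplier $\lipobj(\statrv_{\randind(k)}^n)$ and the event are dependent (both are functions of $\randind(k)$, since $\theta_{k+1}^n$ is built from $\statrv_{\randind(k)}^n$), and $\lipobj$ is only assumed square-integrable, not bounded, so the expectation does not factor. The generic decoupling is Cauchy--Schwarz, which yields only $O(\sqrt{\stepsize_k \log k}) = O(k^{-\steppow/2}\sqrt{\log k})$; summing and dividing by $\sqrt{k}$ gives $O(k^{(1-\steppow)/2}\sqrt{\log k})$, which does \emph{not} vanish for $\steppow < 1$. The paper's proof avoids this by first using $\ltwos{\theta_{k+1}^n - \theta_k^n} \le \stepsize_k \lipobj(\statrv_{\randind(k)}^n)$ and the triangle inequality to enlarge the bad event into the union of an $\mc{F}_{k-1}^n$-measurable event $\{\ltwos{\theta_k^n - \what{\theta}_n} \ge \epsilon/16\}$ (conditionally independent of $\randind(k)$, so the expectation factors and Chebyshev applies at the full rate $\stepsize_i \log i$) and the event $\{\stepsize_k \lipobj(\statrv_{\randind(k)}^n) \ge \epsilon/8\}$, for which $\E[Y \indic{\stepsize Y \ge \epsilon}] \le \stepsize \E[Y^2]/\epsilon$ gives $O(\stepsize_k)$ directly. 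Without this splitting, your $O(\stepsize_k \log k)$ claim---which, as you note yourself, has no room to spare---does not go through.

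Two smaller points. First, your pointwise bound $\ltwos{\graderr_k^n} \le \lipgrad(\statrv_{\randind(k)}^n)\ltwos{\theta_{k+1}^n - \theta_k^n} + \ltwos{\normalvec_{k+1}^n}$ is stated to hold only when $\theta_k^n, \theta_{k+1}^n$ lie near $\theta\opt$, yet you never account for the iterations where they do not: there you must fall back on the crude bound $\ltwos{\graderr_k^n} \le 4\lipobj(\statrv_{\randind(k)}^n)$ and pay the probability of being far, which is the same decoupling problem as above applied to all of $\graderr_k^n$ rather than just the normal-cone term. (Relatedly, Assumption~\ref{assumption:make-it-easy} gives a Lipschitz gradient for $\loss(\cdot;\statval)$, not for the model $\loss_{\theta_0}(\cdot;\statval)$; this is harmless for the linear model, where your reduction to $\graderr_k^n = \normalvec_{k+1}^n$ is exact, but for general \textsc{aProx} models the paper instead invokes \cite[Lemma~A.4]{AsiDu19siopt} to obtain $\ltwos{\graderr_k^n} \le 2\lipgrad(\statrv_{\randind(k)}^n)\ltwos{\theta_k^n - \theta_{k+1}^n}$.) Second, choosing $\epsilon_0 \in (0, \epsilon/8)$ and then asserting $\epsilon/8 < \epsilon_0$ is contradictory as written, though any fixed $\epsilon_0$ with $\theta\opt + (\epsilon/8 + \epsilon_0)\ball \subset \interior \Theta$ repairs that slip.
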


\begin{lemma}
  \label{lemma:matrix-inverse-errors-go-to-zero}
  Let Assumption~\ref{assumption:make-it-easy} hold and
  $\lim_n \sigma_n = \sigma \in \openright{0}{\infty}$.
  If $k(n) \to \infty$ as $n \to \infty$, then
  \begin{equation*}
    \frac{1}{\sqrt{k(n)}} \sum_{i = 1}^{k(n)} A_i^k(n)
    (\nabla \loss_i^n(\what{\theta}_n) + \sigma_n Z_i) \cp 0.
  \end{equation*}
\end{lemma}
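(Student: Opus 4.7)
}
The plan is to work conditionally on the sample $\sigma$-field $\mc{F}^n$ and on the good event $\goodevent_n$ (which holds for all sufficiently large $n$), showing that the quantity in question has vanishing second moment. The two ingredients are: (i) a martingale-difference representation of the summands, and (ii) a uniform-in-$n$ Ces\`aro-type decay of the matrices $A_i^k(n)$.

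First I would observe that, on $\goodevent_n$, the empirical minimizer $\what{\theta}_n$ lies in the interior of $\Theta$, so $\nabla \risk_n(\what{\theta}_n) = 0$. Because $\randind(i)$ is drawn uniformly at random from $[n]$, this gives $\E[\nabla \loss_i^n(\what{\theta}_n) \mid \mc{F}^n, \mc{F}_{i-1}^n] = \nabla \risk_n(\what{\theta}_n) = 0$, and combined with $\E[Z_i] = 0$ and the independence of $Z_i$ from everything prior, the vectors $M_i^n \defeq \nabla \loss_i^n(\what{\theta}_n) + \sigma_n Z_i$ form a martingale difference sequence with respect to $\{\mc{F}_i^n\}_{i \ge 0}$, conditional on $\mc{F}^n$. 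Since $A_i^k(n)$ is $\mc{F}^n$-measurable, orthogonality yields
\begin{equation*}
  \E\bigg[\bigg\|\frac{1}{\sqrt{k}} \sum_{i=1}^{k} A_i^{k}(n) M_i^n\bigg\|^2
  \,\Big|\, \mc{F}^n \bigg] \indic{\goodevent_n}
  = \frac{1}{k} \sum_{i=1}^{k} \E\big[\|A_i^{k}(n) M_i^n\|^2 \mid \mc{F}^n\big]
  \indic{\goodevent_n}.
\end{equation*}

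Next, on $\goodevent_n$ the conditional second moment of $\nabla \loss_i^n(\what{\theta}_n)$ is bounded by $\frac{1}{n}\sum_j \lipobj(\statrv_j)^2 \le 2\E[\lipobj(\statrv)^2]$, and $\E\|Z_i\|^2 = \operatorname{tr}(\Zcov)$. Since $\sigma_n \to \sigma < \infty$, there is a deterministic constant $C$ with $\E[\|M_i^n\|^2 \mid \mc{F}^n] \indic{\goodevent_n} \le C$ for all large $n$. Hence the right-hand side is at most $C \cdot \frac{1}{k}\sum_{i=1}^{k} \|A_i^{k}(n)\|_{\mathrm{op}}^2$. The standard Polyak--Juditsky matrix calculation (as in \cite{PolyakJu92, AsiDu19siopt}) shows that for any symmetric $H \succeq \lambda I$ bounded above by $M I$, the stepsize schedule $\stepsize_k = \stepsize_0 k^{-\steppow}$ with $\steppow \in (\half, 1)$ yields $\sup_{i \le k} \|B_i^k\|_{\mathrm{op}} \le \wb{C}$ and $\frac{1}{k}\sum_{i=1}^{k} \|A_i^k\|_{\mathrm{op}} \to 0$, with the constants and rate depending on $\lambda, M$ only. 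Because $H_n$ is pinched between $\lambda I$ and $M I$ on $\goodevent_n$ (with the same $\lambda$ by definition, and $M$ controlled by $\E[\lipgrad^2]^{1/2}$ via Assumption~\ref{assumption:make-it-easy}), this bound applies uniformly in $n$, so $\|A_i^k(n)\|_{\mathrm{op}} \le \wb{C}'$ and $\frac{1}{k(n)} \sum_{i=1}^{k(n)} \|A_i^{k(n)}(n)\|_{\mathrm{op}} \to 0$.

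Combining, $\frac{1}{k}\sum_i \|A_i^k(n)\|_{\mathrm{op}}^2 \le \wb{C}' \cdot \frac{1}{k}\sum_i \|A_i^k(n)\|_{\mathrm{op}} \to 0$, so the conditional second moment above converges to zero on $\goodevent_n$. Markov's inequality then gives convergence in probability on $\goodevent_n$, and since $\P(\goodevent_n) \to 1$, the claim follows. The main obstacle is verifying that the Ces\`aro decay $\frac{1}{k}\sum_i \|A_i^k(n)\|_{\mathrm{op}} \to 0$ is genuinely uniform in $n$ (a triangular-array subtlety), which is why we carefully extracted the uniform lower bound $H_n \succeq \lambda I$ and upper bound on $\|H_n\|_{\mathrm{op}}$ from the event $\goodevent_n$; with these in hand, the constants in the Polyak--Juditsky matrix lemma depend only on $\lambda, M, \stepsize_0, \steppow$, so the sample-size index $n$ drops out of the rate.
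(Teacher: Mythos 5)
Your proposal is correct and follows essentially the same route as the paper: condition on $\mc{F}^n$ and the event $\goodevent_n$, use that $\nabla \loss_i^n(\what{\theta}_n) + \sigma_n Z_i$ are conditionally mean-zero (and uncorrelated across $i$) so the second moment collapses to a sum of squares, bound the per-term conditional variance by a constant, and invoke $\sup_{i,k,n} \indic{\goodevent_n}\opnorm{A_i^k(n)} < \infty$ together with $\frac{1}{k}\sum_{i=1}^k \opnorm{A_i^k(n)} \to 0$ (uniformly in $n$, via Polyak--Juditsky) to conclude via Markov. Your extra care about the uniformity in $n$ of the Ces\`aro decay is exactly the point the paper handles by stating that bound with constants independent of $n$ on $\goodevent_n$.
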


\noindent
We prove the lemmas in Appendices~\ref{sec:proof-convergence-of-iterates},
\ref{sec:proof-remainders-go-to-zero},
\ref{sec:proof-local-grad-errors-go-to-zero},
\ref{sec:proof-subgrad-errors-go-to-zero},
\ref{sec:proof-matrix-inverse-errors-go-to-zero},
respectively.

\newcommand{\countvar}{M}

Combining the preceding three lemmas into the
recursion~\eqref{eqn:proper-recursion}, we see that if
$k = k(n) \to \infty$ as $n \to \infty$, we use that
$\sup_{i,k,n} \indics{\goodevent_n} \opnorms{B_i^k(n)} < \infty$
and that $\goodevent_n$ occurs eventually with probability 1 to write
\begin{equation}
  \label{eqn:proper-recursion-redux}
  \sqrt{k} \wb{\Delta}_k^n =
  \underbrace{
    \frac{1}{\sqrt{k}} \sum_{i = 1}^k H_n^{-1} \nabla \loss_i^n(\what{\theta}_n)
  }_{\eqdef \mc{T}_{1,n}}
  + \underbrace{\frac{1}{\sqrt{k}} \sum_{i = 1}^k H_n^{-1} \sigma_n Z_i}_{
    \eqdef \mc{T}_{2,n}}
  + o_P(1),
\end{equation}
where the $o_P(1)$ term converges to 0 in probability as $n \uparrow
\infty$.
From this point in the proof, we will treat $k$ as a function of $n$
implicitly, noting that $k = k(n)$ satisfies $\lim_n k(n) / n = \gamma$.
The recursion~\eqref{eqn:proper-recursion-redux} takes a form
similar to a multiplier central limit theorem~\cite{VanDerVaartWe96},
allowing us to precisely compute its asymptotics by computing the
asymptotics of $\mc{T}_{1,n}$ and $\mc{T}_{2,n}$, which are (asymptotically)
independent. Let
$\countvar_{n,i} \in \N$ denote the number of times observation
$\statrv_i$ is chosen in the sampling procedure to
generate $\theta_k^n$ after $k = k(n)$ iterations,
noting that $(\countvar_{n,i})_{i=1}^n
\sim \multinomial(k, \ones / n)$ is multinomial-distributed with
probabilities $1/n$, and $\sum_{i=1}^n \countvar_{n,i} = k$.
Thus we have
\begin{equation*}
  \mc{T}_{1,n}
  = \frac{1}{\sqrt{k}} \sum_{i = 1}^n
  H_n^{-1} \countvar_{n,i} \nabla \loss(\what{\theta}_n; \statrv_i).
\end{equation*}
On the event $\goodevent_n$,
a Taylor expansion yields
\begin{equation*}
  \nabla \loss(\what{\theta}_n; \statrv_i)
  = \nabla \loss(\theta\opt; \statrv_i)
  + \left(\nabla^2 \loss(\theta\opt; \statrv_i)
  + E_{n,i}\right) (\what{\theta}_n - \theta\opt),
\end{equation*}
where $\opnorms{E_{n,i}} \le \liphess(\statrv_i) \ltwos{\what{\theta}_n
  - \theta\opt}$ by Assumption~\ref{assumption:make-it-easy}. Rearranging
the count-based recursion thus gives
\begin{align}
  \mc{T}_{1,n}
  & =
  \frac{1}{\sqrt{k}}
  H_n^{-1} \sum_{i = 1}^n \countvar_{n,i} \nabla \loss(\theta\opt; \statrv_i)
  + \frac{1}{\sqrt{k}}
  H_n^{-1} \bigg(\sum_{i = 1}^n \countvar_{n,i}
  (\nabla^2 \loss(\theta\opt; \statrv_i) + E_{n,i}) \bigg)
  (\what{\theta}_n - \theta\opt)
  \nonumber \\
  %% & = 
  %% \frac{1}{\sqrt{k}}
  %% H_n^{-1} \sum_{i = 1}^n \countvar_{n,i} \nabla \loss(\theta\opt; \statrv_i)
  %% - \frac{1}{\sqrt{k}}
  %% H_n^{-1} \bigg(\sum_{i = 1}^n \countvar_{n,i}
  %% (\nabla^2 \loss(\theta\opt; \statrv_i) + E_{n,i}) \bigg)
  %% \left(\nabla^2 \risk(\theta\opt)^{-1} \nabla \risk_n(\theta\opt)
  %% + o_P(1/\sqrt{n})\right) \nonumber \\
  & = \frac{1}{\sqrt{k}}
  H_n^{-1} \sum_{i = 1}^n \countvar_{n,i} \nabla \loss(\theta\opt; \statrv_i)
  \label{eqn:now-we-get-somewhere}
  \\
  & \qquad ~ - \frac{1}{\sqrt{k}}
  H_n^{-1} \bigg(\frac{1}{n} \sum_{i = 1}^n \countvar_{n,i}
  (\nabla^2 \loss(\theta\opt; \statrv_i) + E_{n,i}) \bigg)
  \bigg(\nabla^2 \risk(\theta\opt)^{-1}
  \sum_{i = 1}^n \nabla \loss(\theta\opt; \statrv_i)
  + o_P(\sqrt{n})\bigg).
  \nonumber
\end{align}
Now, we use that $\E[\countvar_{n,i}] = k/n$ and
$\var(\countvar_{n,i}) = k/n(1 - 1/n)$, with $\cov(\countvar_{n,i},
\countvar_{n,j}) = -k/n^2$, independently of $X_i$, to obtain
$\frac{1}{n} \sum_{i = 1}^n \countvar_{n,i}(\nabla^2 \loss(\theta\opt; \statrv_i)
+ E_{n,i}) = \frac{k}{n} \frac{1}{n} \sum_{i=1}^n \nabla^2 \loss(\theta\opt;
\statrv_i) + o_P(1)
= \frac{k}{n} H_n + o_P(1)$,
so that expansion~\eqref{eqn:now-we-get-somewhere} becomes
\begin{align}
  \mc{T}_{1,n}
  & = \frac{1}{\sqrt{k}} H_n^{-1} \sum_{i = 1}^n \countvar_{n,i}
  \nabla \loss(\theta\opt; \statrv_i)
  - \frac{1}{\sqrt{k}} \frac{k}{n} (1 + o_P(1)) \nabla^2 \risk(\theta\opt)^{-1}
  \sum_{i = 1}^n \nabla \loss(\theta\opt; \statrv_i) + o_P(1) \nonumber \\
  & = \frac{1}{\sqrt{k}}
  (\nabla^2 \risk(\theta\opt) + o_P(1))^{-1}
  \sum_{i = 1}^n \left(\countvar_{n,i} - \frac{k}{n}\right)
  \nabla \loss(\theta\opt; \statrv_i)
  + \sqrt{\frac{k}{n}} o_P(1)
  \label{eqn:can-i-apply-multiplier-clt}
\end{align}
where the error $o_P(1) \cp 0$ as $n \uparrow \infty$.

Substituting expression~\eqref{eqn:can-i-apply-multiplier-clt} into
the expansion~\eqref{eqn:proper-recursion-redux} and renormalizing
by $\sqrt{n}$ instead of $\sqrt{k}$,
\begin{equation*}
  \sqrt{n} \wb{\Delta}_k^n
  = (\nabla^2 \risk(\theta\opt) + o_P(1))^{-1}
  \left[\sqrt{\frac{n}{k}} \cdot \frac{1}{\sqrt{n}} \sum_{i = 1}^n
    \sqrt{\frac{n}{k}} \left(\countvar_{n,i} - \frac{k}{n}\right)
    \nabla \loss(\theta\opt; \statrv_i)
    + \sqrt{\frac{n}{k}} \frac{1}{\sqrt{k}} \sigma_n \sum_{i = 1}^k Z_i
    \right] + o_P(1).
\end{equation*}
Now, note that by the classic multiplier central limit theorems
(cf.~\cite[Chapters 2.9 and 3.6]{VanDerVaartWe96}),
using that $(n/k) \var(\countvar_{n,i}) = 1 - 1/n$
we have the joint convergence
\begin{equation*}
  \left(\frac{1}{\sqrt{n}} \sum_{i = 1}^n
  \sqrt{\frac{k}{n}} \left(\countvar_{n,i} - \frac{k}{n}\right)
  \nabla \loss(\theta\opt; \statrv_i),
  \frac{1}{\sqrt{n}} \sum_{i = 1}^n \nabla \loss(\theta\opt; \statrv_i)
  \right)
  \cd \normal\left(\left[\begin{matrix} 0 \\ 0 \end{matrix} \right],
  \left[\begin{matrix} \losscov & 0 \\ 0 & \losscov \end{matrix}\right]
  \right).
\end{equation*}
Adding and subtracting $\sqrt{n} (\what{\theta}_n - \theta\opt)$ as
in the standard asymptotic expansion~\eqref{eqn:standard-asymptotics},
we have
\begin{align*}
  \sqrt{n}(\wb{\theta}_k^n - \theta\opt)
  & = \nabla^2 \risk(\theta\opt)^{-1}
  \bigg[\sqrt{\frac{n}{k}}
    \frac{1}{\sqrt{n}} \sum_{i = 1}^n
    \sqrt{\frac{k}{n}} \left(\countvar_{n,i} - \frac{k}{n}\right)
    \nabla \loss(\theta\opt; \statrv_i)
    \ldots \\
    & \qquad\qquad\qquad\qquad ~
    + \frac{1}{\sqrt{n}} \sum_{i = 1}^n \nabla \loss(\theta\opt; \statrv_i)
    + \sqrt{\frac{n}{k}}
    \frac{\sigma_n}{\sqrt{k}} \sum_{i = 1}^k Z_i
    \bigg]
  + o_P(1) \\
  & \cd \normal\left(0,
  \nabla^2 \risk(\theta\opt)^{-1}
  \left((1 + 1 / \gamma) \losscov
  + (1/\gamma) \Zcov \sigma^2 \right)
  \nabla^2 \risk(\theta\opt)^{-1} \right).
\end{align*}

%% \section{Proofs of technical lemmas}

\subsection{Proof of Lemma~\ref{lemma:convergence-of-iterates}}
\label{sec:proof-convergence-of-iterates}

We have~\cite[Lemma~3.4]{AsiDu19siopt} that
\begin{align*}
  \half \ltwos{\theta_{k+1}^n - \what{\theta}_n}^2
  & \le \half \ltwos{\theta_k^n - \what{\theta}_n}^2
  - \stepsize_k \left[\loss_k^n(\theta_k^n) - \loss_k^n(\what{\theta}_n)
    \right]
  + \frac{\stepsize_k^2}{2} \ltwo{\nabla\loss_k^n(x_k^n)}^2 \\
  & \le \half \ltwos{\theta_k^n - \what{\theta}_n}^2
  - \stepsize_k \left[\loss_k^n(\theta_k^n) - \loss_k^n(\what{\theta}_n)
    \right]
  + \frac{\stepsize_k^2}{2} \lipobj(\statrv_{\randind(k)}^n)^2.
\end{align*}
Taking expectations conditional on $\mc{F}_{k-1}^n$, the $\sigma$-field
of the sample $\{\statrv_i\}_{i=1}^n$ and the first $k-1$ random indices
$\randind(1), \ldots, \randind(k-1)$, and noting that
$\goodevent_n \in \mc{F}^n \subset \mc{F}_{k-1}^n$, we have
\begin{align}
  \nonumber \half \indic{\goodevent_n}
  \E[\ltwos{\theta_{k+1}^n - \what{\theta}_n}^2 \mid \mc{F}_{k-1}^n]
  & \le \indic{\goodevent_n}
  \cdot \left\{
  \half \ltwos{\theta_k^n - \what{\theta}_n}^2
  - \stepsize_k \left[\risk_n(\theta_k^n) - \risk_n(\what{\theta}_n)
    \right]
  + \sum_{i=1}^n \frac{\stepsize_k^2}{2n} \lipobj(\statrv_i)^2
  \right\} \\
  & \le \indic{\goodevent_n} \cdot \left\{
  \frac{1 - c_0 \stepsize_k}{2} \ltwos{\theta_k^n
    - \what{\theta}_n}^2
  + \stepsize_k^2 \E[\lipobj(\statrv)^2] \right\},
  \nonumber
\end{align}
where $c_0 > 0$ is a constant depending on $\lambda, \Theta$, which is
positive because $\nabla^2 \risk_n(\theta) \succeq \lambda I$ for $\theta$
near $\theta \opt$.  In particular, with the definition $\delta_{k,n}
\defeq \ltwos{\theta_k^n - \what{\theta}_n} \indic{\goodevent_n}$, then
integrating over the indices $\randind(k)$ gives the
result~\eqref{eqn:one-step-recursion-ugh}.

The second result follows exactly as in the proof of Lemma~A.2 of the
paper~\cite{AsiDu19siopt} (see specifically inequality~(17) in the
\texttt{arXiv} technical report version).

\subsection{Proof of Lemma~\ref{lemma:remainders-go-to-zero}}
\label{sec:proof-remainders-go-to-zero}

On the event $\goodevent_n$, $\risk_n$ has $\sqrt{2
  \E[\lipgrad(\statrv)^2]}$-Lipschitz gradient on $\Theta$, and so a Taylor
approximation gives that for some $C < \infty$ independent of $n$ and $k$,
$R_n(\theta) \le C \ltwos{\theta - \what{\theta}_n}^2$. Thus
\begin{equation*}
  \E[\ltwo{R_n(\theta_k^n)} \indic{\goodevent_n}]
  \le C \E[\indic{\goodevent_n} \ltwos{\theta_k^n - \what{\theta}_n}^2]
  \le C \stepsize_k \log k,
\end{equation*}
where we have used Lemma~\ref{lemma:convergence-of-iterates}. Thus
\begin{equation*}
  \frac{1}{\sqrt{k}} \sum_{i = 1}^k \E[\ltwo{R_n(\theta_k^n)}
    \indic{\goodevent_n}]
  \le \frac{C \log k}{\sqrt{k}} \sum_{i = 1}^k \stepsize_i
  \le C k^{1 - \steppow - \half} \log k
  \to 0
\end{equation*}
as $k \uparrow \infty$. As $\goodevent_n$ happens eventually,
we have the result.

\subsection{Proof of Lemma~\ref{lemma:local-grad-errors-go-to-zero}}
\label{sec:proof-local-grad-errors-go-to-zero}

Fixing the sample $\{\statrv_i\}_{i=1}^n$, the localized subgradient
errors $\localgraderr_k^n$ are a martingale sequence adapted
to $\mc{F}_k^n = \sigma(\mc{F}^n, \randind(1), \ldots, \randind(k))$, the $\sigma$-field
of $\mc{F}^n$ and the random indices of the iteration through time $k$.
Moreover, $\goodevent_n \in \mc{F}^n$ and $B_i^k(n) \in \mc{F}^n$ for
all $i, k$.
Thus
\begin{equation*}
  %% \E\left[\ltwobigg{\sum_{i = 1}^k
  %%     \localgraderr_i^n}^2 \mid \mc{F}^n\right]
  %% = \sum_{i = 1}^k \E[\ltwo{\localgraderr_i^n}^2 \mid \mc{F}^n]
  %% ~~ \mbox{and} ~~
  \E\left[\ltwobigg{\sum_{i = 1}^k
      B_i^k(n) \localgraderr_i^n}^2 \mid \mc{F}^n\right]
  = \sum_{i = 1}^k \E\left[\ltwos{B_i^k(n)\localgraderr_i^n}^2
    \mid \mc{F}^n\right].
\end{equation*}
Now, we note that if $\theta_k^n, \what{\theta}_n \in \theta\opt + \epsilon
\ball$, then
\begin{equation*}
  \ltwo{\localgraderr_k^n}
  \le \left(\lipgrad(\statrv_{\randind(k)}^n)
  + \frac{1}{n} \sum_{i = 1}^n \lipgrad(\statrv_i)\right)
  \ltwo{\theta_k^n - \what{\theta}_n},
\end{equation*}
while otherwise we have
\begin{equation*}
  \ltwo{\localgraderr_k^n}
  \le 2 \lipobj(\statrv_{\randind(k)}^n)
  + \frac{2}{n} \sum_{i = 1}^n \lipobj(\statrv_i).
\end{equation*}
In either case, on the event $\goodevent_n$, the compactness of
$\Theta$ guarantees that there exists some $C < \infty$
independent of $n$ and $k$ such that
\begin{equation*}
  \indic{\goodevent_n}
  \ltwo{\localgraderr_k^n}
  \le C \cdot \indic{\goodevent_n}
  \left(\lipobj(\statrv_{\randind(k)}^n)
  + \lipgrad(\statrv_{\randind(k)}^n) + \E[\lipgrad(\statrv)^2]^{1/2}\right)
  \ltwos{\theta_k^n - \what{\theta}_n}.
\end{equation*}
In particular, as $\theta_k^n \in \mc{F}_{k-1}^n$, we obtain that
\begin{equation*}
  \indic{\goodevent_n}
  \E\left[\ltwo{\localgraderr_k^n}^2 \mid \mc{F}_{k-1}^n\right]
  \le C \cdot \indic{\goodevent_n}
  \sqrt{\E[\lipobj(\statrv)^2] + \E[\lipgrad(\statrv)^2]}
  \cdot \ltwobig{\theta_k^n - \what{\theta}_n}^2.
\end{equation*}
As $\sup_{i,k,n} \indic{\goodevent_n} \opnorm{B_i^k(n)} < \infty$,
we have
\begin{align*}
  \indic{\goodevent_n}
  \E\left[\ltwobigg{\sum_{i=1}^k B_i^k(n) \localgraderr_i^n}^2
    \mid \mc{F}^n\right]
  & \le C \indic{\goodevent_n}
  \sqrt{\E[\lipobj(\statrv)^2 + \lipgrad(\statrv)^2]}
  \sum_{i = 1}^k \E\left[\ltwobig{\theta_i^n - \what{\theta}_n}^2
    \mid \mc{F}^n\right] \\
  & \le C \log k \sum_{i = 1}^k \stepsize_i,
\end{align*}
where the final inequality uses Lemma~\ref{lemma:convergence-of-iterates}.
Dividing by $k$ gives the result.

\subsection{Proof of Lemma~\ref{lemma:subgrad-errors-go-to-zero}}
\label{sec:proof-subgrad-errors-go-to-zero}

We continue to build off of \citet{AsiDu19siopt}. By Lemma~A.4 (a
specialization of \cite[Thm.~6.1]{DavisDrPa17}) of their paper, as
$\loss(\cdot; \statval)$ has $\lipgrad(\statval)$-Lipschitz gradient on
$\theta\opt + \epsilon \ball \subset \interior \Theta$, we have (see also
\cite[Eq.~(15)]{AsiDu19siopt}) that whenever $\theta_k^n, \theta_{k+1}^n \in
\theta\opt + (\epsilon/4) \ball$,
\begin{equation*}
  \ltwo{\graderr_k^n}
  \le 2 \lipgrad(\statrv_{\randind(k)}^n) \ltwos{\theta_k^n - \theta_{k+1}^n}
  \le \stepsize_k \lipgrad(\statrv_{\randind(k)}^n)^2
  + \stepsize_k \ltwo{\nabla \loss_k^n(\theta_k^n)}^2.
\end{equation*}
We also always have $\ltwo{\graderr_k^n} \le 4 \lipobj(\statrv_{\randind(k)}^n)$ by
the triangle inequality applied to the
containment~\eqref{eqn:normal-cone-version-update}.
Consequently, we obtain that
\begin{align*}
  \frac{1}{\sqrt{k}} \sum_{i = 1}^k
  \ltwo{\graderr_i^n}
  & \le \frac{4}{\sqrt{k}}
  \sum_{i = 1}^k \indic{\ltwo{\theta_i^n - \theta\opt} \ge \epsilon/4,
    \ltwo{\theta_{i+1}^n - \theta\opt} \ge \epsilon/4}
  \lipobj(\statrv_{\randind(i)}^n) \\
  & \qquad ~ + 
  \frac{1}{\sqrt{k}}
  \sum_{i = 1}^k \stepsize_i \left(\lipgrad(\statrv_{\randind(i)}^n)^2
  + \ltwo{\nabla \loss_i^n(\theta_i^n)}^2\right).
\end{align*}
Now, we use the triangle inequality to see that on the event
$\goodevent_n$, as $\ltwos{\what{\theta}_n - \theta\opt} \le \epsilon/8$,
to have $\ltwo{\theta_{i+1}^n - \theta\opt} \ge \epsilon/4$ we must have
$\ltwo{\theta_{i+1}^n - \what{\theta}_n} \ge \epsilon/8$. Moreover, for
this to be the case, the Lipschitz continuity of $\loss$ over
$\Theta$ and that $\ltwo{\theta_{i+1}^n - \theta_i^n} \le
\stepsize_i \lipobj(\statrv_{\randind(i)}^n)$ together give that
\begin{equation*}
  \ltwo{\theta_{i+1}^n - \theta\opt} \ge \frac{\epsilon}{4}
  ~~ \mbox{implies} ~~
  \ltwo{\theta_{i}^n - \what{\theta}_n}
  + \stepsize_{i} \lipobj(\statrv_{\randind(i)}^n)
  \ge \frac{\epsilon}{8}.
\end{equation*}
Thus, revisiting the previous display, we have on $\goodevent_n$ that
\begin{align*}
  \frac{1}{\sqrt{k}} \sum_{i = 1}^k
  \ltwo{\graderr_i^n}
  & \le \frac{4}{\sqrt{k}}
  \sum_{i = 1}^k \left(2 \cdot
  \indic{\ltwos{\theta_i^n - \what{\theta}_n} \ge \epsilon/16}
  + \indic{\stepsize_i \lipobj(\statrv_{\randind(i)}^n)
    \ge \epsilon / 8}\right)
  \lipobj(\statrv_{\randind(i)}^n) \\
  & \qquad ~ + 
  \frac{1}{\sqrt{k}}
  \sum_{i = 1}^k \stepsize_i \left(\lipgrad(\statrv_{\randind(i)}^n)^2
  + \ltwo{\nabla \loss_i^n(\theta_i^n)}^2\right).
\end{align*}
Taking expectations conditional
on $\mc{F}^n$ and using that $\goodevent_n \in \mc{F}^n$, we have
on the event $\goodevent_n$ that
\begin{align}
  \lefteqn{\E\bigg[\frac{1}{\sqrt{k}} \sum_{i = 1}^k
      \ltwo{\graderr_i^n} \mid \mc{F}^n\bigg]} \nonumber \\
  & \le
  \frac{8}{\sqrt{k}} \sum_{i = 1}^k \E\left[\lipobj(\statrv_{\randind(i)}^n)
    \indic{\ltwos{\theta_i^n - \what{\theta}_n} \ge \epsilon/16}
    \mid \mc{F}^n\right]
  + \frac{4}{\sqrt{k}}
  \sum_{i = 1}^k \frac{1}{n} \sum_{j = 1}^n \lipobj(\statrv_j)
  \indic{\stepsize_i \lipobj(\statrv_j) \ge \epsilon / 8} \nonumber \\
  & \qquad ~ + \frac{4}{\sqrt{k}}
  \sum_{i = 1}^k \stepsize_i \left(\E[\lipgrad(\statrv)^2]
  + \E[\lipobj(\statrv)^2]\right),
  \label{eqn:clif-bars-are-lunch}
\end{align}
where we have used that on $\goodevent_n$, $\frac{1}{n}
\sum_{i = 1}^n \lipconst_a(\statrv_i)^2 \le 2 \E[\lipconst_a(\statrv)^2]$
for $a \in \{0, 1, 2\}$.

We now control the first terms in the
righthand sum of inequality~\eqref{eqn:clif-bars-are-lunch}.
For the second, we note that if $Y$ is a random variable
with $\E[Y^2] \le C$, then
\begin{equation*}
  \E[Y \indic{\stepsize Y \ge \epsilon}]
  \le \sqrt{\E[Y^2] \P(\stepsize Y \ge \epsilon)}
  \le \sqrt{\stepsize^2 \E[Y^2] \E[Y^2] / \epsilon^2}
  \le C \stepsize / \epsilon
\end{equation*}
by the Cauchy-Schwarz and Chebyshev inequalities, so that on
event $\goodevent_n$ that $\frac{1}{n} \sum_{i = 1}^n \lipobj(\statrv_i)^2
\le 2 \E[\lipobj(\statrv)^2]$, we have
\begin{equation*}
  \frac{1}{n} \sum_{j = 1}^n \lipobj(\statrv_j) \indic{
    \stepsize_i \lipobj(\statrv_j) \ge \epsilon / 8}
  \le \frac{16}{\epsilon} \stepsize_i.
\end{equation*}
For the first term in the right side of~\eqref{eqn:clif-bars-are-lunch},
recalling the definition $\delta_{k,n} = \ltwos{\theta_k^n -
  \what{\theta}_n} \indics{\goodevent_n}$ in
Lemma~\ref{lemma:convergence-of-iterates}, we use that $\theta_i^n \in
\mc{F}^n_{i-1}$ to obtain
\begin{align*}
  \indic{\goodevent_n}
  \E\left[\lipobj(\statrv_{\randind(i)}^n) \indic{\ltwos{\what{\theta}_n
        - \theta_i^n} \ge \epsilon / 16} \mid \mc{F}^n\right]
  & = \frac{1}{n} \sum_{j = 1}^n \lipobj(\statrv_j)
  \P\left(\delta_{i,n} \ge \epsilon / 16 \mid \mc{F}^n\right) \\
  & \le \frac{1}{n} \sum_{j = 1}^n \lipobj(\statrv_j)
  \frac{C \stepsize_i \log i}{\epsilon^2},
\end{align*}
where the inequality is a consequence of
Lemma~\ref{lemma:convergence-of-iterates} and Chebyshev's inequality.
Returning to inequality~\eqref{eqn:clif-bars-are-lunch},
we find that
\begin{equation*}
  \indic{\goodevent_n}
  \E\bigg[\frac{1}{\sqrt{k}} \sum_{i = 1}^k \ltwo{\graderr_i^n} \mid \mc{F}^n
    \bigg]
  \le \frac{C}{\sqrt{k}}
  \sum_{i = 1}^k \stepsize_i \log i
\end{equation*}
where $C < \infty$ may depend on problem parameters (e.g.\ $\epsilon$ and
$\E[\lipconst_a(X)^2]$) but is independent of $k$ and $n$.
As $\sum_{i = 1}^k \stepsize_i \log i / \sqrt{k}
= O(1) k^{1 - \steppow - 1/2 + \epsilon}$ for any $\epsilon > 0$, and
$\goodevent_n$ occurs with probability one eventually,
taking expectations over $\mc{F}^n$ gives the lemma.

%% Consequently, we obtain
%% that
%% \begin{align*}
%%   \frac{1}{\sqrt{k}} \sum_{i = 1}^k
%%   \ltwo{\graderr_i^n}
%%   & \le
%%   \frac{1}{\sqrt{k}}
%%   \sum_{i = 1}^k \min\left\{\stepsize_i
%%   \left(\lipgrad(\statrv_{\randind(k)}^n)^2 + \ltwo{\nabla \loss_i^n(\theta_i^n)}^2
%%   \right),
%%   4 \lipobj(\statrv_{\randind(k)}^n)\right\} \\
%%   & \le \frac{1}{\sqrt{k}}
%%   \sum_{i = 1}^k \stepsize_i
%%   \left(\lipgrad(\statrv_{\randind(k)}^n)^2 + \lipobj(\statrv_{\randind(k)}^n)^2
%%   \right).
%% \end{align*}
%% On the event $\goodevent_n$, we have that
%% $\frac{1}{n} \sum_{i = 1}^n \lipconst_a(X_i)^2 \le 2 \E[\lipconst_a(X)^2]$
%% for $a \in \{0, 1, 2\}$, and so conditioning on $\mc{F}^n$,
%% \begin{align*}
%%   \lefteqn{\indic{\goodevent_n}
%%     \E\bigg[\sum_{i = 1}^k \ltwo{\graderr_i^n} \mid \mc{F}^n\bigg]} \\
%%   & \le \indic{\goodevent_n}
%%   \sum_{i = 1}^k \stepsize_i
%%   \bigg(\frac{1}{n} \sum_{i = 1}^n \lipobj(\statrv_i)^2
%%   + \frac{1}{n} \sum_{i = 1}^n \liphess(\statrv_i)^2 \bigg)
%%   \le 2 \sum_{i = 1}^k \stepsize_i
%%   \left(\E[\lipobj(\statrv)^2] + \E[\liphess(\statrv)^2]\right).
%% \end{align*}
%% As $\sum_{i = 1}^k \stepsize_i = O(k^{1 - \steppow})$, integrating
%% over $\mc{F}^n$ gives the result as $\goodevent_n$ happens eventually
%% with probability 1.

\subsection{Proof of Lemma~\ref{lemma:matrix-inverse-errors-go-to-zero}}
\label{sec:proof-matrix-inverse-errors-go-to-zero}

Recall that on $\goodevent_n$, if $k = k(n) \to \infty$
then $\frac{1}{k} \sum_{i = 1}^k \opnorms{A_i^k(n)} \to 0$. As conditional
on $\mc{F}^n$ we have $\E[\nabla \loss_i^n(\what{\theta}_n) \mid \mc{F}^n] =
\frac{1}{n} \sum_{i = 1}^n \nabla \loss(\what{\theta}_n; \statrv_i) = 0$ on
$\goodevent_n$, and the $Z_i$ are mean-zero independent of $\mc{F}^n$ with
$\cov(Z_i) = \Zcov$, we have
\begin{align*}
  \lefteqn{\indic{\goodevent_n}
    \E\left[
      \ltwobigg{\frac{1}{\sqrt{k}} \sum_{i = 1}^k A_i^k(n)
        (\nabla \loss_i^n(\what{\theta}_n)
        + \sigma_n Z_i)}^2 \mid \mc{F}^n\right]} \\
  & = \indic{\goodevent_n} \frac{1}{k} \sum_{i = 1}^k
  \frac{1}{n} \sum_{j = 1}^n \ltwo{A_i^k(n)
    \nabla \loss(\what{\theta}_n; \statrv_j)}^2
  + \indic{\goodevent_n}
  \frac{\sigma_n^2}{k} \sum_{i = 1}^k \tr(A_i^k(n) \Zcov A_i^k(n)) \\
  & \le \indic{\goodevent_n}
  \frac{2\E[\lipobj(\statrv)^2]}{k} \sum_{i = 1}^k \opnorm{A_i^k(n)}^2
  \le \indic{\goodevent_n}
  \frac{\sigma_n^2}{k} \tr(\Zcov) \sum_{i = 1}^k \opnorm{A_i^k(n)}^2
  \to 0
\end{align*}
as $k \to \infty$, because $\sup_{i,k,n} \opnorms{A_i^k(n)}
\indics{\goodevent_n} < \infty$. That $\goodevent_n$ occurs
eventually gives the lemma.

\newpage

\setlength{\bibsep}{2pt}
\bibliography{bib}
\bibliographystyle{abbrvnat}

% \appendix

% \input{technical-proofs}

\end{document}